\newcommand{\Prob}{\mathbb{P}}
\newcommand{\R}{\mathbb{R}} 
\newcommand{\N}{\mathcal{N}}
\newcommand{\B}{\mathcal{B}}
\newcommand{\vol}{\text{vol}}
\newcommand{\Exp}{\mathop\mathbb{E}}
\newcommand{\rank}{\operatorname{rank}}
\newcommand{\ind}{\mathds{1}}
\newcommand{\eps}{\epsilon}
\newcommand{\LC}{\operatorname{\bf LC}}
\newcommand{\LR}{\operatorname{\bf LR}}
\newcommand{\TV}{\operatorname{\bf TV}}
\newtheorem{theorem}{Theorem}
\newtheorem{lemma}[theorem]{Lemma}
\newtheorem{definition}[theorem]{Definition}
\theoremstyle{definition}
\theoremstyle{remark}
\newcommand{\rev}[1]{\textcolor{black}{#1}}
\icmltitlerunning{On the Local Complexity of Linear Regions in Deep ReLU Networks}
\begin{document}

\twocolumn[
\icmltitle{On the Local Complexity of Linear Regions in Deep ReLU Networks}



\icmlsetsymbol{equal}{*}

\begin{icmlauthorlist}
\icmlauthor{Niket Patel}{uclamath}
\icmlauthor{Guido Mont\'ufar}{uclamath,uclastats,mpi}
\end{icmlauthorlist}

\icmlaffiliation{uclamath}{Department of Mathematics, UCLA, USA} 
\icmlaffiliation{uclastats}{Department of Statistics \& Data Science, UCLA, USA} 
\icmlaffiliation{mpi}{MPI MiS, Germany}

\icmlcorrespondingauthor{Niket Patel}{niketpatel@g.ucla.edu}
\icmlcorrespondingauthor{Guido Mont\'ufar}{montufar@math.ucla.edu}

\icmlkeywords{feature learning, linear regions, grokking}

\vskip 0.3in
]



\printAffiliationsAndNotice{}  

\begin{abstract} 
We define the \emph{local complexity} of a neural network with continuous piecewise linear activations as a measure of the density of linear regions over an input data distribution. We show theoretically that ReLU networks that learn low-dimensional feature representations have a lower local complexity. This allows us to connect recent empirical observations on feature learning at the level of the weight matrices with concrete properties of the learned functions. In particular, we show that the local complexity serves as an upper bound on the total variation of the function over the input data distribution and thus that feature learning can be related to adversarial robustness. Lastly, we consider how optimization drives ReLU networks towards solutions with lower local complexity. Overall, this work contributes a theoretical framework towards relating geometric properties of ReLU networks to different aspects of learning such as feature learning and representation cost. 
\end{abstract}



\section{Introduction}
Despite the numerous achievements of deep learning, many of the mechanisms by which deep neural networks learn and generalize remain unclear. An ``Occam's Razor'' style heuristic is that we want our neural network to parameterize a simple solution after training, but it can be challenging to establish a useful metric of the complexity of a deep neural network \citep{hu2021model}. 
We may gain more insights in the case where we use piece-wise linear activation functions, such as ReLU, LeakyReLU, or Maxout. 
If $\phi$ is a continuous piecewise linear (CPWL) activation function and $A_i (x) = W_ix - \beta_i$ is a parameterized affine linear function, $i=1,\ldots, L$, we consider a network of the following form: 
\begin{equation}
\label{eq:network_function}
  \N_\theta(x) = A_L \circ \phi \circ A_{L-1} \cdots \phi \circ A_1(x),  \quad x\in \mathbb{R}^{n_0}.  
\end{equation} 
The network function $\N_\theta\colon \mathbb{R}^{n_0}\to\mathbb{R}^{n_L}$ parameterized by $\theta=(W_i,\beta_i)_{i}$ is then also be a CPWL function. For any fixed choice of the parameter $\theta$, the input domain $\mathbb{R}^{n_0}$ is partitioned into \emph{linear regions} where the function is linear. 
These partitions of the input space have been used extensively to study diverse topics such as the expressive power, decision boundaries in classification, gradients and parameter initialization, and generalization 
\citep[e.g.,][]{montufar2014number,
raghu2017expressive, 
pmlr-v80-zhang18i,
pmlr-v80-balestriero18b, 
Grigsby,
brandenburg2024the,
pmlr-v49-telgarsky16}. 
In this work we aim to advance a theoretical framework towards better understanding the local distribution of linear regions near the data distribution and how it relates to other relevant aspects of learning such as robustness and representation learning. 

\subsection{Motivation}

In the kernel regime, neural networks with piecewise linear activations are observed to follow lazy training \citep{chizat2019lazy} and bias towards smooth interpolants which do not significantly change the structure of linear regions during training \citep[see, e.g.,][]{williams2019gradient,
jin2023implicit}.  
On the other hand, for networks in the active regime, which are not well approximated by linearized models, one observes significant movement of the linear regions and in some cases a bias towards interpolants that have only a small number of linear regions \citep[e.g.,][]{maennel2018gradient,
williams2019gradient,
Shevchenko2022mean}. 
Characterizing the dynamics of linear regions at a theoretical level remains a significant outstanding challenge, even for shallow networks. 
Recent empirical studies have demonstrated interesting dynamics of the linear regions near the training data points. 
In particular, \citet{humayun2024deepnetworksgrok} have shown that in the terminal phase of training, the number of linear regions near the data drops significantly, and this drop corresponds to an increase in the model’s adversarial robustness. 
We replicate similar experiments in Figure~\ref{fig:TV}. 
In this work, we aim to develop theory to explain some of the empirical results of \citet{humayun2024deepnetworksgrok}. 
Grokking describes a sudden improvement in generalization error or robustness after prolonged training, typically occurring once the training loss is near zero. It is linked to representation learning, suggesting that late generalization arises when a network develops the appropriate representations \citep{liu2022towards}. This leads to our first question:

\textit{\textbf{Question 1:} How does representation learning relate to the distribution of linear regions?} 

Some studies indicate that networks form low-dimensional representations during grokking \citep{fan2024deep, yunis2024grokking, yunis2024approaching}. So, to better understand representation learning, we study the dimension of the feature manifold as measured by the average rank of the Jacobian of the intermediate layer representations with respect to the input. 
In particular, based on the structure of various theoretical bounds \citep{montufar2017notes,pmlr-v80-serra18b,hinz2021using}, we expect that networks that learn low-dimensional feature manifolds will generally also have fewer linear regions. 
Empirical results also show that networks which undergo a drop in the number of linear regions tend to be simpler, having a nearly piecewise constant structure, hinting at a connection between the distribution of the linear regions and the global structure of the learned function \citep{humayun2024deepnetworksgrok}. 
Related is the concept of  ``neural collapse'', which refers to a phenomenon where, in the terminal phase of training, the within-class variance of the last layer features tends towards zero \citep{papyan2020prevalence}. 
Furthermore, prior literature has suggested a connection between the size of linear regions and robustness \citep{pmlr-v89-croce19a}. 
Thus, a natural question we concern ourselves with is: 

\textit{\textbf{Question 2:} Can we connect the local density of linear regions to the robustness of a network?} 

We attempt to answer this question by
defining a measure of the local density of linear regions, and comparing this to the total variation of the network over the input space. Aspects in this direction have appeared in context of parameter initialization and the gradients of a network with respect to its inputs \citep[e.g.,][]{
NEURIPS2018_d81f9c1b,pmlr-v202-tseran23a}. 
Lastly, we are interested in the relation between parameters and functions, and 
how optimization may cause networks to converge to solutions with lower complexity in terms of linear regions. 
To this end we compare our measure of local complexity to the distribution of parameters, building on ideas that have been used to study the expected number of linear regions \citep{hanin2019deeprelunetworkssurprisingly}, and the representation cost of a network, a quantity which has been previously linked to sparsity of weight matrices \citep{jacot2023implicitbiaslargedepth}.

\subsection{Contributions} 

This work takes steps towards establishing quantitative links in ReLU networks between the distribution of linear regions in the input space, representation learning, and optimization: 
\begin{itemize}
    \item We introduce a novel framework for understanding model complexity based on the linear regions over the input space. In Section~\ref{sec:LC} we define the \emph{local complexity} (LC) as the average density of non-linearites over a dataset. 
    To capture the typical behavior of the functions, we define this measure in a way that is robust to perturbations of the bias parameters. 

    \item In Section~\ref{sec:LR} we establish theoretical connections between the proposed local complexity and the \emph{local rank}, which we define as the average dimension of the feature manifold at intermediate layers. This offers a link between the network complexity and representation learning. 
    
    \item In Section~\ref{sec:TV} we demonstrate a bound between the local complexity and the total variation of a network over the input space. This offers a possible path towards  understanding how the linear regions can relate to adversarial robustness and phenomena like neural collapse. 
    
    \item We explore links between local complexity and parameter optimization. In Section~\ref{sec:drop-and-cost} we show that the local complexity is bounded by the representation cost and by the ranks of the weight matrices. As a consequence, we can relate the density of linear regions to results on the implicit regularization of the ranks of weight matrices. 
\end{itemize}

\section{Related Works}
Several works have studied bounds on the number of linear regions of the functions represented by deep ReLU networks \citep[e.g.,][]{pascanu2013number,montufar2014number,pmlr-v80-serra18b}. 
For deep neural networks the maximum number of linear regions will typically be polynomial in the width and exponential in the input dimension and number of layers. 
%
However, the parameters that achieve this upper bound typically occupy only a small region of the parameter space. 
In fact, if one considers the expected number of linear regions over a probability distribution of parameters that satisfies certain reasonable conditions, one finds that this is bounded above by the number of neurons raised to the input dimension \citep{hanin2019deeprelunetworkssurprisingly,
hanin2019complexitylinearregionsdeep,
tseran2021expectedcomplexitymaxoutnetworks}. 
In other words, for a random choice of the parameters, one is more likely to see a number of linear regions that is much smaller than the hard upper bound. 
Some works have analyzed the distortion length of curves in the input space by ReLU networks \citep{raghu2017expressive}. \citet{hanin2022preservelength} looks at how the expected length of these curves changes from input to output of a network. In a similar vein, \citet{GOUJON2024115667} estimate the typical number of non-linearity points encountered by a 1D curve in the input space. 
Other works have studied the effect that the architecture may have on the geometry and the topology of decision boundaries in classification \citep{pmlr-v80-zhang18i,Grigsby,9866576, brandenburg2024the}. 

A few works have tried to understand the local behavior and the dynamics of linear regions during training. 
In particular, \citet{humayun2024deepnetworksgrok} compare the phenomenon of grokking to a simplification of the linear regions near the training data points. 
They demonstrate empirically that during the terminal phase of training, a relatively sudden drop in the number of linear regions corresponds to an improvement in the model's adversarial robustness. 
\citet{cohan2022understanding} study the evolution of linear regions in the state space of networks trained for deep reinforcement learning, finding a decrease in the density during training, as measured by trajectories in the state space. 
In a related work, \citet{zhu2020bounding} derive an algorithm for computing an upper bound on the number of linear regions near a data point and look into the training dynamics of the linear regions. \citet{Sattelberg2023locally} examine the linear regions local to a dataset of trained networks and note that they tend to be relatively simple. 
\citet{pmlr-v89-croce19a} relate the size of linear regions to adversarial robustness. 
\rev{Another result that links complexity of the linear regions to robustness is that of \citet{humayun2023provable}, which leverages the linear region structure of ReLU networks to design an algorithm which improves adversarial robustness.}
\rev{Similar to some of our results, \citet{li2022robust} relate adversarial robustness to model complexity as defined by the VC dimension.} 
In a similar flavor to our definition of the local complexity, \citet{gamba2022are} build a complexity measure related to linear regions and propose that the exact number of linear regions may not be the best metric for model complexity, preferring instead to focus on a more robust measure. 
Other studies have examined knot points (non-linearities) from an optimization perspective. Notably, \citet{Shevchenko2022mean} bounded the number of knots between training inputs for univariate shallow ReLU networks in the mean-field regime \citep{Mei_2018, Rotskoff_2022}.

We may also highlight a few of the works that look at the dimensionality of representations, such as those of \citet{humayun2024understandinglocalgeometrygenerative, 
jacot2023implicitbiaslargedepth, jacot2024bottleneckstructurelearnedfeatures, jacot2024dnnsbreakcursedimensionality, scarvelis2024nuclear}. 
Our definition of local rank bears similarity to that of \citet{humayun2024understandinglocalgeometrygenerative} and \citet{patel2024learningcompresslocalrank} as well as the ``Jacobian rank'' introduced by \citet{jacot2023implicitbiaslargedepth}. 
The low-rank bias of neural networks is a related idea that has been studied by \citet{sukenik2024neural} and \citet{pmlr-v201-timor23a}.
Several other works in this area have sought to characterize the dimension of data manifolds through the use of diffusion models \citep{stanczuk2023diffusionmodelsecretlyknows}. 
A connection between the rank of learned embeddings and the representation cost was demonstrated in the work of \citet{jacot2023implicitbiaslargedepth}. 
The papers of \citet{dherin2022neuralnetworkssimplesolutions,
munn2024impactgeometriccomplexityneural} highlight connections between neural collapse and a quantity they called the ``geometric complexity'', which is generally reminiscent of the Dirichlet energy. 
Our definition of the total variation of a network over the data distribution bears resemblance to their definition of the geometric complexity.

\section{The Local Complexity of ReLU Networks}
\label{sec:LC}

We first aim to define a notion that captures the density of linear regions locally near a given dataset. 
We will consider ReLU networks defined as in (\ref{eq:network_function}), with input dimension $n_0$, hidden layers of widths $n_1,\ldots, n_{L-1}$, and output dimension $n_L=1$. 
Given
a fixed parameter $\theta$ and an input $x$, 
the $\ell$th layer feature vector pre-activation is given by $v_\ell(x) = A_\ell\circ \phi\circ A_{\ell-1} \cdots \phi\circ A_1(x)\in\mathbb{R}^{n_\ell}$. 
The array of sign vectors $[\operatorname{sgn}(v_\ell(x))]_{\ell=1}^{L-1}$ is called the \emph{activation pattern} for the input $x$, and the set of all inputs that share the same activation pattern is the corresponding \emph{activation region} in input space, for the given parameter $\theta$. 
For each fixed parameter value, the function $\mathcal{N}_\theta$ has a constant slope over each activation region. 
We make the mild assumption that no two activation regions whose activation pattern differ by one neuron will share the same slope. 
This is a generic property that holds true for all parameters except for a zero Lebesgue measure subset \cite{hanin2019deeprelunetworkssurprisingly,Grigsby}, and implies that the activation regions coincide with the linear regions. 

For fixed parameter $\theta$, the \emph{nonlinear locus} of the network $\N_\theta$ over the input space is given by 
\begin{equation}
	\mathcal B_{\N_\theta} = \{x \in \mathbb{R}^{n_0} : \nabla_x \N_\theta(\cdot) \text{ is discontinuous at } x\}. 
\end{equation}
This partitions the input space into the linear regions. 

\subsection{
Computing Local Complexity} 

\begin{figure*}[ht]
    \centering
    \includegraphics[width=0.8\linewidth]{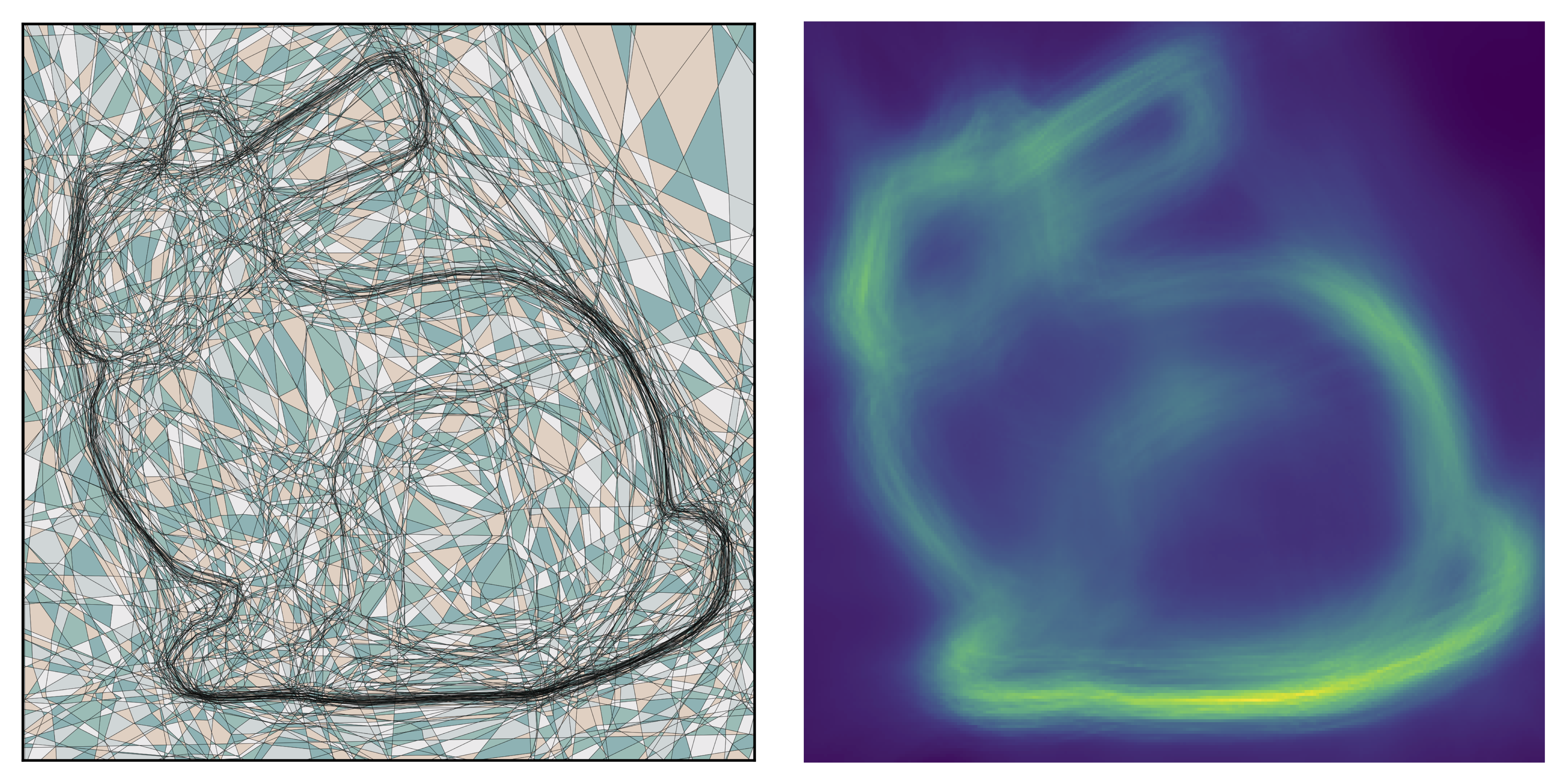}
    \caption{Left: Linear regions of a trained neural network over a two-dimensional input domain. 
    The nonlinear locus is shown in black and linear regions are colored at random. 
    Right: Heat map of the numerically estimated local complexity density function $f(x)$ over the same domain. 
    Precise details are provided in Appendix~\ref{lc_sc_exp}. 
    The figure shows that our definition of local complexity, as well as the equations derived in Theorem~\ref{thm:mainlc}, are consistent with our interpretation of this quantity as a local density of non-linearity over the input space.}
    \label{fig:lc_sc}
\end{figure*}

We seek to define a measure for the local density of linear regions that are robust to small perturbations of the weights.
We take the view that the average number of linear regions over a local region of parameter space can be more meaningful than the number of regions at a fixed parameter value. Thus, we wish to understand $\mathcal B_{\N_\theta}$ as a random object given a choice of weight matrices. 

Tracking the number of linear regions for  particular parameters requires that one solves a system of parametric equations of the form $z_{\ell,i}(x) = \beta_{\ell, i}$, which can be difficult. 
On the other hand, 
examples from algebraic geometry suggest that tracking expected values of the number of solutions to parametric systems can be easier \citep{MALAJOVICH2023101720}. 
This approach and the resulting proof techniques bear resemblance to the 
application of the co-area formula in the work of \citet{hanin2019deeprelunetworkssurprisingly} or the Kac-Rice formula, which is known for characterizing the size of level sets in random fields \citep{Kac_Rice}. However, a key distinction lies in our focus.
\rev{In contrast to the definitions of \citet{hanin2019deeprelunetworkssurprisingly}, we will consider the distribution of linear regions over the input space and the behavior depending on specific parameters, which are aspects that are not covered in their work.} 

Here and in the following we will write the pre-activation of the $i$th unit at the $\ell$th layer as $v_{\ell,i}(x) = z_{\ell,i}(x) - \beta_{\ell, i}$, where $\beta_{\ell, i}$ is the bias. 
\rev{
The simplest model that allows us to consider expected values is to introduce additive noise $\delta_{\ell,i}$ and track the $0$ level set of $v_{\ell,i}(x) + \delta_{\ell, i}$. 
It is possible to introduce additive noise to both biases and weights, but we will focus on the biases since these only translate the activation boundaries, whereas the weights affect both the position and the orientation of the activation boundaries. 
In Appendix~\ref{sec:more-empirical} we provide numerical illustrations showcasing the effects of adding noise either only to the biases or adding noise to both the biases and the weights and how both models produce qualitatively similar results.} 

Let $\theta = (W_1, \beta_1, W_2, \beta_2 , \ldots , W_L, \beta_L)$ be a particular choice of parameters. Consider then the parameters with noisy biases $\tilde{\theta} = (W_1, \beta_1 + \delta_1, W_2, \beta_2 + \delta_2, \ldots , W_L, \beta_L + \delta_L)$, 
where the noise terms are mutually independent and identically distributed zero-mean Gaussian, $\delta_l \sim N(0, \sigma  I_{n_l})$, with some fixed standard deviation $\sigma > 0$. We denote the bias with the noise term as $b_l = \beta_l + \delta_l$. 
For this random variable, we consider the expected volume of the non-linear locus $\mathcal{B}_{\mathcal{N}_{\tilde{\theta}}}$ 
around any input point $x$ and define a corresponding density as the limit: 
\begin{equation}
	f(x) = \lim_{\epsilon \to 0} \frac1{Z_\epsilon} \Exp_{\tilde \theta}\left[\vol_{n_0 - 1} ( \mathcal{B}_{\N_{\tilde{\theta}}} \cap B_\epsilon(x))\right], \quad x\in\mathbb{R}^{n_0}. 
\end{equation}
Here the expectation is taken with respect to the random parameter $\tilde\theta$ or more specifically the noise terms $\delta_1,\ldots,\delta_l$. 
The limit is taken with respect to the radius $\epsilon$ of a ball $B_\epsilon(x)$ around the input point $x$, and the normalization factor is given by the volume of the ball: 
\begin{equation}
	Z_\epsilon = \vol_{n_0} (B_\epsilon(0)) \propto \epsilon^{n_0} . 
\end{equation}
We illustrate this definition in Figure~\ref{fig:lc_sc}, where we numerically estimate the density function $f$ over the input space for a network with two-dimensional input. 
We demonstrate the impact of $\sigma$ on the local complexity qualitatively in Appendix~\ref{sec:sigma_ablation}.
We now define the local complexity of our neural network as the expectation of $f$ over the input data distribution. 
We denote by $p$ the probability distribution of the data over the input space $\mathbb{R}^{n_0}$, which we will assume to have \rev{a density and a} compact support $\Omega$.

\begin{definition}[Local Complexity] 
\label{def:local-complexity}
We define the \emph{local complexity} of a network $\mathcal{N}$ at parameter $\theta$ with respect to the input data distribution $p$ as  
\begin{equation}
	\LC (\mathcal{N}_\theta,p) = \Exp_{x\sim p}[f(x)]. 
\end{equation}
\end{definition}
For simplicity of notation we will omit the arguments $\mathcal{N}_\theta$ and $p$ when there is no risk of confusion. We define local complexity by taking the expectation of $f$ over the data distribution to estimate the density of linear regions near the dataset, where model complexity is most relevant. 
To provide further intuition for this definition and later results, we conduct a direct computation of the local complexity for a few illustrative examples in Appendix~\ref{lc_single_neuron}.

\subsection{Towards a Theoretical Understanding of the Local Complexity} 

We can now introduce our first results in understanding this measure of the complexity of a neural network with respect to the data distribution $p$. 
As before, we denote the pre-bias value of the $i$th neuron of the network at input $x$ by $z_i(x)$, for $i= 1,\ldots, \sum_{\ell=1}^{L-1}n_\ell$. 
For a neuron $z_i$ in layer $l$, we say that \emph{$z_i$ is good at $x$} if the computation graph of the network evaluated at input $x$ contains a path of active neurons $z_{j_{l+1}}, \ldots, z_{j_{L-1}}$ from layers $l+1$ to $L$, where for each neuron in this path, $z_{j_i}(x) > b_{j_i}$. 
In particular, this means that the neuron $z_i$ affects the network's output when evaluated at $x$. More details on this can be found in Appendix~\ref{open_paths}. 
We denote by $\rho_{b_i}$ the Gaussian density function for the bias of neuron $z_i$ perturbed by 
$\delta_i$. 
\rev{We will denote by $\nabla z_i(x)$ the gradient of function $z_i$ with respect to $x$ where this is well defined. 
The non-differentiable points form a null set and are inconsequential in the following results.} 
The following theorem provides an explicit formula for computing the local complexity, which we prove in Appendix~\ref{proof_of_main_lc}. 

\begin{restatable}{theorem}{mainlc}
\label{thm:mainlc}
 Let $\rho_{b_i}(x) = N(\beta_i, \sigma)$ be the density for the bias of neuron $z_i$. 
 Then the following holds: 
	\begin{equation}
            \label{eq:lc_estimate}
		\LC \approx \sum_{\text{neuron } z_i} \Exp_{x, \ \tilde{\theta}  
  } \left[ \|\nabla z_i(x)\|_2 \ \rho_{b_i}(z_i(x)) \ \mathds{1}_{z_i \text{ is good at } x} \right], 
	\end{equation}
	where for each neuron the expectation is taken over $\tilde \theta$ and $x\sim p$.   
\end{restatable}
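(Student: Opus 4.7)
The plan is to express the nonlinear locus $\mathcal B_{\N_{\tilde\theta}}$ as a neuron-by-neuron disjoint union of level sets $\{z_i = b_i\}$ restricted to where the neuron is \emph{good}, apply the co-area formula to trade each $(n_0{-}1)$-dimensional boundary measure for an $n_0$-dimensional integral involving $\|\nabla z_i\|$ and the Gaussian density $\rho_{b_i}$, and then pass to $\epsilon \to 0$ by Lebesgue differentiation.

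First I would establish the geometric decomposition. Under the genericity assumption stated in the paper, activation regions coincide with linear regions, so each point of $\mathcal B_{\N_{\tilde\theta}}$ lies on some activation boundary $\{z_i = b_i\}$; the kink at $x$ affects the output slope if and only if there is an active path from $z_i$ to the output, i.e.\ iff $z_i$ is good at $x$. Furthermore, almost surely over $\tilde\theta$ the pairwise intersections $\{z_i=b_i\}\cap\{z_j=b_j\}$ have codimension $\geq 2$, so
\[
\mathcal B_{\N_{\tilde\theta}} \cap B_\epsilon(x_0) \;\overset{\text{a.e.}}{=}\; \bigsqcup_i \bigl(\{z_i=b_i\}\cap B_\epsilon(x_0)\cap\{z_i\text{ good at }x\}\bigr),
\]
and $\vol_{n_0-1}$ is additive over the union.

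Next, I would fix $i$ and condition on all biases except $b_i$. On $\{z_i = b_i\}$ the contribution $\phi(z_i - b_i)$ vanishes, so the ``good'' event is determined by $x$ and the other biases, not by $b_i$ itself. Since $z_i:\R^{n_0}\to\R$ is piecewise affine and hence Lipschitz, the co-area formula gives, for each Borel set $A$,
\[
\int_A \rho_{b_i}(z_i(x))\,\|\nabla z_i(x)\|\,dx \;=\; \int_{\R} \rho_{b_i}(t)\,\vol_{n_0-1}\bigl(A\cap\{z_i = t\}\bigr)\,dt \;=\; \Exp_{b_i}\!\bigl[\vol_{n_0-1}(A\cap\{z_i = b_i\})\bigr].
\]
Choosing $A = B_\epsilon(x_0)\cap\{z_i\text{ good at }x\}$, summing over $i$, and using Fubini to restore the full expectation over $\tilde\theta$ yields
\[
\Exp_{\tilde\theta}\!\bigl[\vol_{n_0-1}(\mathcal B_{\N_{\tilde\theta}}\cap B_\epsilon(x_0))\bigr] \;=\; \sum_i \Exp_{\tilde\theta}\!\int_{B_\epsilon(x_0)} \rho_{b_i}(z_i(x))\,\|\nabla z_i(x)\|\,\mathds{1}_{z_i\text{ good at }x}\,dx.
\]

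Dividing by $Z_\epsilon \propto \epsilon^{n_0}$ and sending $\epsilon \to 0$, the Lebesgue differentiation theorem gives $f(x_0) = \sum_i \Exp_{\tilde\theta}\bigl[\|\nabla z_i(x_0)\|\,\rho_{b_i}(z_i(x_0))\,\mathds{1}_{z_i\text{ good at }x_0}\bigr]$ for $p$-a.e.\ $x_0$; local boundedness of the integrand ($\|\nabla z_i\|$ is essentially bounded on compact sets by a product of weight-matrix norms, and $\rho_{b_i}$ is Gaussian) lets dominated convergence justify swapping the limit with the $\tilde\theta$-expectation. Integrating against $p$ produces the stated identity. The hard part will be the geometric decomposition in the first step: carefully carving out the measure-zero loci where several neurons hit their boundary simultaneously, and proving that a single-neuron sign flip changes the output gradient precisely when the neuron is good at $x$. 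The rest of the argument (co-area, Fubini, Lebesgue differentiation) is then routine.
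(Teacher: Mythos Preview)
Your proposal is correct and follows essentially the same route as the paper: the geometric decomposition you sketch is the content of the paper's Lemma~\ref{LC_Lemma_1} (the paper likewise uses the path-wise representation to show that a single-neuron sign flip contributes to $\mathcal B_\N$ exactly when the neuron is good, and that distinct neurons' level sets overlap only on an $(n_0-1)$-null set a.s.), the conditional co-area step is Lemma~\ref{coarea_application}, and the $\epsilon\to 0$ limit is handled there by continuity of the averaged integrand rather than Lebesgue differentiation, which amounts to the same thing. Your observation that on $\{z_i=b_i\}$ the good-indicator does not depend on $b_i$ is exactly the implicit point that makes the co-area application legitimate, so you have identified the right subtleties.
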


This theorem gives us a way to compute the local complexity empirically by computing the gradients at each neuron and estimating 
(\ref{eq:lc_estimate}) using samples (we leverage this in our numerical experiments in Figures~\ref{fig:lc_sc}, \ref{fig:local_rank_gaussians}, \ref{fig:TV}). 
We can now introduce bounds on the local complexity, which will be useful for our later analysis because they allow us to focus on the gradient terms.  

\begin{restatable}{corollary}{lcbounds}
\label{cor:lcbounds}	
	In the same setting as Theorem~\ref{thm:mainlc}, let $C_\text{grad}$ be an upper bound on the norm of the gradient of every neuron $z_i$, $\|\nabla z_i(x)\|\leq C_{\text{grad}}$ for all $x\in \Omega$, $\tilde \theta = (W_1, \beta + \delta_1 , \ldots, W_L, \beta + \delta_L)$, let $C_\text{bias} = \frac{1}{\sqrt{2\pi}\sigma}$, and let $B = \Exp_{
 \tilde \theta, x \sim p}  \left[ \sum_{\text{neuron }z_i} \mathds{1}_{z_i \text{ not good at } x} \right]$ denote the expected number neurons that are not good. 
 Then we have that:
\begin{equation}
\LC \leq C_\text{bias} \sum_{\text{neuron } z_i} \Exp_{\tilde \theta ; x\sim p} \left[ \|\nabla z_i (x)\|_2 \ \right]. 
\end{equation}
Furthermore, for any $\eta > 0$ there are constants $c_\text{bias}^\eta = \frac{1}{\sqrt{2\pi}\sigma}e^{\frac{-\eta^2}{2\sigma^2}}$ and $\bar{\xi}_\eta = \Theta\left({e^{\frac{-\eta^2}{2\sigma^2}}} / {\eta^2} \right)$\footnote{\rev{We use the standard Big Theta notation $f(x) = \Theta(g(x))$ to signify that 
there exist $c_1,c_2,x_0$ such that $c_1g(x)\leq f(x)\leq c_2g(x)$ for all $x> x_0$.}} such that:  
\begin{equation}
		\LC \geq c_\text{bias}^\eta \sum_{\text{neuron } z_i} \Exp_{\tilde \theta ; x\sim p} \left[ \|\nabla z_i (x)\|_2 \ \right] - \bar{\xi}_\eta - B \cdot C_\text{grad} \cdot C_\text{bias} . 
\end{equation}
\end{restatable}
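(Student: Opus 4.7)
The strategy is to apply the explicit formula of Theorem~\ref{thm:mainlc} and to bound the Gaussian density $\rho_{b_i}$ from above and below inside the sum. The upper bound is immediate: $\rho_{b_i}$ attains its pointwise maximum $C_\text{bias}=1/(\sqrt{2\pi}\sigma)$ at its mean, and $\mathds{1}_{z_i\text{ good at }x}\leq 1$; substituting both into~(\ref{eq:lc_estimate}) yields $\LC\leq C_\text{bias}\sum_i\Exp[\|\nabla z_i(x)\|_2]$ directly.

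For the lower bound I plan to restrict to the event $E_i=\{|z_i(x)-\beta_i|\leq\eta\}$, on which monotonicity of the Gaussian density gives $\rho_{b_i}(z_i(x))\geq c_\text{bias}^\eta$. A pointwise case check shows $\mathds{1}_{E_i}\mathds{1}_{\text{good}}\geq 1-\mathds{1}_{E_i^c}-\mathds{1}_{\text{not good}}$, so pulling out $c_\text{bias}^\eta$ produces
\[
\LC \;\geq\; c_\text{bias}^\eta\sum_i\Exp[\|\nabla z_i\|_2] \;-\; c_\text{bias}^\eta\sum_i\Exp[\|\nabla z_i\|_2\mathds{1}_{E_i^c}] \;-\; c_\text{bias}^\eta\sum_i\Exp[\|\nabla z_i\|_2\mathds{1}_{\text{not good}}].
\]
The last summand is at most $c_\text{bias}^\eta C_\text{grad}B\leq C_\text{bias}C_\text{grad}B$, using the gradient cap and the definition of $B$, which matches the third term of the stated lower bound exactly.

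For the tail summand I apply Chebyshev's inequality to the random variable $z_i(x)-\beta_i$, whose randomness comes from $x\sim p$ and from the bias noise $\delta_j$ injected into layers preceding that of $z_i$. Compactness of $\Omega$ together with finiteness of the Gaussian noise variance gives $M_i:=\Exp_{\tilde\theta,x}[(z_i(x)-\beta_i)^2]<\infty$, so $\Prob[E_i^c]\leq M_i/\eta^2$. Combining with $\|\nabla z_i\|_2\leq C_\text{grad}$ and the explicit form of $c_\text{bias}^\eta$ bounds the tail error by a constant multiple of $e^{-\eta^2/(2\sigma^2)}/\eta^2$, which is the asymptotic $\Theta(e^{-\eta^2/(2\sigma^2)}/\eta^2)$ absorbed into $\bar\xi_\eta$.

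The main technical obstacle is obtaining $M_i<\infty$ uniformly across the network: since $z_i$ at depth $\ell$ is a composition of ReLU maps applied to noisy biases from layers $1,\ldots,\ell-1$, one needs a layer-by-layer propagation argument using the $1$-Lipschitzness of ReLU, boundedness of $x$ on $\Omega$, and the finite moments of the Gaussian bias noise to conclude that $M_i$ is finite and $\eta$-independent. Once this is in place the three pieces assemble into the stated lower bound.
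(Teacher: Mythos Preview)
Your proposal is correct and follows essentially the same route as the paper: bound $\rho_{b_i}\le C_\text{bias}$ and $\mathds{1}_{\text{good}}\le 1$ for the upper bound, and for the lower bound restrict to the event $\{|z_i(x)-\beta_i|\le\eta\}$ to get $\rho_{b_i}\ge c_\text{bias}^\eta$, control the tail via Markov/Chebyshev on $(z_i(x)-\beta_i)^2$ using compactness of $\Omega$, and handle the ``not good'' contribution with the gradient cap $C_\text{grad}$ and the definition of $B$. The only cosmetic difference is the order of decomposition: the paper first peels off the ``not good'' indicator (using $\rho\le C_\text{bias}$) and then applies the $\eta$-restriction to $\sum_i\Exp[\|\nabla z_i\|\rho_{b_i}(z_i)]$, whereas you use the pointwise inequality $\mathds{1}_{E_i}\mathds{1}_{\text{good}}\ge 1-\mathds{1}_{E_i^c}-\mathds{1}_{\text{not good}}$ to split both error terms at once; this yields the slightly sharper intermediate constant $c_\text{bias}^\eta C_\text{grad}B$ before you relax it to $C_\text{bias}C_\text{grad}B$, but the final bound is identical.
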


We note that the term $ B \cdot C_\text{grad} \cdot C_\text{bias}$, while a necessary inclusion based on our proof technique, may be quite small. 
Indeed at initialization \citet[][Appendix D] {hanin2019complexitylinearregionsdeep} observe that for any neuron $z$ and $x \in \mathbb{R}^{n_0}$, $\Prob(z \text{ is good at } x) \geq 1 - \sum_{l=1}^{L}2^{-n_l}$. Thus, at initialization, $B \leq N L \ 2^{-n}$, where $N=\sum_l n_l$ and $n=\min_l n_l$, which decays exponentially with the width. 
Our empirical results in Appendix \ref{sec:B} have shown that, for fully MLPs of reasonable width, $B$ is typically measured to be constant at $0$.
Similarly, the term $\bar{\xi}_\eta$ can also be small, and notably is asymptotically smaller than $c_\text{bias}^\eta$ for large values of $\eta$. 

\section{Connections to the Rank of Learned Representations}
\label{sec:LR}

\begin{figure*}
    \centering
    \includegraphics[width=.9\linewidth]{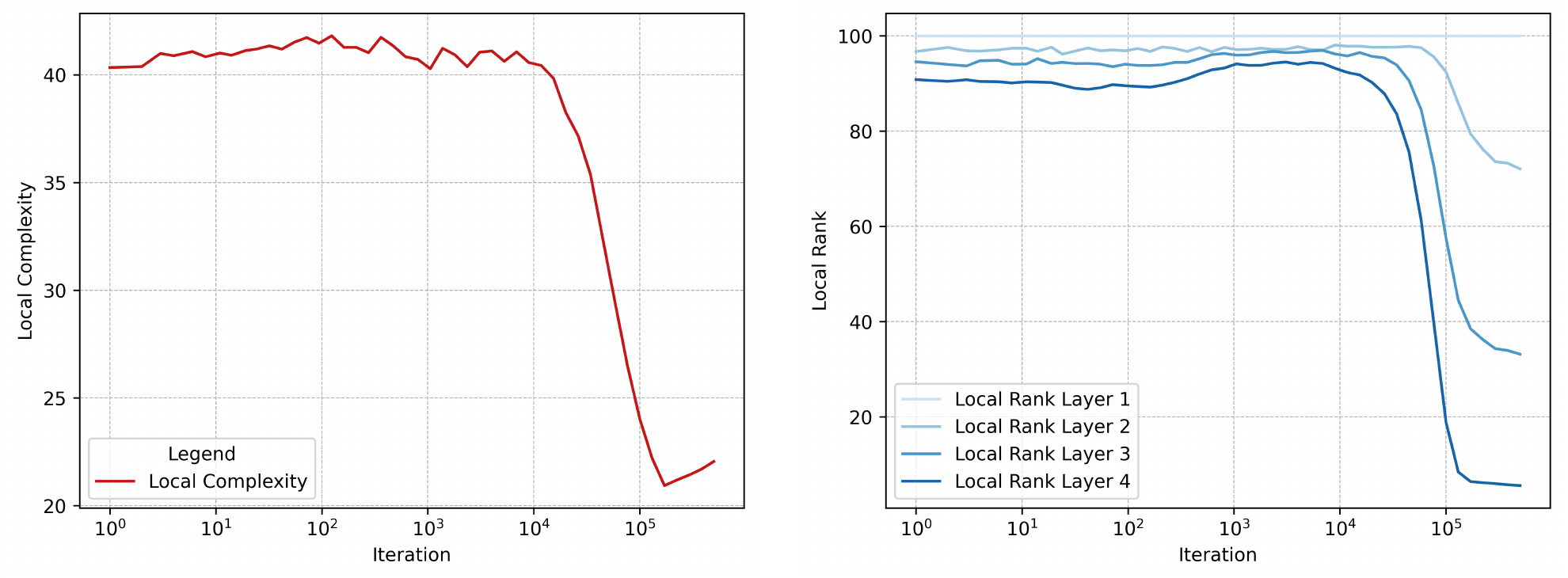}
    \caption{Relation between the local rank (\ref{eq:epsilon_LR}) at intermediate layers and the local complexity (\ref{thm:mainlc}) of linear regions in the terminal phase of training. We train a 4 layer MLP with 200 neurons per layer to estimate a map between two multivariate Gaussians with a random cross-covariance matrix. More information can be found in Appendix \ref{local_rank_exp}.}
    \label{fig:local_rank_gaussians}
\end{figure*}

We define the feature manifold at layer $l$, denoted $\mathcal{M}_l\subseteq\mathbb{R}^{n_l}$, to be the pre-activation values of the $l$th layer when evaluated on the support of the input data distribution, $\Omega$. 
We proceed by introducing a measure of the dimension of the feature manifold, which we call the \emph{local rank}. 
We write $\mathbf{z}_l = [z_{l,1}, \ldots, z_{l, n_l}]$ for the vector of pre-bias pre-activations at the $l$th layer and $J_x \mathbf{z}_l(x) = [\nabla_x z_{l,1}(x),\ldots, \nabla_x z_{l,n_l}(x)]^T$ for the Jacobian with respect to the input. With this notation, we can write the feature manifold as $\mathcal{M}_l = \mathbf{z}_l(\Omega)$.

\begin{definition}[Local Rank]
\label{def:local-rank}
We define the \emph{local rank} (LR) of the $l$th layer's features as the expectation value of the dimension of the feature manifold over the input data distribution: 
\begin{equation}
	\LR_l = \Exp_{x\sim p, \tilde \theta} \left[ \rank( J_x\mathbf{z}_l(x) ) \right]. 
\end{equation}
We will find it  
more convenient, both numerically and analytically, to work with the approximate rank, $\rank_\epsilon(A) = |\{\sigma \geq\epsilon \colon \sigma \text{ singular value of }A\}|$, 
which satisfies $\lim_{\epsilon \to 0} \LR_l^\epsilon = \LR_l$, 
\begin{equation}
\label{eq:epsilon_LR}
	\LR_l^\epsilon = \Exp_{x\sim p, \tilde \theta} \left[ \rank_\epsilon( J_x\mathbf{z}_l(x) ) \right]. 
\end{equation}
\end{definition}
A generic input point $x$ will lie inside a linear region of $\mathbf{z}_l$, and the Jacobian matrix $J_x \mathbf{z}_l$ provides this linearization of $\mathbf{z}_l$. 
The null space of the Jacobian indicates the input dimensions that are discarded near the input $x$, and the rank is equal to the dimension of the set of feature values traced as we perturb the input $x$. 
Thus, this is a meaningful measure of the rank of the learned representations.

Our aim is to provide a connection between the local rank of the representations in Definition~\ref{def:local-rank} and the local complexity of the learned functions in Definition~\ref{def:local-complexity}. 
We prove the following result in Appendix~\ref{local_rank_proof}, linking the two notions: 
\begin{restatable}{theorem}{LRLC}
\label{thm:LRLC}
For any $\epsilon>0$, 
the local ranks across layers can be bounded in terms of the local complexity as follows:
\begin{equation}
   \frac{1}{n_0 \ C_\text{bias}} \LC \leq  \sum_{l = 1}^L \sqrt{C_\text{grad}^2 \LR^\epsilon_l + \epsilon ^2n_l} . 
   \end{equation} 
Moreover, in the same setting as in Corollary~\ref{cor:lcbounds}, 
\begin{equation}   
\sum_{l=1}^L \LR^\epsilon_l \leq  \frac1{c_\text{bias}^\eta \epsilon^2} \left[ \LC + \ \bar{\xi}_\eta + B \cdot C_\text{grad}\cdot C_\text{bias} \right]. 
\end{equation}
\end{restatable}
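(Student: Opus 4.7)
The plan is to derive both inequalities by composing the bounds of Corollary~\ref{cor:lcbounds}---which express $\LC$ in terms of sums of expected gradient norms $\Exp[\|\nabla z_{l,i}(x)\|]$---with standard SVD estimates that link the layerwise gradient sums to the approximate rank of the Jacobian $J_l(x) := J_x \mathbf{z}_l(x)$, whose rows are precisely the vectors $\nabla z_{l,i}(x)^\top$.

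For the first inequality I would start from the upper bound $\LC \leq C_\text{bias}\sum_{l}\sum_{i}\Exp[\|\nabla z_{l,i}(x)\|]$ of Corollary~\ref{cor:lcbounds}. Fix a layer $l$ and input $x$. By Cauchy--Schwarz, $\sum_i \|\nabla z_{l,i}(x)\| \leq \sqrt{n_l}\,\|J_l(x)\|_F$. Writing $J_l(x) = U\Sigma V^\top$ and splitting the singular values $\sigma_k$ into those at least $\epsilon$ and those below $\epsilon$ gives
\[
\|J_l(x)\|_F^2 = \sum_k \sigma_k^2 \leq \rank_\epsilon(J_l(x))\,\sigma_{\max}^2 + n_l \epsilon^2 \leq C_\text{grad}^2\,\rank_\epsilon(J_l(x)) + n_l \epsilon^2,
\]
where in the last step the row-norm hypothesis is used to bound $\sigma_{\max}$. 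Taking expectation and applying Jensen's inequality $\Exp[\sqrt{X}] \leq \sqrt{\Exp[X]}$ yields $\sum_i \Exp[\|\nabla z_{l,i}(x)\|] \leq \sqrt{n_l}\sqrt{C_\text{grad}^2 \LR_l^\epsilon + n_l \epsilon^2}$; summing over $l$ and absorbing the $\sqrt{n_l}$ prefactor into the $n_0$ on the LHS produces the first claim.

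For the second inequality I would start from the lower bound of Corollary~\ref{cor:lcbounds}, rearranged as
\[
\sum_l \sum_i \Exp[\|\nabla z_{l,i}(x)\|] \leq \frac{1}{c_\text{bias}^\eta}\big(\LC + \bar\xi_\eta + B \cdot C_\text{grad} \cdot C_\text{bias}\big).
\]
The key pointwise estimate is
\[
\epsilon^2\, \rank_\epsilon(J_l(x)) \leq \sum_{\sigma_k \geq \epsilon} \sigma_k^2 \leq \|J_l(x)\|_F^2 = \sum_i \|\nabla z_{l,i}(x)\|^2 \leq C_\text{grad} \sum_i \|\nabla z_{l,i}(x)\|,
\]
using once again the row-norm bound $\|\nabla z_{l,i}(x)\| \leq C_\text{grad}$ in the final inequality. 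Taking expectation, summing over $l$, dividing by $\epsilon^2$, and substituting the rearranged Corollary bound gives the second claim.

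The main obstacle I anticipate is the control of constants: the row-norm hypothesis of Corollary~\ref{cor:lcbounds} does not literally bound the operator norm $\sigma_{\max}(J_l(x))$ by $C_\text{grad}$ (in general only by $\sqrt{n_l}\,C_\text{grad}$), so the passage from Frobenius norm to $\epsilon$-rank requires either a mild additional assumption or careful accounting that is ultimately absorbed into the $n_0$ prefactor on the LHS of the first inequality. The Jensen step is then routine once the pointwise bound is in place.
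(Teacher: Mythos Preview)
Your overall strategy matches the paper's: both inequalities come from sandwiching $\sum_{l,i}\Exp\|\nabla z_{l,i}\|$ between the two bounds of Corollary~\ref{cor:lcbounds} and then relating this sum, layer by layer, to $\rank_\epsilon(J_l)$ through the singular values; the SVD splitting $\|J_l\|_F^2\leq\sigma_{\max}^2\rank_\epsilon+n_l\epsilon^2$ and the Jensen step are identical to the paper's. Two constant-level discrepancies remain.

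For the first inequality, your Cauchy--Schwarz step $\sum_i\|\nabla z_{l,i}\|\leq\sqrt{n_l}\,\|J_l\|_F$ gives a $\sqrt{n_l}$ prefactor, and ``absorbing $\sqrt{n_l}$ into $n_0$'' is not justified without an assumption relating the widths. The paper does not use Cauchy--Schwarz here; it instead argues (its Claim~2) that $\|J_l\|_F\geq\frac{1}{n_0}\sum_i\|\nabla z_{l,i}\|_2$ via a chain of induced-matrix-norm comparisons $\|J_l\|_F\geq\|J_l\|_2\geq\frac{1}{\sqrt{n_0}}\|J_l\|_\infty$ together with an $\ell_1/\ell_2$ comparison on the rows, which is what places the input dimension $n_0$ rather than a hidden width on the left-hand side. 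On your concern about $\sigma_{\max}\leq C_\text{grad}$: the paper invokes the specific choice $C_\text{grad}=\max_\ell\|W_\ell\cdots W_1\|_\text{op}$ made in the proof of Corollary~\ref{cor:lcbounds}, treating $C_\text{grad}$ as an operator-norm bound on the Jacobian itself rather than only a row-norm bound.

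For the second inequality, your chain $\epsilon^2\rank_\epsilon(J_l)\leq\|J_l\|_F^2\leq C_\text{grad}\sum_i\|\nabla z_{l,i}\|$ is correct but leaves an extra factor of $C_\text{grad}$ in the final bound that is absent from the statement. The paper combines $\epsilon^2\rank_\epsilon(J_l)\leq\|J_l\|_F^2$ with $\|J_l\|_F\leq\sum_i\|\nabla z_{l,i}\|$ and passes directly to $\rank_\epsilon(J_l)\leq\frac{1}{\epsilon^2}\sum_i\|\nabla z_{l,i}\|$ without the $C_\text{grad}$ factor; as written, this step in the paper appears to conflate $\|J_l\|_F$ with $\|J_l\|_F^2$, so your version with the explicit $C_\text{grad}$ is arguably the more careful one, even though it does not reproduce the stated constant.
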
 

We can also make a weaker claim about the exact local rank, which we prove in Corollary~\ref{cor:LRcor} in the appendix: 
$\LC \leq n_0 \ C_\text{bias}  C_\text{grad} \sum_{l = 1}^L \sqrt{\LR_l}$.

We showcase the relation between the local rank and local complexity in a simple example. Figure~\ref{fig:local_rank_gaussians} shows the evolution of $\LC$ and $\LR$ during training, for Gaussian input and output data. 
In this example both quantities appear to be tightly related and we observe a stark and sudden drop in the local rank late in training. 
The information theoretic properties of the rank of representations for this particular example has been studied in the prior work of \citet{patel2024learningcompresslocalrank}. 
While this behavior is not unique to this example, on other datasets the dynamics of the local rank can become much more complex, as we showcase in Figure~\ref{fig:mnist_LR} in the appendix.

\section{Networks with Lower Local Complexity May Be More Robust}
\label{sec:TV}

\begin{figure*}
    \centering
    \includegraphics[width=.9\linewidth]{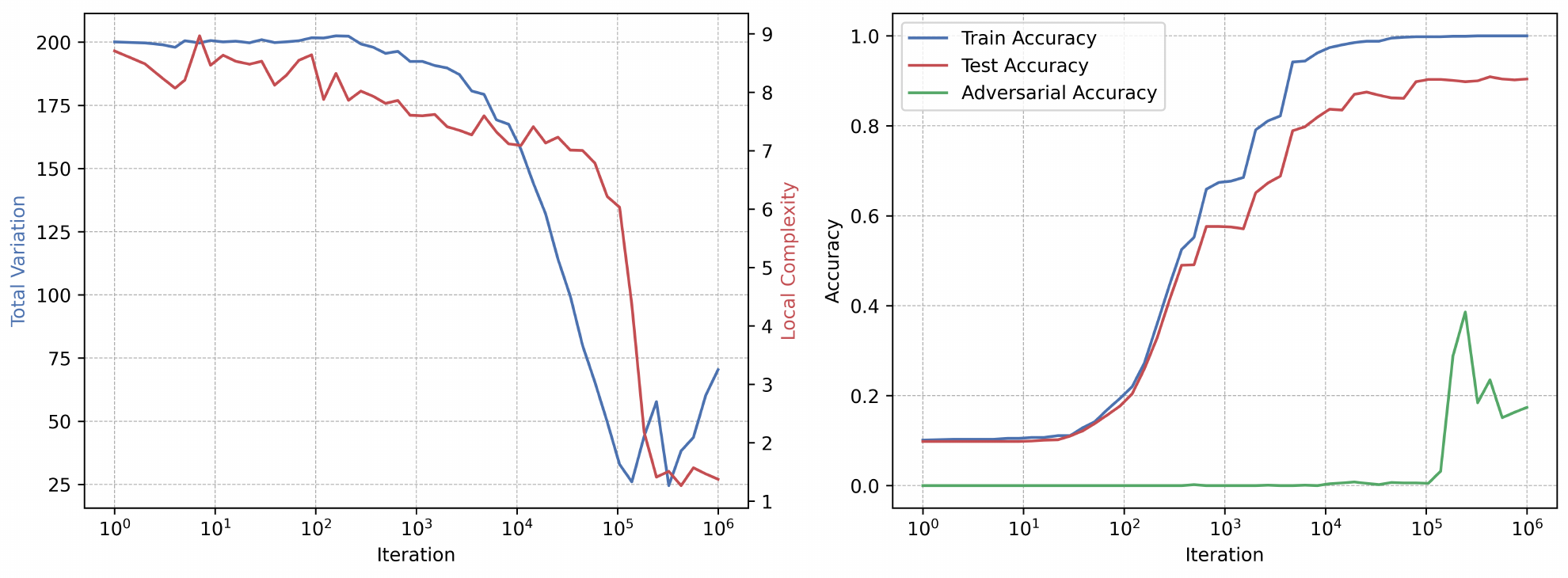}
    \caption{Drop in the expected total variation (\ref{thm:TVLC}) and local complexity (\ref{thm:mainlc}) of a network in the terminal phase of training. We find that this corresponds to an increase in the adversarial robustness of our network. Here we train a 4 layer MLP with 200 neurons in each layer on a subset of 1000 images across all classes in the MNIST dataset. We use an initialization scale that is 2x the standard He initialization. More information and an ablation on the initialization scale is available in Appendix \ref{tv_exp}. }
    \label{fig:TV}
    \label{fig:TVLC}
    \label{fig:totalvar}
\end{figure*}

Neural networks have been shown to sometimes converge to solutions that exhibit neural collapse \citep{papyan2020prevalence}. 
In this case, the networks have a low within-class variance of representations in the last hidden layer, implying that the learned function is flat around the data points.  
We will attempt to understand this specific geometric property by considering the total variation of a trained neural network over the data distribution, which we define as $\TV = \Exp_{\tilde \theta, x\sim p} [\|\nabla_x \N(x)\| ]$. 
A low expected total variation indicates that the gradient of the network function is typically small over the data distribution.
Consequently, these networks develop stable regions around training data points where the function is nearly constant, aligning with the characteristics of neural collapse.

We remark that low total variation has implications for adversarial robustness. 
Standard methods for generating adversarial examples, such as Projected Gradient Descent (PGD), rely on first-order optimization techniques for constructing adversarial examples \citep{madry2018towards}. 
Low total variation with respect to the data distribution makes it harder for such methods to find adversarial examples, since small gradients limit the effectiveness of first-order optimization. 
\rev{In some settings, 
the total variation can be related directly to the existence of adversarial examples. 
Suppose we have a univariate network $\N_\theta \colon \mathbb{R} \to \mathbb{R}$ for classification, with a decision boundary at $\N_\theta(x) = 0$. For a given $\bar{x}$ with $\N_\theta(\bar{x})>0$, a point $x \in B_\epsilon (\bar{x})$ is an adversarial example if $\N_\theta(x) < 0$. 
Then we have the following proposition, which we prove in Appendix~\ref{sec:tvadvproof}.}

\begin{restatable}{proposition}{TVADV}\label{prop:tvadv}
\rev{
    Suppose our data distribution admits a density function $p$ 
    with support $\Omega$. 
    Consider a point $\bar x$ in the interior of $\Omega$, with classification margin $\N_\theta(\bar x) > \gamma$. For any $\epsilon > 0$ with $B_\epsilon(\bar x) \in \Omega$, let $c_\epsilon = \inf_{x\in B_\epsilon(\bar x)} p(x)$. Then $\TV \leq c_{\epsilon} \gamma$ implies there are no adversarial examples in $B_\epsilon(\bar x)$.}
\end{restatable}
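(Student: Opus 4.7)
The plan is to argue by contrapositive: assume there is an adversarial example $x^* \in B_\epsilon(\bar x)$ with $\N_\theta(x^*) < 0$, and show this forces $\TV > c_\epsilon \gamma$. In one variable, $\N_\theta$ is CPWL and thus absolutely continuous, so the classical fundamental theorem of calculus applies (the finitely many knots form a null set and are harmless).

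First I would combine $\N_\theta(\bar x) > \gamma$ with $\N_\theta(x^*) < 0$ to obtain $\N_\theta(\bar x) - \N_\theta(x^*) > \gamma$, and then apply FTC on the interval joining $x^*$ and $\bar x$, which lies inside $B_\epsilon(\bar x)$ since $|x^* - \bar x|<\epsilon$:
\[\gamma < |\N_\theta(\bar x) - \N_\theta(x^*)| \leq \int_{\min(\bar x, x^*)}^{\max(\bar x, x^*)} |\N_\theta'(t)|\, dt \leq \int_{B_\epsilon(\bar x)} |\N_\theta'(t)|\, dt.\]
Next I would lower-bound $\TV$ by restricting the expectation to $B_\epsilon(\bar x)$, using that $B_\epsilon(\bar x) \subseteq \Omega$ and that $p(x) \geq c_\epsilon$ on this ball by definition of $c_\epsilon$:
\[\TV \geq \int_{B_\epsilon(\bar x)} |\N_\theta'(x)|\, p(x)\, dx \geq c_\epsilon \int_{B_\epsilon(\bar x)} |\N_\theta'(x)|\, dx > c_\epsilon \gamma,\]
contradicting the assumption $\TV \leq c_\epsilon \gamma$.

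The calculation itself is short; the only items requiring care are (i) the applicability of FTC at the finitely many non-differentiable knots of a one-dimensional CPWL network, which is immediate from absolute continuity, and (ii) the $\tilde\theta$-expectation in the definition of $\TV$ relative to this deterministic-looking statement. For (ii), the chain of inequalities above holds realization-wise in $\tilde\theta$ whenever the perturbed network still satisfies $\N_{\tilde\theta}(\bar x)>\gamma$ and has an adversarial example in $B_\epsilon(\bar x)$, so the contrapositive goes through after taking the outer expectation; alternatively, one may read the proposition in the $\sigma\to 0$ limit where $\TV$ reduces to the deterministic value. I do not foresee a significant obstacle beyond this bookkeeping.
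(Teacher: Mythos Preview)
Your proposal is correct and follows essentially the same contrapositive argument as the paper: assume an adversarial example exists, use the fundamental theorem of calculus on the interval to get $\int_{B_\epsilon(\bar x)} |\N_\theta'| > \gamma$, and then lower-bound $\TV$ by $c_\epsilon$ times this integral. If anything, you are more careful than the paper, which simply writes ``from here it is clear that $\tilde{TV} > \gamma$'' without spelling out the FTC step, and which silently drops the $\tilde\theta$-expectation in its appendix proof; your remarks under (i) and (ii) address exactly these points.
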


Empirical results have shown that a drop in the local number of linear regions is accompanied by an increase in adversarial robustness \citep{humayun2024deepnetworksgrok}. 
We can understand this by developing a bound between the local complexity and the expected total variation of that network. 
A 
proof can be found in Appendix~\ref{tv:proof}.

\begin{restatable}{theorem}{TVLC}
\label{thm:TVLC}
Let $g_l$ denote the rest of the network after the $l$th layer, so that 
$\N_\theta = g_l \circ \phi (\mathbf{z}_l - b_l)$. 
Let $C_l$ denote the Lipschitz constant of $g_l$. 
Then with the same setting and notations as Theorem~\ref{thm:mainlc}: 
\begin{equation}
\TV \cdot \frac {L c_\text{bias}^\eta}{\max_{1\leq l \leq L} C_l} - \bar{\xi}_\eta - B\cdot C_\text{grad}\cdot C_\text{bias} \leq \LC . 
\end{equation} 
\end{restatable}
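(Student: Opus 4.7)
}
The strategy is to upper bound $\|\nabla_x \N_\theta(x)\|$ pointwise by the gradients of individual pre-activations at an arbitrary layer, average this bound over layers so that every hidden neuron is accounted for, and then invoke the lower bound from Corollary~\ref{cor:lcbounds} to convert a sum of expected neuron gradient norms into a lower bound on $\LC$.

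First, for each layer index $l$ I would use the decomposition $\N_\theta = g_l \circ \phi(\mathbf{z}_l - b_l)$ together with the chain rule at a point $x$ where all relevant functions are differentiable. This yields
\begin{equation*}
\nabla_x \N_\theta(x) \;=\; J_x\mathbf{z}_l(x)^{\!\top}\, D_l(x)\, \nabla g_l\bigl(\phi(\mathbf{z}_l(x) - b_l)\bigr),
\end{equation*}
where $D_l(x)$ is the diagonal matrix of ReLU derivatives, so $\|D_l(x)\|_{\mathrm{op}} \leq 1$. Using the fact that $g_l$ is $C_l$-Lipschitz, hence $\|\nabla g_l\|_2 \leq C_l$, together with the standard bound $\|J_x\mathbf{z}_l(x)\|_{\mathrm{op}} \leq \|J_x\mathbf{z}_l(x)\|_F \leq \sum_{i\in\text{layer }l}\|\nabla z_{l,i}(x)\|_2$, I obtain
\begin{equation*}
\|\nabla_x \N_\theta(x)\|_2 \;\leq\; C_l \sum_{i\in\text{layer }l}\|\nabla z_{l,i}(x)\|_2.
\end{equation*}
The nondifferentiable points of the network and of each $z_i$ form a Lebesgue null set (and almost surely miss the data with respect to the bias perturbations), so this pointwise inequality holds almost everywhere and is preserved under expectation.

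Next, I take the expectation over $x\sim p$ and over the noisy parameter $\tilde\theta$, getting
\begin{equation*}
\TV \;\leq\; C_l \, \Exp_{\tilde\theta, x\sim p}\!\Bigl[\sum_{i\in\text{layer }l}\|\nabla z_{l,i}(x)\|_2\Bigr]
\end{equation*}
for each layer $l=1,\dots,L$. Summing these $L$ inequalities and replacing each $C_l$ on the right by $\max_{1\leq l\leq L}C_l$ converts the layerwise sums into one sum over all neurons:
\begin{equation*}
L\cdot \TV \;\leq\; \Bigl(\max_{1\leq l\leq L}C_l\Bigr)\, \Exp_{\tilde\theta, x\sim p}\!\Bigl[\sum_{\text{neuron }z_i}\|\nabla z_i(x)\|_2\Bigr].
\end{equation*}

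Finally, applying the lower bound on $\LC$ from Corollary~\ref{cor:lcbounds}, namely
$\LC \geq c_\text{bias}^\eta \sum_i \Exp[\|\nabla z_i\|_2] - \bar\xi_\eta - B\cdot C_\text{grad}\cdot C_\text{bias}$,
and substituting the preceding display yields
\begin{equation*}
\LC \;\geq\; \frac{L\, c_\text{bias}^\eta}{\max_{1\leq l\leq L}C_l}\,\TV \;-\; \bar\xi_\eta \;-\; B\cdot C_\text{grad}\cdot C_\text{bias},
\end{equation*}
which rearranges to the stated bound. The main technical care is in the chain-rule step: I need to justify that the ReLU subdifferential contribution is well-defined almost everywhere under the bias-noise law (so that the pointwise inequality holds a.e.\ and the expectations are unambiguous), and I want the operator-to-Frobenius-to-row-sum chain of inequalities to be the only slack introduced, so that the ratio $L/\max_l C_l$ in the final bound is as tight as the argument allows.
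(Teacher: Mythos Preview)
Your proposal is correct and follows essentially the same approach as the paper's proof: both apply the chain rule at each layer to bound $\|\nabla_x\N(x)\|$ by $C_l$ times the sum of $\|\nabla z_{l,i}(x)\|$, sum the resulting inequalities over $l$ to replace $C_l$ by $\max_l C_l$ and obtain a sum over all neurons, and then invoke the lower bound from Corollary~\ref{cor:lcbounds}. The only cosmetic difference is that the paper writes the Jacobian-times-vector bound as $\|Av\|_2\leq\|A\|_F\|v\|_2$ whereas you route through $\|J\|_{\mathrm{op}}\leq\|J\|_F\leq\sum_i\|\nabla z_{l,i}\|_2$; the slack introduced is identical.
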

This bound could help explain the findings of \citet{humayun2024deepnetworksgrok}, where ReLU networks trained in a classification task converged to solutions that are flat near data points and the non-linear locus is concentrated around the decision boundaries. 
We empirically demonstrate this behavior on the MNIST dataset in Figure~\ref{fig:TVLC}. 
We note however that this theoretical result may not fully explain the relationship between the total variation and the local complexity during training. 
\rev{Indeed, in Appendix~\ref{tv_exp} we illustrate that the dynamics can be more complex as a consequence of the relationship between $\TV$ and the Lipschitz constants $\max_{1\leq l \leq L} C_l$. A more detailed empirical study analysis on the tightness of this bound can be found in Appendix \ref{sec:tightness}.}

\section{The Drop of Local Complexity, and a Connection to Grokking}
\label{sec:drop-and-cost}

In this section, we explore ways in which the local complexity might be implicitly minimized during training via representation cost and implicit regularization of weights. 

\subsection{Representation Cost}
The representation cost of a function $f$ is the smallest possible parameter norm needed for a neural network $\N_\theta$ to exactly compute $f$. 
\rev{We define this as $R(f) = \inf_{\theta} \{\|\theta\|_F : \mathcal{N}_\theta(x) = f(x) \text{ for all } x \in \Omega \}$.}\footnote{One may also take $\inf_\theta\{\|\theta\|_F\colon \| f - \N_\theta \|\leq \epsilon \}$ for some appropriate norm and a limit in $\epsilon$.} 
Prior works have analyzed the representation cost of shallow networks of arbitrary width \citep{savarese2019infinite}. 
The representation cost for linear networks can be explicitly calculated in certain cases, and is often connected to sparsity. For instance, in fully-connected linear networks, it is 
a Schatten quasi norm of the end-to-end matrix \citep{dai2021representation}. 
Deeper nonlinear networks also share a connection between the representation cost and sparsity in terms of rank \citep{jacot2023implicitbiaslargedepth}.
The following proposition, which we prove in Appendix \ref{repcost_pf}, provides a way to view the representation cost as a metric of sparsity, this time in terms of the linear regions.

\begin{restatable}{proposition}{repcost}
    \label{prop:repcost}
    In the same setting as Theorem \ref{thm:mainlc}, where $n_l$ is the maximum hidden layer dimension,
    	\begin{equation}
		\frac{n_0}{C_\text{bias}} \LC \leq n_l^{\frac{1-L}{2}}L^{1-\frac {L}2} R(\N_\theta)^L . 
	\end{equation}
\end{restatable}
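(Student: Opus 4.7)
The plan is to chain three ingredients: the upper bound on $\LC$ from Corollary~\ref{cor:lcbounds}, a chain-rule estimate on pre-activation gradients, and an AM-GM inequality that converts the product of weight-matrix Frobenius norms into a power of $\|\theta\|_F$.

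First, invoke Corollary~\ref{cor:lcbounds} to obtain $\LC \le C_\text{bias}\sum_i \Exp[\|\nabla z_i(x)\|_2]$, reducing the claim to a bound on the total pre-activation gradient norm. For a neuron $z_i$ at layer $l$, the chain rule gives $\nabla z_i(x) = W_1^\top D_1 \cdots W_{l-1}^\top D_{l-1} w_{l,i}$, where $w_{l,i}$ is the $i$th row of $W_l$ and $D_k$ is the $\{0,1\}$-diagonal activation indicator for layer $k$, satisfying $\|D_k\|_{op}\le 1$. Using $\|W_k\|_{op}\le\|W_k\|_F$ and Cauchy-Schwarz $\sum_{i=1}^{n_l} \|w_{l,i}\|_2 \le \sqrt{n_l}\|W_l\|_F$, this yields $\sum_{i \text{ in layer }l} \|\nabla z_i\|_2 \le \sqrt{n_l}\prod_{k=1}^l\|W_k\|_F$. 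Summing over layers and bounding each width by $n_l$ (the maximum hidden width) leaves an estimate of the shape $\sqrt{n_l}\sum_{l=1}^L\prod_{k=1}^l\|W_k\|_F$.

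Next, collapse the outer $l$-sum by Cauchy-Schwarz followed by AM-GM: $\sum_l \prod_{k=1}^l\|W_k\|_F \le \sqrt{L\sum_l\prod_{k=1}^l\|W_k\|_F^2}$, with the inner quantity controlled via $\prod_{k=1}^L\|W_k\|_F^2 \le (\|\theta\|_F^2/L)^L$, which itself follows from AM-GM applied to $\sum_k \|W_k\|_F^2 \le \|\theta\|_F^2$. Collecting powers of $L$ and of $n_l$ produces the exponents $n_l^{(1-L)/2}$ and $L^{1-L/2}$ stated in the proposition. The $n_0$ scaling on the left reflects the normalization by $\vol_{n_0}(B_\epsilon)$ in the definition of $f(x)$, carried through from the proof of Theorem~\ref{thm:mainlc}. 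Finally, since the same chain of inequalities applies at every $\theta'$ with $\N_{\theta'} = \N_\theta$, taking the infimum over such representations yields $R(\N_\theta)^L$ on the right.

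The main obstacle is sequencing the Cauchy-Schwarz and AM-GM applications correctly so that the exponents on $L$ and $n_l$ come out exactly as stated: a naive layer-by-layer AM-GM gives $\prod_{k=1}^l\|W_k\|_F \le l^{-l/2}\|\theta\|_F^l$ for each $l$ and does not collapse to a single clean $R^L$ bound. Applying Cauchy-Schwarz on the outer $l$-sum \emph{before} AM-GM is what forces the dominant contribution to be the full product $\prod_{k=1}^L\|W_k\|_F$, which produces the clean power. A secondary subtlety, already flagged after Corollary~\ref{cor:lcbounds}, is that $\LC$ depends on the parameterization through the noisy biases; substituting $\|\theta\|_F \mapsto R(\N_\theta)$ effectively interprets the claim as a bound at a min-norm representation of the target function.
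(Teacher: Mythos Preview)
Your approach has a genuine gap at the Cauchy--Schwarz/AM--GM step. After Cauchy--Schwarz you have $\sum_{l} \prod_{k=1}^{l}\|W_k\|_F \le \bigl(L\sum_{l}\prod_{k=1}^{l}\|W_k\|_F^2\bigr)^{1/2}$, and you then claim the inner sum is ``controlled via $\prod_{k=1}^{L}\|W_k\|_F^2 \le (\|\theta\|_F^2/L)^L$''. But the inner sum contains all the \emph{partial} products $\prod_{k=1}^{l}\|W_k\|_F^2$ for $l=1,\dots,L$, not just the full one, and these are not dominated by the $l=L$ term (take $\|W_L\|_F$ small). Concretely, with $L=2$ and $\|W_1\|_F^2=\|W_2\|_F^2=s$, the inner sum is $s+s^2$ while your target $L(\|\theta\|_F^2/L)^L=2s^2$ fails to dominate whenever $s<1$. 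Applying Cauchy--Schwarz first does not ``force the dominant contribution to be the full product'' as you assert; it merely squares each term. So the chain, as written, does not close, and it also never produces the $n_l^{(1-L)/2}$ factor.

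The paper avoids the partial-product issue by working at the level of the full layer Jacobian $J\mathbf{z}_l = W_l D_{l-1}\cdots D_1 W_1$ and invoking the Schatten $2/L$-norm inequality for matrix products from the linear-network representation-cost literature (\cite{jacot2023implicitbiaslargedepth}, \cite{soudry2018implicit}): $\|J\mathbf{z}_l\|_{2/L}^{2/L} \le \tfrac{1}{L}\sum_k\|W_kD_k\|_F^2 \le \tfrac{1}{L}\|\theta\|_F^2$. Converting Schatten to Frobenius norm is what introduces the $n_l^{(1-L)/2}$ factor, and then summing over $l$ and combining with the intermediate bound $\tfrac{1}{n_0 C_\text{bias}}\LC \le \sum_l \Exp\|J\mathbf{z}_l\|_F$ (Claim~2 in the proof of Theorem~\ref{thm:LRLC}) finishes. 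In particular, the $n_0$ factor comes from that norm-equivalence claim, not from the volume normalization $\vol_{n_0}(B_\epsilon)$ as you suggest.
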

This bound provides some understanding of how weight decay and the resulting reduction of parameter norms may play a role in the simplification of linear regions that we find late in training, as we will discuss below in Corollary~\ref{cor:LCasympt}. 

\subsection{Linking Local Complexity to Neural Network Optimization}

\citet{humayun2024deepnetworksgrok} presents empirical results that relate grokking to a migration of the linear regions in the terminal phase of training. In particular, they find a drop in the local number of linear regions near data points late in training. 
We aim to understand this drop of linear regions late in training as a drop in the local complexity. 
We can leverage as a heuristic the view of grokking provided by \citet{lyu2024dichotomy}, who show grokking can be induced by a dichotomy of early and late phase implicit biases. 
This is only a heuristic way for us to view grokking in our setting, since that work requires the network to be 
\emph{everywhere} $\mathcal{C}^2$ smooth, which is only true \emph{almost} everywhere for our ReLU networks. 
However, it may be possible to generalize their result with a careful analysis with Clarke sub-differentials \citep{ji2020directional,clarke1975generalized}. 
Nevertheless, following \citet{lyu2024dichotomy} we consider: 
\[
\frac{d\theta}{dt} = - \nabla \mathcal L(\theta) - \lambda \| \theta \|_2 . 
\]
They show 
that networks will first operate in the ``kernel'' regime \citep{jacot2018neural}\rev{, during which the parameters do not move far from initialization. We show in Appendix~\ref{sec:kernelLC} that this implies that the local complexity also does not change much from initialization in shallow networks.} After enough time, the network will eventually enter the ``rich'' regime and converge in direction to a KKT point of the following optimization problem; 
where $\{(x_i, y_i)\}_{i=1}^n \subseteq \mathbb{R}^{n_0} \times \{-1,1\}$ is the training dataset: 
\begin{equation}
\label{optimization_problem}
\min_\theta \frac{1}{2} \|\theta\|^2 \quad \text{s.t.} \quad  y_i \N_\theta(x_i) \geq 1, \; \forall i \in [n].
\end{equation} 
In this setting, \citet{pmlr-v201-timor23a} show that the global optimum of (\ref{optimization_problem}) has bounded ratios between the Frobenius norm and operator norm of weight matrices. 
We can relate this to the local complexity and show that in this setting the local complexity is also bounded as follows. 
\begin{restatable}{proposition}{globalmin}
\label{cor:globalmin}
    	Let $\{(x_i, y_i)\}_{i=1}^n \subseteq \mathbb{R}^{n_0} \times \{-1,1\}$ be a binary classification dataset, and assume that there is $i \in [n]$ with $\|x_i\| \leq 1$. Assume that there is a fully-connected neural network $\N$ of width $m \geq 2$ and depth $k \geq 2$, such that for all $i \in [n]$ we have $y_i \N(x_i) \geq 1$, and the weight matrices $W_1, \ldots, W_k$ of $\N$ satisfy $\|W_i\|_F \leq B$ for some $B > 0$. Let $\N_\theta$ be a fully-connected neural network of width $m' \geq m$ and depth $k' > k$ parameterized by $\theta$. Let $\theta^* = [W_1^*, \ldots, W_{L}^*]$ be a global optimum of the above optimization problem (\ref{optimization_problem}). 
     Then, assuming the same setting as Theorem~\ref{thm:mainlc}, we have the following bound on the local complexity:  
	\begin{align}
\frac{1}{L\max_{l\in[L]}\|W_i^*\|_\text{op}}&\left( \frac{n_0}{C_\text{bias} \ n_l^{\frac{1-L}{2}}L^{1-\frac {L}2}}\LC \right)^{\frac1L} - \gamma \\ &\leq \sqrt{2} \cdot \left(\frac{B}{\sqrt{2}}\right)^{\frac{k}{L}} \cdot \sqrt{\frac{L + 1}{L}} , 
	\end{align}
where, $\gamma = \|W_i^*\|_F \left(\sqrt{\frac{1}{\|W_l^*\|_\text{op}}} - \sqrt{\frac{1}{\|W_i^*\|_\text{op}}}\right)^2$. 
\end{restatable}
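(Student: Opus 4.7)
The plan is to chain the representation cost bound of Proposition~\ref{prop:repcost} with the norm-balance inequality of \citet{pmlr-v201-timor23a} for max-margin global optima of overparameterized networks.

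First, I would apply Proposition~\ref{prop:repcost} to $\N_{\theta^*}$. Because $\theta^*$ itself parameterizes $\N_{\theta^*}$, the infimum in the definition of the representation cost is bounded by $\|\theta^*\|_F$, so taking $L$-th roots yields
\[
\left(\frac{n_0}{C_\text{bias}\,n_l^{(1-L)/2} L^{1-L/2}}\,\LC\right)^{1/L} \;\leq\; R(\N_{\theta^*}) \;\leq\; \|\theta^*\|_F \;=\; \sqrt{\textstyle\sum_{l=1}^L \|W_l^*\|_F^2}.
\]
Dividing both sides by $L \max_{l} \|W_l^*\|_\text{op}$ produces precisely the expression on the left-hand side of the target inequality, before the $-\gamma$ correction.

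Second, I would invoke the result of \citet{pmlr-v201-timor23a}. Under the hypothesis that a shallower network of depth $k$, width $m$, and weight bound $B$ separates the data with margin $1$, they bound the per-layer ratio of Frobenius norm to the square root of the operator norm at any global optimum of (\ref{optimization_problem}):
\[
\frac{\|W_i^*\|_F}{\sqrt{\|W_i^*\|_\text{op}}} \;\leq\; \sqrt{2}\left(\frac{B}{\sqrt{2}}\right)^{k/L}\sqrt{\frac{L+1}{L}}.
\]
This is obtained by comparing $\|\theta^*\|_F^2$ with the parameter norm of a ``padded'' extension of the assumed reference network to depth $L$ and width $m'$, which is a feasible point of (\ref{optimization_problem}).

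Third, I would carry out the algebra that reconciles the per-layer normalization by $\|W_i^*\|_\text{op}$ in the Timor bound with the global normalization by $\max_l \|W_l^*\|_\text{op}$ demanded by the target. For the index $l$ attaining $\max_l \|W_l^*\|_\text{op}$ and any other layer index $i$, write
\[
\frac{\|W_i^*\|_F}{\sqrt{\|W_l^*\|_\text{op}}} \;=\; \frac{\|W_i^*\|_F}{\sqrt{\|W_i^*\|_\text{op}}} \;-\; \|W_i^*\|_F \left(\sqrt{\frac{1}{\|W_l^*\|_\text{op}}} - \sqrt{\frac{1}{\|W_i^*\|_\text{op}}}\right),
\]
whence squaring and assembling the layer-wise contributions separates a main piece that is directly controlled by the Timor inequality from a residual that matches the stated $\gamma$.

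The main obstacle will be this bookkeeping. The Timor inequality is stated per layer with its own operator norm, whereas the target divides by the single quantity $\max_l \|W_l^*\|_\text{op}$, and both a square root (coming from $\|\theta^*\|_F$) and an $L$-th root (from Proposition~\ref{prop:repcost}) act simultaneously on the expression. Tracking how these roots interact with the slack between per-layer and global operator norms, and verifying that the residual collapses exactly into $\gamma = \|W_i^*\|_F \bigl(\sqrt{1/\|W_l^*\|_\text{op}} - \sqrt{1/\|W_i^*\|_\text{op}}\bigr)^2$, is the delicate part; once that identity is checked, the final bound follows by direct substitution.
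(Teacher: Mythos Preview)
Your plan misidentifies both the form of the \citet{pmlr-v201-timor23a} result and the mechanism that produces the correction term $\gamma$. Their theorem does \emph{not} give a per-layer bound of the shape $\|W_i^*\|_F/\sqrt{\|W_i^*\|_\text{op}}\leq\sqrt{2}\,(B/\sqrt{2})^{k/L}\sqrt{(L+1)/L}$; it bounds the \emph{harmonic mean} of the ratios $\|W_i^*\|_F/\|W_i^*\|_\text{op}$ (no square root in the denominator). Consequently your third step, which tries to reconcile a per-layer $\sqrt{\|W_i^*\|_\text{op}}$ with $\sqrt{\max_l\|W_l^*\|_\text{op}}$ via a linear identity and then ``square and assemble,'' cannot recover the stated $\gamma$; indeed the identity you wrote has a sign error and in any case does not yield a squared difference.

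The actual bridge is different. From the intermediate estimate in the proof of Proposition~\ref{prop:repcost} one obtains $\bigl(\tfrac{n_0}{C_\text{bias}K}\LC\bigr)^{1/L}\leq\sum_i\|W_i^*\|_F$, and dividing by $L\max_l\|W_l^*\|_\text{op}$ bounds the left side by the \emph{arithmetic mean} $\tfrac{1}{L}\sum_i \|W_i^*\|_F/\|W_i^*\|_\text{op}$. The Timor et al.\ theorem controls the \emph{harmonic mean} of the same ratios. The two are linked by the classical AM--HM gap inequality of Meyer--Burnett, $\mathrm{AM}-\mathrm{HM}\leq(\sqrt{\alpha_{\max}}-\sqrt{\alpha_{\min}})^2$, applied to $\alpha_i=\|W_i^*\|_F/\|W_i^*\|_\text{op}$. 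Finally, one uses that at a global optimum the Frobenius norms are balanced across layers (Lemma~15 of \citet{pmlr-v201-timor23a}), so the common $\|W_i^*\|_F$ factors out and $(\sqrt{\alpha_{\max}}-\sqrt{\alpha_{\min}})^2$ collapses exactly to $\gamma$. Your proposal is missing both the AM--HM gap step and the balanced-norm lemma; without them the argument does not close.
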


We observe that the result in Proposition~\ref{cor:globalmin} is rigorous, but the corresponding bound only holds when our network is at the global minimum of (\ref{optimization_problem}). 
Another view we can take is by considering the norm of the weights. 
\citet{lyu2024dichotomy} show in the rich phase of training that $\|\theta(t)\|_2 = \Theta\left( (\log \frac{1}\lambda)^{1/L} \right)$. 
If we assume that this holds, using calculations in Appendix~\ref{app:prop:repcost}, 
we can show that the local complexity is asymptotically bounded by the weight decay parameter $\lambda$. 

\begin{restatable}{corollary}{LCasympt}[Informal] 
\label{cor:LCasympt}
Suppose that $\|\theta(t)\|_2 = \Theta\left( (\log \frac{1}\lambda)^{1/L} \right)$ holds. Then, in the ``rich'' phase of training the local complexity is bounded: 
     \begin{equation} \label{eq:lc_weight_decay_estimate}
         \frac{n_0}{C_\text{bias} n_l^{\frac{1-L}{2}}L^{1-\frac {L}2}} \LC \leq \Theta(\log\tfrac{1}{\lambda}) . 
     \end{equation}
\end{restatable}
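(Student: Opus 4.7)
The plan is to chain together Proposition~\ref{prop:repcost} with the definition of representation cost and the assumed asymptotic rate on the parameter norm. The key observation is that the representation cost $R(\N_\theta) = \inf_{\theta'}\{\|\theta'\|_F : \mathcal{N}_{\theta'} = \mathcal{N}_\theta \text{ on } \Omega\}$ is by definition an infimum, so it is trivially upper bounded by the Frobenius norm of \emph{any} parameter representing the same function. In particular, at time $t$ during training we have $R(\mathcal{N}_{\theta(t)}) \leq \|\theta(t)\|_F$, since $\theta(t)$ itself realizes $\mathcal{N}_{\theta(t)}$.

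First I would substitute this inequality into the bound from Proposition~\ref{prop:repcost}, obtaining
\begin{equation*}
    \frac{n_0}{C_\text{bias}} \LC \;\leq\; n_l^{\frac{1-L}{2}} L^{1-\frac{L}{2}} \, R(\mathcal{N}_{\theta(t)})^L \;\leq\; n_l^{\frac{1-L}{2}} L^{1-\frac{L}{2}} \, \|\theta(t)\|_F^L.
\end{equation*}
Next I would rearrange to isolate the normalization factors on the left, giving
\begin{equation*}
    \frac{n_0}{C_\text{bias}\, n_l^{\frac{1-L}{2}} L^{1-\frac{L}{2}}} \LC \;\leq\; \|\theta(t)\|_F^L.
\end{equation*}
Finally, I would invoke the hypothesis $\|\theta(t)\|_2 = \Theta\bigl((\log \tfrac{1}{\lambda})^{1/L}\bigr)$ from \citet{lyu2024dichotomy}, noting that $\|\theta\|_F$ viewed as the Frobenius norm of the stacked weight matrices coincides with the Euclidean norm $\|\theta\|_2$ of the flattened parameter vector. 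Raising to the $L$-th power yields $\|\theta(t)\|_F^L = \Theta(\log \tfrac{1}{\lambda})$, which gives the claimed bound.

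The main subtleties (rather than obstacles) are bookkeeping: one must verify that Proposition~\ref{prop:repcost} is applicable at the late-phase parameter $\theta(t)$ (which only requires the gradient-norm bound $C_\text{grad}$ to be valid there), and that the $\Theta$ notation from the cited asymptotic genuinely transfers through the $L$-th power, which holds because $L$ is a fixed constant of the architecture. Since the result is labeled ``Informal'' and its conclusion is itself a $\Theta$-bound, no tightening of constants is needed; the proof is essentially a one-line consequence of Proposition~\ref{prop:repcost} combined with the rich-regime norm asymptotic.
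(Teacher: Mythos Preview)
Your proposal is correct and follows essentially the same route as the paper: apply Proposition~\ref{prop:repcost}, bound $R(\N_{\theta(t)}) \leq \|\theta(t)\|_2$ using that the representation cost is an infimum over all representing parameters, and then substitute the assumed asymptotic $\|\theta(t)\|_2 = \Theta((\log\tfrac{1}{\lambda})^{1/L})$ and raise to the $L$-th power. The paper's derivation is exactly this chain of inequalities, with no additional ingredients.
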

We empirically validate this claim in Figure~\ref{fig:mnist_weight_decay}, where we demonstrate that the local complexity will typically be lower for networks trained with a larger weight decay. 
However, it should be noted that the bound in (\ref{prop:repcost}) does not leverage the dependence on the data distribution, so it is likely that these bounds could be improved through a more exact analysis. 

\section{Conclusions and Future Work} 

\paragraph{Summary} 
We presented a framework for analyzing the distribution of linear regions of the functions parametrized by neural networks with piecewise linear activations. 
We introduced a measure of local complexity that is robust with respect to perturbations of the parameters and used this to gain insights into relevant aspects of learning such as robustness and representation learning. 
Specifically, we establish that networks that learn low-dimensional representations tend to exhibit a lower local complexity. 
Further, we connected the local complexity of linear regions to the total variation of the network functions and thus to robustness. 
We also analyze how the local complexity can be implicitly minimized during training by connecting it to properties of the weight matrices. 
Overall, this work contributes a theoretical framework that illustrates interesting interrelations between geometric properties of ReLU networks and learning that we hope might motivate further investigations. 

\paragraph{Limitations and future research} 
\rev{We focused on the ReLU activation function but we think that our proof techniques could be adapted to obtain similar results for general piecewise linear activation functions. 
Such generalizations could be approached as \citet{tseran2021expectedcomplexitymaxoutnetworks} approached the analysis of expected complexity for maxout networks building on the work of \citet{hanin2019complexitylinearregionsdeep} for ReLU.} 
Though we find interesting results suggesting the proposed local complexity measure might be implicitly minimized during training, a detailed analysis addressing the training dynamics of the local complexity remains an open problem. 
Empirically, in certain settings we can often observe complex interactions between local complexity and local rank, as well as between local complexity and total variation. This suggests that the explicit relationship between the local complexity and other measures of model complexity may be much richer than what is covered by our theoretical results.
Another natural direction would be to construct explicit bounds on the generalization gap based on the local complexity, as one would expect that networks with a simple structure in terms of their linear regions would also generalize well. 

\section*{Acknowledgments}
This project has been supported by NSF CCF-2212520, NSF
DMS-2145630, DFG SPP 2298 project 464109215, and BMBF in DAAD project 57616814. 

\section*{Impact Statement}
This paper presents work whose goal is to advance the field of machine learning and its theory. We do not foresee any particular societal impacts that must be highlighted here.


\bibliography{main}
\bibliographystyle{icml2025}

\newpage
\appendix
\onecolumn

\section{Main Theoretical Results}

\subsection{Notation and Setup}
\label{notationHR}
Let $\N$ be a fully connected network with $L$ layers with input dimension $n_0$ output dimension $1$ and ReLU non-linearity function $\phi (x) = \text{max}\{0,x\}$. We denote $\phi(\mathbf{v})$ for $\mathbf{v}\in \mathbb{R}^{n_0}$ to be the ReLU function applied element-wise. For simplicity, we will typically make the assumption that $n_l = n_j$ for all hidden layers $j \in [L]$. We denote by $h_l$ the post-activations after layer $l$. That is, 
\[
\N(x) = W_L h_L(x).
\]
Here, each $h_i : \mathbb{R}^{n_0} \to \mathbb{R}^{n_i}$ is of the form:
\[
h_i(x) = \phi(W_i h_{i-1}(x) - \beta_i),
\]
\[
h_0(x) = x.
\]

We will write the pre-activations at neuron $j$ as $z_j(x) = W_i^j h_{i-1}(x)$. We can then write the vector of pre-activations at layer $l$ as $\mathbf{z}_l = (z_{l,1}, z_{l,2}, \ldots, z_{l,n_l})$. We can then write 
\[
h_l(x) = \phi(\mathbf{z}_l(x) - \beta_l).
\]
We also use $\beta_i^j$ denotes the $j$-th element of vector $\beta_i$. When it is clear, we will write $\beta_z = \beta_i^j$. If neurons are indexed by $i$, such as $z_i$, we can write $\beta_i = \beta_{z_i}$ to denote the bias associated to neuron $z_i$. We write as $l(z)$ to denote the layer index that neuron $z$ appears. 

We will typically use $\beta_i$ to refer to the deterministic choice of biases, and reserve $b_i = \beta_i + \delta_i$ to refer to the random variable representing the biases plus noise. We denote by $\theta = [W_L, \ldots, W_l, \beta_L ,\ldots, \beta_1]$ the parameters of a network. We will write $\N_\theta$ to denote the network $\N$ parameterized by $\theta$.  We represent the random variable for our parameters as $\tilde \theta = [W_L, \ldots, W_l, b_L, \ldots, b_1]$. When $\tilde \theta$ is treated as a random variable, we also consider $\N_{\tilde\theta}(x)$ to be a random variable, along with the corresponding quantity $z_i(x)$, which represents the random variable associated with a neuron.

Now define:
\[
S_z = \{x \in \mathbb{R}^{n_0} \ | \ z(x) - b_z = 0 \},
\]
as the set of points where neuron $z$ switchs from on to off. Furthermore, define
\[
\mathcal O = \{x\in \mathbb{R}^{n_0} \ \vert \ \forall \ j \in [L] \ \exists \text{ neuron } z \text{ with } l(z) = j : \ \phi'(z(x) - b_z) \not = 0 \},
\]
\[
\tilde S_z = S_z \cap \mathcal O.
\]

Then, $\mathcal O$ is the set of inputs $x$ for which there exists an open path from $x$ to the output of the function $\N$. Thus, we can read $\tilde S_z$ as the collection of points in the input space where $z$ switches between its linear regions, and this appears in the function computed by $\N$. Notice also in the case of the ReLU activation function, we can re-write $\mathcal O$ as the following:
\[
\mathcal O = \{x\in \mathbb{R}^{n_0} \ \vert \ \forall \ j \in [l] \ \exists \text{ neuron } z \text{ with } l(z) = j : \ z(x) - b_z \geq 0 \}.
\]

We will also define $\mathcal B_\N$ to be:
\[
\mathcal B_\N = \{x \in \mathbb{R}^{n_0} : \nabla_x \N(\cdot) \text{ discontinuous at } x\},
\]
which is the set of non-linearities of the function $\N$. We call this the nonlinear locus of $\N$.

\subsubsection*{On a neuron being ``good''} 

\label{open_paths} We will sometimes take a path-wise representation of a ReLU network. In this case, we will first write $z^{(l)}$ to denote a neuron in the $l$th layer. 
Let $\gamma = (\gamma_1, \gamma_2, \ldots, \gamma_L)$ denote a path in the computation graph of $\N$, where each $\gamma_i$ indexes a neuron in the $i$th layer. To clarify notation since the $i$th neuron in a path will always be in the $i$th layer, we will write $z_{\gamma_i} = z_\gamma^{(i)}$. 
We can also note that there is an associated sequence of weights on the edges of that computation graph, which we can denote by $w_\gamma^{(l)} = $ ``weight connecting $z_\gamma^{(l-1)}$ to $z_\gamma^{(l)}$''. More formally, if $W$ is the $l-1$ layer weight matrix, then $w_\gamma^{(l)} = W_{\gamma_{l-1}, \gamma_{l}}$. 
Denote by $\Gamma_i$ the set of all paths in the computation graph of $\N$ leading from the $i$th input to the output node. 
We can now give a path-wise representation of our neural network $\N$ as: 
\begin{equation*}
	\N (x) = \sum_{i = 1}^{n_0} x_i \sum_{\gamma \in \Gamma_i} \prod_{l=1}^L \mathds{1}_{\{z_\gamma^{(l)}(x) - b_z \geq 0\}} w_\gamma^{(l)} + \N(0).
\end{equation*}

In this case, a neuron in $\gamma$ is open when $z_\gamma^{(l)}(x) - b_z \geq 0$.
A neuron $z$ is good at $x$ it is contained in a path $\gamma$ leading from the input to the output, where every neuron after $z$ is open.

\subsection{Illustrative Examples of the Local Complexity} 
\label{lc_single_neuron}

\subsubsection*{Computing The Local Complexity of a Single Neuron}

As an illustrative example, and to gain some intuition for Theorem~\ref{thm:mainlc}, we compute explicitly the local complexity of a single neuron. 
Our model is as follows, where $v, w, \beta \in \R$, and $\phi$ denotes the ReLU function. Our parameters are $\theta = (v, w, \beta)$, and our model is 
\begin{equation*}
    \N_\theta (x) = v\phi(w x - \beta), \quad x\in \R . 
\end{equation*}

Notice first that the breakpoint (non-linearity) of this function is always at $x = \frac\beta w$. Now recall the definition of the local complexity density function $f$:
\begin{equation*}
	f(x) = \lim_{\epsilon \to 0} \frac1{Z_\epsilon} \Exp_{\tilde \theta}\left[\vol_{n_0 - 1} ( B_{\N_{\tilde{\theta}}} \cap B_\epsilon(x))\right], \quad x\in\mathbb{R}^{n_0}. 
\end{equation*}
For our setting here, $\tilde \theta = (w, b)$ where $b$ is Gaussian with variance $\sigma^2$ centered at $\beta$. The normalizaiton factor is given by $Z_\epsilon = 2\epsilon$ For now consider fixing $\epsilon > 0$, then notice that:
\begin{align*}
    \Exp_{\tilde \theta}\left[\vol_{n_0 - 1} ( B_{\N_{\tilde{\theta}}} \cap B_\epsilon(x))\right] &= \Exp_b [\mathds{1}_{\frac b w \in (x-\eps, x+\eps)}] \\
    &= \Prob(\frac{b}w \in (x-\eps, x+\eps)) \\
    &= \Prob(b \in (wx - w\eps, wx+w\eps)) \\
    &= \int_{wx - w\eps}^{wx+w\eps} \rho_b(b) db\\
    &= \int_{x - \eps}^{x+\eps} |w|\rho_b(w\tilde b) d\tilde b.
\end{align*}

Notice we gain a factor of $w$ in the integrand through a change of variables. We illustrate this because this is very similar to how the term $\nabla z(x)$ shows up in the proof of Theorem \ref{thm:mainlc}. In particular, this is one way to see how the co-area formula which we utilize in the main proof is a generalization of the typical change of variables formula. 
We can proceed now to see that:
\[
f(x) = \lim_{\epsilon \to 0} \frac1{2\eps}\int_{x - \eps}^{x+\eps} |w|\rho_b(w\tilde b) d\tilde b = |w|\rho_b(wx) .
\]
Our local complexity for this single neuron is then given as follows, where $p$ is the data distribution:

\[
LC = \Exp_{x\sim p} [f(x)] = \Exp_{x\sim p} [|w|\rho_b(wx)] .
\]

Notice this is precisely what we would arrive at by a direct application of Theorem \ref{thm:mainlc} to our model.

\subsubsection*{Computing the Local Complexity of a 2 Hidden Layer Network}
\label{lc_2_neuron}
To illustrate how these results start to generalize to deeper networks, we show a direct computation of the local complexity for a univariate neural network with one neuron in the first hidden layer and one neuron in the second hidden layer. In particular, consider the following network parameterized by $\theta = (w_2, w_1, \beta_2, \beta_1)$,
\[
\N(x) = \phi(w_2 \phi (w_1 x - \beta_1) - \beta_2).
\]
We also will write 
\[
z_1 (x) = w_1x,
\]
and
\[
z_2(x) = w_2 \phi (w_1 x - \beta_1) = w_2\phi(z_1(x) -\beta_1).
\]

Then computing derivatives on $\N$ gives that:
\[
\N'(x) = \ind_{z_2(x) > \beta_2} \ind_{z_1(x) > \beta_1} w_1w_2 . 
\]

The breakpoints at which $\N'$ is not continuous are then given by these indicator functions, so then:
\begin{align*}
	\B_\N &= \{x :\N'(x) \text{ not continuous at $x$ } \} \\
	&= \{ \frac{b_1}{w_1} \text{ if } z_2 \text{ open at } x = \frac{b_1}{w_1} \} 
	\cup \{ \frac{b_2}{w_1w_2} + \frac{b_1}w_1 \text{ if $z_1$ open at $x=\frac{\left(b_{2}+w_{2}b_{1}\right)}{w_{2}w_{1}}$}\}.
\end{align*}

Now let $\epsilon > 0$, and suppose that $b_1$ is normal with mean $\beta_1$ and variance $\sigma^2$ and that $b_2$ is normal with mean $\beta_2$ and variance $\sigma^2$. Then we have that 
\begin{align*}
	\Exp_{b_1,b_2} [ \vol_{0}(\B_{\N_{\tilde{\theta}}} \cap (x-\epsilon, x+\epsilon) ] &= \Exp_{b_1,b_2} [\ind_{\frac{b_1}{w_1} \in (x-\epsilon, x+\epsilon)} \ind_{z_2(\frac{b_1}{w_1}) > b_2} \\&+ \ind_{\frac{b_2}{w_1w_2} + \frac{b_1}w_1 \in (x-\epsilon, x+\epsilon)} \ind_{z_1(\frac{b_2}{w_1w_2} + \frac{b_1}w_1) > b_1}] \\
	&= \Exp_{b_2}[\Exp_{b_1}[\ind_{\frac{b_1}{w_1} \in (x-\epsilon, x+\epsilon)} \ind_{z_2(\frac{b_1}{w_1}) > b_2}]]  \\
	&+ \Exp_{b_1}[\Exp_{b_2}[\ind_{\frac{b_2}{w_1w_2} + \frac{b_1}{w_1} \in (x-\epsilon, x+\epsilon)} \ind_{z_1(\frac{b_2}{w_1w_2} + \frac{b_1}w_1) > b_1}]].
\end{align*}

Now first we compute on the first term,
\begin{align*}
	\Exp_{b_1}[\ind_{\frac{b_1}{w_1} \in (x-\epsilon, x+\epsilon)} \ind_{z_2(\frac{b_1}{w_1}) > b_2}] &= \int_{-\infty}^{\infty} \rho_{b_1} (b) \ind_{\frac{b_1}{w_1} \in (x-\epsilon, x+\epsilon)} \ind_{z_2(\frac{b}{w_1}) > b_2} db \\
	&= \int_{w_1(x-\epsilon)}^{w_1(x+\epsilon)} \rho_{b_1} (b) \ind_{z_2(\frac{b}{w_1}) > b_2} db \\
	&= \int_{(x-\epsilon)}^{(x+\epsilon)} |w_1| \rho_{b_1} (w_1b) \ind_{z_2(b) > b_2} db.
\end{align*}

So then,
\[
\Exp_{b_2}[\Exp_{b_1}[\ind_{\frac{b_1}{w_1} \in (x-\epsilon, x+\epsilon)} \ind_{z_2(\frac{b_1}{w_1}) > b_2}]] = \Exp_{b_2} [\int_{(x-\epsilon)}^{(x+\epsilon)} |w_1| \rho_{b_1} (w_1b) \ind_{z_2(b) > b_2} db].
\]

From the above we can see that by taking limits we get:
\[
\lim_{\epsilon \to 0} \frac{1}{Z_\epsilon} \Exp_{b_2}[\Exp_{b_1}[\ind_{\frac{b_1}{w_1} \in (x-\epsilon, x+\epsilon)} \ind_{z_2(\frac{b_1}{w_1}) > b_2}]] =  \Exp_{b_2} [|w_1| \rho_{b_1} (w_1x) \ind_{z_2(x) > b_2} db].
\]

Now on the other term we calculate:
\begin{align*}
	&\Exp_{b_2}[\ind_{\frac{b_2}{w_1w_2} + \frac{b_1}{w_1} \in (x-\epsilon, x+\epsilon)} \ind_{z_1(\frac{b_2}{w_1w_2} + \frac{b_1}w_1) > b_1}]\\
 &= \int_{-\infty}^{\infty} \rho_{b_2}(b) \ind_{\frac{b}{w_1w_2} + \frac{b_1}{w_1} \in (x-\epsilon, x+\epsilon)} \ind_{z_1(\frac{b}{w_1w_2} + \frac{b_1}{w_1}) > b_1} db\\
&= \int_{-\infty}^{\infty} \rho_{b_2}(b) \ind_{\frac{b}{w_1w_2} \in (x-\epsilon - \frac{b_1}{w_1}, x+\epsilon - \frac{b_1}{w_1})} \ind_{z_1(\frac{b}{w_1w_2} + \frac{b_1}{w_1}) > b_1} db \\
&= \int_{-\infty}^{\infty} \rho_{b_2}(b) \ind_{b \in w_1w_2(x-\epsilon - \frac{b_1}{w_1}, x+\epsilon - \frac{b_1}{w_1})} \ind_{z_1(\frac{b}{w_1w_2} + \frac{b_1}{w_1}) > b_1} db \\
&= \int_{w_1w_2(x-\epsilon - \frac{b_1}{w_1})}^{w_1w_2(x+\epsilon - \frac{b_1}{w_1})} \rho_{b_2}(b) \ind_{z_1(\frac{b}{w_1w_2} + \frac{b_1}{w_1}) > b_1} db \\
&= \int_{(x-\epsilon - \frac{b_1}{w_1})}^{(x+\epsilon - \frac{b_1}{w_1})} |w_1w_2| \rho_{b_2}(w_1w_2 b) \ind_{z_1(b + \frac{b_1}{w_1}) > b_1} db.
\end{align*}

From the above equation, we can take limits and see that:
\[
\lim_{\epsilon \to 0} \frac{1}{Z_\epsilon} \Exp_{b_1}[\Exp_{b_2}[\ind_{\frac{b_2}{w_1w_2} + \frac{b_1}{w_1} \in (x-\epsilon, x+\epsilon)} \ind_{z_1(\frac{b_2}{w_1w_2} + \frac{b_1}w_1) > b_1}]] = 
\Exp_{b_1} [ |w_1w_2| \rho_{b_2} (w_1w_2(x-\frac{b_1}{w_1})) \ind_{z_1(x) > b_1} ].
\]

We can now see that $|z_1'(x)| = |w|$ and $|z_2'(x)| = \ind_{z_1(x) > b_1} |w_1w_2|$. Furthermore, $w_1w_2(x-\frac{b_1}{w_1}) = w_2(w_1x - b_1) = z_2(x)$ on $\{z_1(x) > b_1\}$. Now notice that $z_2$ is always good at $x$ since it is directly connected to the output layer. So then,
\begin{align*}
	f(x) &= \Exp_{b_1}[|z_2'(x)|\rho_{b_2}(z_2(x))] + \Exp_{b_2}[|z_1'(x)|\rho_{b_1}(z_1(x)) \ind_{z_1 \text{ good at } x}] \\
	&=\Exp_{b_1}[|z_2'(x)|\rho_{b_2}(z_2(x)) \ind_{z_2 \text{ good at } x}] + \Exp_{b_2}[|z_1'(x)|\rho_{b_1}(z_1(x)) \ind_{z_1 \text{ good at } x}].
\end{align*}
Which, after taking expectations over $x \sim p$, is equivalent to the main result in Theorem~\ref{thm:mainlc}.

\subsection{Proof of Theorem \ref{thm:mainlc}}
\label{proof_of_main_lc}

The proof of this result will follow an argument that is closely inspired in the work of \citet{hanin2019complexitylinearregionsdeep}. 
Key to our proof is use of the generalized co-area formula, which we review here for completeness.

\subsubsection{Generalized Co-Area Formula}
\label{coarea}
For $u$ with support on $\Omega \subseteq \mathbb{R}^n$, where $u : \mathbb{R}^n \to \mathbb{R}^k$ and is Lipschitz, for an $L^1$ function $g$, we have that:
\[
\int_\Omega g(x) \| J_k u(x)\| dx = \int_{\mathbb{R}^k} \int_{u^{-1}(t)} g(x) d\vol_{n-k}(x) dt.
\]
Where:
\[
\|J_k u(x)\| = \text{det}(Ju(x)Ju(x)^T)^{\frac12}.
\]

\subsubsection{Lemma \ref{LC_Lemma_1}}

The following lemma bears  strong resemblance to Proposition 9 in the work of \citet{hanin2019complexitylinearregionsdeep}. 

\begin{lemma}
\label{LC_Lemma_1}
We have that almost surely:
\[
\mathcal B_\N = \bigcup_{z \text{ neuron}} \tilde S_z.
\]
Furthermore, this union is disjoint modulo a null set with respect to the Hausdorff $n_0 - 1$ measure.
\end{lemma}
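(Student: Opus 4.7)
The goal is to prove both inclusions of the set equality $\mathcal B_\N = \bigcup_z \tilde S_z$ and then handle the almost-disjointness of the union. I will treat the easier inclusion $\mathcal B_\N \subseteq \bigcup_z \tilde S_z$ first, the reverse direction second, and the almost-disjointness last.

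For $\mathcal B_\N \subseteq \bigcup_z \tilde S_z$: if $x \in \mathcal B_\N$ then $\nabla_x \N$ fails to be continuous at $x$, so $x$ cannot lie in the interior of an activation region (where $\N$ is affine). Therefore $x$ lies on the boundary of some activation region, i.e.\ $z(x) = b_z$ for some neuron $z$, giving $x \in S_z$. It remains to show $x \in \mathcal O$. Suppose not; then there is a layer $j$ for which $z'(x) - b_{z'} < 0$ strictly for every neuron $z'$ with $l(z') = j$. By continuity of the pre-activation maps these strict inequalities persist on an open neighborhood $U$ of $x$, so the output of layer $j$ vanishes throughout $U$. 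Propagating forward, the inputs to every subsequent layer are constant on $U$, and inductively $\N_\theta|_U$ is a constant function with $\nabla \N_\theta \equiv 0$. This contradicts $x \in \mathcal B_\N$, so $x \in \mathcal O$ and thus $x \in \tilde S_z$.

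For the reverse inclusion $\bigcup_z \tilde S_z \subseteq \mathcal B_\N$: let $x \in \tilde S_z$. The equation $z(x) = b_z$ together with $x \in \mathcal O$ implies that arbitrarily close to $x$ there are points at which $z$ is strictly active and points at which $z$ is strictly inactive, and each of these lies inside a genuine full-dimensional activation region of $\N_\theta$ (the condition $x \in \mathcal O$ prevents a downstream layer from shutting off and collapsing one of the sides). The two neighboring activation patterns differ only in the bit for $z$, so by the generic assumption stated above the definition of $\mathcal B_\N$ -- that activation patterns differing by one neuron yield distinct slopes -- the linearizations of $\N_\theta$ on the two sides disagree. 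Hence $\nabla_x \N_\theta$ is discontinuous at $x$ and $x \in \mathcal B_\N$. For almost-disjointness, on each activation region both $z$ and $z'$ are affine functions of $x$, so $S_z \cap S_{z'}$ restricted to that region is the intersection of two affine hyperplanes. Because $b_z, b_{z'}$ are drawn from continuous Gaussian densities, almost surely the two defining linear equations are not proportional, so the intersection has codimension at least two and vanishes in $\mathcal H^{n_0 - 1}$. A finite union over neuron pairs gives $\mathcal H^{n_0 - 1}\bigl(\bigcup_{z \neq z'} S_z \cap S_{z'}\bigr) = 0$.

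\textbf{Main obstacle.} The delicate point is the reverse inclusion, because $\tilde S_z$ is defined using only the weak condition $x \in \mathcal O$ (\emph{some} active neuron in every layer) rather than the stronger condition that $z$ itself lies on an all-active path to the output. Turning $\mathcal O$ into an actual jump of $\nabla \N_\theta$ at $x$ requires combining the generic distinct-slopes assumption on the parameters with the continuity of the pre-activations to rule out cancellations downstream of $z$. I would make this rigorous by restricting to a full-measure subset of parameters (following the style of \cite{hanin2019complexitylinearregionsdeep}, Proposition~9), on which the piecewise linear decomposition is generic, and then concluding the gradient jump from the two full-dimensional regions meeting along $S_z$ having distinct affine pieces.
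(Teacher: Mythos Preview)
Your proposal is largely correct and mirrors the paper's strategy, with one simplification and one point to tighten. For the forward inclusion $\mathcal B_\N \subseteq \bigcup_z \tilde S_z$, your argument is cleaner than the paper's: the paper proves the contrapositive via the path-wise representation of $\N$, carefully tracking which paths survive when $Z_x^0 \neq \emptyset$, whereas your observation that $x \notin \mathcal O$ forces an entire layer's post-activation to vanish on a neighborhood (hence $\N$ is locally constant) bypasses that machinery entirely.

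For the reverse inclusion, your claim that ``the two neighboring activation patterns differ only in the bit for $z$'' is not justified at every $x \in \tilde S_z$: it fails precisely where $x$ also lies on some other $S_{z'}$. The paper handles this by \emph{first} establishing (via the Gaussian biases, exactly as in your almost-disjointness paragraph) that $\vol_{n_0-1}(S_z \cap S_{z'}) = 0$ almost surely, and only \emph{then} running the one-bit argument on $\tilde S_z \setminus \bigcup_{z' \neq z} S_{z'}$, where your claim is valid by continuity. You have all the ingredients; just reorder them so that the almost-disjointness feeds into the reverse inclusion. Relatedly, your identified ``main obstacle'' --- the gap between $x \in \mathcal O$ and ``$z$ lies on an all-active path to the output'' --- is not actually the issue: once the two adjacent regions are known to differ only in the bit for $z$, the paper's standing assumption that activation regions differing by one neuron have distinct slopes gives the gradient jump directly, with no separate argument about downstream cancellation needed.
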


\begin{proof}
We will first check that $\mathcal{B}_\N \subseteq \bigcup_{z \text{ neuron}} \tilde S_z$ by checking if the following equation (\ref{lemma_1_subclaim_1}) holds:
\begin{equation}
\label{lemma_1_subclaim_1}
	\bigcap_{z \text{ neuron}} \tilde{S}_z ^c  \subseteq \mathcal{B}_\N ^c.
\end{equation}
Note that this suffices since, $\left(	\bigcup_{z \text{ neuron}} \tilde S_z \right)^c = \bigcap_{z \text{ neuron}} S_z ^c$. Fix $x \in \left(	\bigcup_{z \text{ neuron}} \tilde S_z \right)^c $. We will now write:
\begin{align*}
Z_x^+ &= \{z \text{ neurons } | z(x) - b_z > 0\}, \\	
Z_x^- &= \{z \text{ neurons } | z(x) - b_z < 0\}, \\	
Z_x^0 &= \{z \text{ neurons } | z(x) - b_z = 0\}.
\end{align*}
Notice that on the left hand side of (\ref{lemma_1_subclaim_1}) we have a finite intersection of open sets which is also an open set. As a consequence, the map $x \to Z_x^*$ must be locally constant, and there exists some $\eps-$neighborhood around $x$ so that $\|x-y\| \leq \epsilon$ implies that:
\begin{equation}
\label{lemma_1_eq2}
	 Z_x^- \subseteq Z_y^-, \ \ \ \ Z_x^+ \subseteq Z_y^+,  \ \ \ \ Z_y^+ \cup Z_y^0 \subseteq Z_x^+ \cup Z_x^0.
\end{equation}

Now to prove (\ref{lemma_1_subclaim_1}) we will leverage the path-wise representation of our neural network $\N$, following the notation in Appendix \ref{open_paths}.
\begin{equation*}
	\N (y) = \sum_{i = 1}^{n_0} y_i \sum_{\gamma \in \Gamma_i} \prod_{l=1}^L \mathds{1}_{\{z_\gamma^{(l)}(y) - b_z \geq 0\}} w_\gamma^{(l)} + \N(0).
\end{equation*}

Now we have that, since $x \in \left(	\bigcup_{z \text{ neuron}} \tilde S_z \right)^c$, for every path $\gamma$ that hits $z \in Z_x^0$:
\[
\exists j \in [L]: z_\gamma^{(j)}  \in Z_x^-.
\]

By extension and by (\ref{lemma_1_eq2}) we have that this holds in a neighborhood of $x$:
\[
\forall y \in \R^{n_0} : \|x-y\| \leq \eps \implies z_\gamma ^{(j)} \in Z_y^{-}.
\]

And for $y$ in a neighborhood of $x$:
\begin{equation*}
	\N (y) = \sum_{i = 1}^{n_0} y_i \sum_{\gamma \in \Gamma_i, \ \gamma \subseteq  Z_x^+} \ \prod_{l=1}^L \mathds{1}_{\{z_\gamma^{(l)}(y) - b_z \geq 0\}} w_\gamma^{(l)} + \N(0).
\end{equation*}

But then notice that we also have:
\[
z(x) - b_z > 0 \implies z(y) - b_z > 0.
\]

So then, for $y$ close to $x$, 
\[
\mathds{1}_{\{z_\gamma^{(l)}(x) - b_z \geq 0\}}  = \mathds{1}_{\{z_\gamma^{(l)}(y) - b_z \geq 0\}} ,
\]
and so we can write: 
\begin{equation*}
		\N (y) = \sum_{i = 1}^{n_0} y_i \sum_{\gamma \in \Gamma_i, \ \gamma \subseteq  Z_x^+} \ \prod_{l=1}^L \mathds{1}_{\{z_\gamma^{(l)}(x) - b_z \geq 0\}} w_\gamma^{(l)} + \N(0) .
\end{equation*}

From which it is clear that $\partial \N / \partial y_i$ is independent of $y$. Therefore, the function $\N$ is a continuous linear function in a neighborhood of $x$ and we have shown (\ref{lemma_1_subclaim_1}). We will now aim to show:
\begin{equation}
\label{lemma_1_subclaim_2}
	\bigcup_{z \text{ neuron}} \tilde S_z \subseteq \mathcal B_\N.
\end{equation}

First note that since our biases are admit a density with respect to the Lebesgue measure, we have that the following holds almost surely (a.s.) for $j \not = i$:
\begin{equation}
\label{lemma_1_disjoint}
	\vol_{n_0 - 1} (S_{z_i} \cap S_{z_j}) = 0 \quad \ \text{(a.s.)}.
\end{equation}

So then (\ref{lemma_1_subclaim_2}) would follow almost surely from showing that:
\begin{equation}
\label{lemma1_eq2_simple}
	\bigcup_{z \text{ neuron}} \left( \tilde S_z\ \backslash \bigcup_{z' \not = z} S_{z'} \right) \subseteq \mathcal{B}_\N .
\end{equation}

Now pick $x \in \left( \tilde S_z\ \backslash \bigcup_{z' \not = z} S_{z'} \right) $ for some fixed neuron $z$. Note that in a small enough $\eps-$neighborhood of $x$, we have that $y \to z(y)$ is linear in $y$. So then it follows that in this neighborhood of $x$, $\tilde S_z\ \backslash \bigcup_{z' \not = z} S_{z'}$ is a hyperplane of co-dimension 1. Pick $y_1$ so that $0 < \|x - y_1\| \leq \epsilon$ and $z(y_1) > b_z$ and $y_2$ so that $0 < \|x - y_2\| \leq \epsilon$ and $z(y_2) < b_z$. So then it follows that $x$ separate  s two different activation patterns, and by assumption we have that $x$ is a discontinuity point of $\nabla_x \N(x)$. This proves equation (\ref{lemma1_eq2_simple}). 

Notice we have already proved that this union is (a.s.) almost everywhere disjoint with respect to the Hausdorff $n_0 -1$ measure in equation (\ref{lemma_1_disjoint}). The claim follows. 
\end{proof}

\subsubsection{Lemma \ref{coarea_application}}
The following lemma is from \citet{hanin2019complexitylinearregionsdeep} and is provided here with minor tweaks for convenience. 

\begin{lemma}
\label{coarea_application}
Let $z_1, \ldots, z_k$ be distinct neurons in the same layer of $\N$. Then for any compact $K \subset \mathbb{R}^{n_0}$,
\[
\Exp_{\tilde{\theta}} [\vol_{n_0 - k} (\tilde S_{z_1, \ldots, z_k} \cap K)] = 
\int_K  \ \Exp_{\tilde{\theta}}[ \| J_{z_1, \ldots, z_k} (x)\| \ \cdot \rho_{b_{1}, \cdots , b_{k}}(z_1(x), \ldots z_k(x)) \mathds{1}_{\forall j: \ z_j \text{ good at } x} ]dx , 
\]    
\end{lemma}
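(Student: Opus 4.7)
The plan is to isolate the randomness in the $k$ biases $b = (b_1, \ldots, b_k)$ and apply the generalized co-area formula from Appendix~\ref{coarea} to the map $u \colon \R^{n_0} \to \R^k$ defined by $u(x) = (z_1(x), \ldots, z_k(x))$. The crucial structural observation is that $u$ depends only on weights and biases of layers preceding the common layer $\ell$ of the $z_j$'s, and that the event ``$z_j$ is good at $x$'' is determined by activations of neurons in layers after $\ell$ (cf.\ Appendix~\ref{open_paths}); consequently both $u$ and the indicator $G(x) := \mathds{1}_{\forall j\colon z_j \text{ good at } x}$ are independent of $b_1, \ldots, b_k$. Since $\tilde S_{z_1, \ldots, z_k}$ imposes $z_j(x) = b_j$ for every $j$, once we condition on the remaining parameters we obtain the clean decomposition $\tilde S_{z_1,\ldots,z_k} \cap K = u^{-1}(b) \cap G \cap K$, in which $u$ is a deterministic Lipschitz map.

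Conditioning on all parameters other than $b$, Tonelli applied to the expectation over the joint density $\rho_{b_1, \ldots, b_k}$ gives
\[
\Exp_{b}\!\left[\vol_{n_0-k}(\tilde S_{z_1,\ldots,z_k}\cap K)\right] = \int_{\R^k} \rho_{b_1,\ldots,b_k}(t) \int_{u^{-1}(t)\cap K} G(x)\, d\vol_{n_0-k}(x)\, dt.
\]
On the level set $u^{-1}(t)$ we may replace $\rho_{b_1,\ldots,b_k}(t)$ with $\rho_{b_1,\ldots,b_k}(u(x))$; the generalized co-area formula, invoked with $g(x) = \rho_{b_1,\ldots,b_k}(u(x))\, G(x)\,\mathds{1}_K(x)$, then rewrites the double integral as
\[
\int_K \|J_{z_1,\ldots,z_k}(x)\|\, \rho_{b_1,\ldots,b_k}(z_1(x),\ldots,z_k(x))\, G(x)\, dx.
\]
A second application of Tonelli pulls the remaining expectation over $\tilde\theta$ inside the integral over $K$, yielding exactly the identity claimed in the lemma.

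The main technical hurdle is verifying the hypotheses of the co-area formula in this setting: $u$ must be Lipschitz on $K$, which holds because compositions of affine maps and ReLU are piecewise affine with finite Lipschitz constant for any fixed weights, and the genericity assumption on $\theta$ (preceding Lemma~\ref{LC_Lemma_1}) guarantees that $u^{-1}(t)$ is almost surely a smooth codimension-$k$ submanifold so that the Hausdorff measures involved are well-defined. Nonnegativity of the integrands makes both invocations of Tonelli unconditional. The real conceptual step, which makes the proof go through, is recognizing that the ``good'' indicator $G$ is independent of the biases $b$ being integrated out, since this is what allows $\rho_b(\cdot)$ to pair cleanly with the level-set integral and, after co-area, to appear evaluated at the deterministic vector $u(x)$ rather than coupled with the random biases.
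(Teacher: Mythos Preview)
Your overall strategy matches the paper's: condition on the biases other than $b=(b_1,\ldots,b_k)$, apply the co-area formula to $u=(z_1,\ldots,z_k)$ restricted to $K$, then take the remaining expectation. There is, however, a genuine gap in your key structural claim. You assert that the indicator $G(x)=\mathds{1}_{\forall j\colon z_j \text{ good at }x}$ is independent of $b_1,\ldots,b_k$ because the event is ``determined by activations of neurons in layers after $\ell$.'' But whether a neuron in layer $\ell+1$ is open depends on its pre-activation, which is a linear function of the \emph{post}-activations $h_\ell(x)=\phi(\mathbf{z}_\ell(x)-\mathbf{b}_\ell)$ of layer $\ell$; in particular the $j$th coordinate $\phi(z_j(x)-b_j)$ depends on $b_j$. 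Hence the forward pass through layers $\ell+1,\ldots,L-1$, and with it the event ``$z_j$ is good at $x$,'' does depend on $b_1,\ldots,b_k$. Your decomposition $\tilde S_{z_1,\ldots,z_k}\cap K=u^{-1}(b)\cap G\cap K$ with a $b$-independent set $G$ is therefore not valid as stated, and you cannot directly treat $G$ as part of a fixed $L^1$ function $g$ in the co-area formula.

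The repair, which is what the paper's proof uses (implicitly), is to observe that on the level set $\{\mathbf{z}(x)=\mathbf{b}\}$ one has $z_j(x)-b_j=0$ for each $j\in[k]$, so the post-activation of each distinguished neuron is $\phi(0)=0$ there. Consequently, restricted to $u^{-1}(b)$, the forward pass through the subsequent layers depends only on $x$ and on the remaining biases $\hat b$, and the indicator agrees with the function of $x$ obtained by substituting $b_j\leftarrow z_j(x)$. This is what legitimizes taking $g(x)=\rho_{\mathbf b}(\mathbf{z}(x))\,\mathds{1}_{\forall j\colon z_j\text{ good at }x}$ as an honest $L^1$ function of $x$ alone (given $\hat b$) in the co-area formula. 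With that correction in place, the rest of your argument goes through along the same lines as the paper's.
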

where the expectation is taken with respect the noise terms $\delta_i$ in the biases.

\begin{proof}
Let $z_1, \ldots, z_k$ be some distinct neurons in $\N$. Let $K \subseteq \mathbb{R}^{n_0}$. Then notice that: 
\begin{align*}
\vol_{n_0 - k} \left(\tilde S_{z_1, \ldots, z_k} \cap K\right) &= \int_{\tilde S_{z_1, \ldots, z_k} \cap K} 1 \ d\vol_{n_0 - k} \\
&=	\int_{S_{z_1, \ldots, z_k} \cap K} \mathds{1}_{\mathcal O} \ d\vol_{n_0 - k} \\
&=	\int_{S_{z_1, \ldots, z_k} \cap K} \mathds{1}_{\forall j: \ z_j \text{ good at } x} \ d\vol_{n_0 - k} .
\end{align*}

First equality is clear, in the second equality we use that:
\[
\tilde S_{z_1, \ldots, z_k} \cap K = \left(\bigcap_{j=1}^k \tilde S_{z_j}\right) \cap K = \left(\bigcap_{j=1}^k S_{z_j} \cap \mathcal O \right) \cap K = \left(\bigcap_{j=1}^k S_{z_j} \cap K  \right) \cap \mathcal O .
\]

For the third equality, note that for all $x \in S_{z_1, \ldots, z_k} \cap K$, $x \in \mathcal O$ implies that there is a path of open neurons that connects from x to the output layer of the neural network. We also have that at $x$ all of the neurons $z_i$, $i \in [k]$, satisfy $z_i(x) - b_i = 0$. So then we just need that there is a path from all of these neurons to the output later. Now we may re-write: 
\[
\textbf z = \begin{pmatrix}
	z_1 \\
	\vdots \\
	z_k
\end{pmatrix} \ \ \ \ \ \ \textbf b = \begin{pmatrix}
	b_1 \\
	\vdots \\
	b_k
\end{pmatrix} \\ \implies S_{z_1, \cdots, z_k} = \{x \in \R^n \ : \ \mathbf{z}(x) - \textbf b = 0 \} .
\]
Then,
\[
\vol_{n_0 - k} \left(\tilde S_{z_1, \ldots, z_k} \cap K\right) = \int_{\{\textbf z (x) = \textbf b \} \cap K} \mathds{1}_{\forall j: \ z_j \text{ good at } x} \ d\vol_{n_0 - k} .
\]

For notational convenience let $\textbf{b} = \beta_i + \delta_i$ Now recall that we have the Gaussian density function $\rho_{\textbf b} : \mathbb{R}^k \to [0,1]$ over the biases. Then we will first take expectations over $\textbf{b}$, conditioned on the rest of the biases, which we will denote by $\hat{b}$:
\begin{align}
	&\Exp_{\textbf b \sim \rho_{\textbf b}} \left[ \vol_{n_0 - k} \left(\tilde S_{z_1, \ldots, z_k} \cap K\right) | \hat{b} \right] \\ 
    &=  \int_{\mathbb{R}^k} \rho_{\textbf b}(\textbf b)  \int_{\{\textbf z (x) = \textbf b \} \cap K} \mathds{1}_{\forall j: \ z_j \text{ good at } x} \ d\vol_{n_0 - k}(x)  \ d\textbf b \\ 
	&= \int_{\mathbb{R}^k}  \int_{\{\textbf z (x) = \textbf b \} \cap K} \rho_{\textbf b}(\textbf z(x)) \ \mathds{1}_{\forall j: \ z_j \text{ good at } x} \ d\vol_{n_0 - k}(x)  \ d\textbf b . 
\end{align}

To apply the co-area formula here, we take, borrowing notation from Appendix \ref{coarea}, that: 
\[
u^{-1}(\textbf b)  = \{z(x) = \textbf b\} \cap K .
\]
So then,
\[
u = z|_K,
\]
and
\[
g(x) = \rho_{\textbf b}(\textbf z(x)) \ \mathds{1}_{\forall j: \ z_j \text{ good at } x}.
\]

Notice $u$ is Lipschitz in $K$ and $g$ is dominated by an $L^1$ function $\rho_\textbf{b}$ so we have that we may apply the co-area formula and we get:
\[
\int_{\mathbb{R}^k}  \int_{\{\textbf z (x) = \textbf b \} \cap K} \rho_{\textbf b}(\textbf z(x)) \ \mathds{1}_{\forall j: \ z_j \text{ good at } x} \ d\vol_{n_0 - k}(x)  \ d\textbf b 
\]
\[
   = \int_K \| J_\mathbf{z}(x) \| \  \rho_{\textbf b}(\textbf z(x)) \ \mathds{1}_{\forall j: \ z_j \text{ good at } x} \ dx .
\]

We can now take expectations with respect to the remaining biases, since by the law of total expectation:
\[
\Exp_{\tilde \theta} \Exp_{\textbf b \sim \rho_{\textbf b}} \left[ \vol_{n_0 - k} \left(\tilde S_{z_1, \ldots, z_k} \cap K\right) | \hat{b} \right] = \Exp_{\tilde \theta} [\vol_{n_0 - k} \left(\tilde S_{z_1, \ldots, z_k} \cap K\right)].
\]
\end{proof}

\subsubsection{Proof of Theorem \ref{thm:mainlc}}

For the sake of readability, we restate the theorem here, 

\mainlc*

\begin{proof}
	Recall first the definition of the local complexity density function $f$:
	\begin{equation}
	f(x) = \lim_{\epsilon \to 0} \frac1{Z_\epsilon} \Exp_{\tilde{\theta}}\left[\vol_{n_0 - 1} ( \B_{\N_{\tilde{\theta}}} \cap B_\epsilon(x))\right].
\end{equation}

Now from here, we can compute, by using Lemma \ref{LC_Lemma_1} in the second equality and using Lemma \ref{coarea_application} fifth equality:
\begin{align*}
f(x) &= \lim_{\epsilon \to 0} \frac1{Z_\epsilon} \Exp_{\tilde \theta}\left[\vol_{n_0 - 1} ( B_{\N} \cap B_\epsilon(x))\right]\\
&= \lim_{\epsilon \to 0} \frac1{Z_\epsilon} \Exp_{\tilde \theta}\left[\vol_{n_0 - 1} \left(
\bigcup_{z_i} \left( \tilde S_{z_i}  \cap B_\epsilon(x) \right)
\right)\right] \\ 
&= \lim_{\epsilon \to 0} \frac1{Z_\epsilon} \Exp_{\tilde \theta}\left[\sum_{\text{neuron } z_i} \vol_{n_0 - 1} \left(
\tilde S_{z_i}  \cap B_\epsilon(x)
\right)\right] \\ 
&= \sum_{\text{neuron } z_i}  \lim_{\epsilon \to 0} \frac1{Z_\epsilon} \Exp_{\tilde \theta} \left[ \vol_{n_0 - 1} \left(
\tilde S_{z_i}  \cap B_\epsilon(x)
\right)\right] \\ 
&= \sum_{\text{neuron } z_i}  \lim_{\epsilon \to 0} \frac1{Z_\epsilon} \left(
\int_{B_\epsilon(x)} \Exp_{\tilde \theta}[\|\nabla z_i (x)\| \ \rho_{b_i}(z_i(x)) \ \mathds{1}_{z_i \text{ good at } x}] dx
\right) \\
&=\sum_{\text{neuron } z_i} \Exp_{\tilde \theta}[ \|\nabla z_i (x)\| \ \rho_{b_i}(z_i(x)) \ \mathds{1}_{z_i \text{ good at } x} ].
\end{align*}
In the last equality we use that the term $\Exp_{\tilde \theta}\left[\|\nabla z_i (x)\| \ \rho_{b_i}(z_i(x)) \ \mathds{1}_{z_i \text{ good at } x}\right]$ is continuous in $x$, which is a consequence of taking expectation over the biases. Taking expectation over $x\sim p$ completes the proof.
\end{proof}

\subsection{Proof of Corollary \ref{cor:lcbounds}}
\lcbounds*

\begin{proof}
For the upper bound, it is clear that we can write, assuming the conclusion of the prior theorem:
\begin{align*}
	\textbf{LC} &= \sum_{\text{neuron } z_i} \Exp_{\tilde \theta ; x \sim p} \left[ \|\nabla z_i(x)\|_2 \ \rho_{b_{z_i}}(z_i(x)) \ \mathds{1}_{z_i \text{ is good at } x} \right] \\
		&\leq \sum_{\text{neuron } z_i} \Exp_{\tilde \theta  ; x \sim p} \left[ \|\nabla z_i(x)\|_2 \ \rho_{b_{z_i}}(z_i(x)) \right] \\
		&\leq C_\text{bias }\sum_{\text{neuron } z_i} \Exp_{\tilde \theta ; x \sim p} \left[ \|\nabla z_i(x)\|_2 \right].
\end{align*}

We can take $C_\text{grad} = \max_{\ell \in [L]} \|W_\ell W_{\ell - 1} \cdots W_1\|_{op}$, which is clearly deterministic as it does not depend on the biases. To show the lower bound, we have the following bounds. Assuming the $C_\text{grad} \geq \|\nabla z_i (x)\|^2$ for all neurons $z_i$, $x \in \Omega$ and $C_\text{bias} \geq \rho_b $ and that on average $B$ neurons are not good at $x \sim p$:
\[
B = \Exp_{x \sim p} \Exp_{\tilde \theta} \left[ \sum_{z_i \text{ neuron}} \mathds{1}_{z_i \text{ not good at } x} \right] .
\]

It is clear then that we can bound the local complexity as:
\begin{align}
     \LC 
     &=\Exp_{x\sim p, \tilde \theta}\left[ \sum_{\text{neuron } z_i} \|\nabla z_i (x)\| \ \rho_{b_{z_i}}(z_i(x))   - \sum_{\text{neuron } z_i \text{ not good at } x} \|\nabla z_i (x)\| \ \rho_{b_{z_i}}(z_i(x)) \right]     \\
     &\geq  \sum_{\text{neuron } z_i} \Exp_{x\sim p, \tilde \theta }\left[\|\nabla z_i (x)\| \ \rho_{b_{z_i}}(z_i(x)) \right]  - B \cdot C_\text{grad} C_\text{bias}  \\
     \label{eq:last_step_cor}&\geq c_\text{bias}^\eta \sum_{\text{neuron } z_i} \Exp_{x\sim p, \tilde \theta }\left[\|\nabla z_i (x)\| \right] - \bar{\xi}_\eta - B \cdot C_\text{grad} C_\text{bias} .
\end{align}

Where for the last inequality we proceed as follows: Take neuron $z$ with $\rho_b$ being the density for a Gaussian with variance $\sigma$ centered at $\beta$. Then:
\begin{align*}
    \Exp_{x, \tilde \theta} \left[ \| \nabla z(x) \| \rho_b(z(x))  \right] &\geq \Exp_{x, \tilde \theta}[ \| \nabla z(x)  \| \ \rho_b(z(x))  \mathds{1}_{|z(x) - b| \leq \eta} ] \\
    &\geq \left[ \inf_{|r - b| \leq \eta}\{\rho_b(r)\} \right] \Exp _{x, \tilde \theta}[\| \nabla z(x) \| \ \mathds{1}_{|z(x) - b| \leq \eta} ]\\
    &\geq \left[ \inf_{|r - b| \leq \eta}\{\rho_b(r)\} \right] \left( \Exp_{x, \tilde \theta} [ \| \nabla z(x) \| ] - \Exp_{x, \tilde \theta} [\| \nabla z(x) \| \ \mathds{1}_{|z(x) - b| > \eta} ] \right).\\
\end{align*}
Notice we can bound the second term here as follows, using Markov's inequality:
\begin{align*}
 \Exp_{x, \tilde \theta } [\| \nabla z(x) \| \ \mathds{1}_{|z(x) - b| > \eta} ] 
 &\leq C_\text{grad} \Prob_{x, \tilde \theta }(|z(x) - b| \geq \eta) \\
 &\leq C_\text{grad} \frac{\Exp_{x, \tilde \theta }[|z(x) - b|^2]}{\eta^2} \\
 &\leq C_\text{grad} \frac{\Exp_{x, \tilde \theta }[z(x)^2] + \Exp_{x, \tilde \theta }[b^2]}{\eta^2}.
\end{align*}

Now since the data distribution has compact support, we have that $\Exp_{x, \tilde \theta }[z(x)^2]$ and $\Exp_{x, \tilde \theta }[b^2]$ are uniformly bounded. This gives us that,
\begin{equation*}
    \Exp_{x, \tilde \theta } \left[ \| \nabla z(x) \| \rho_b(z(x))  \right] \geq c^\eta_\text{bias} \left( \Exp_{x, \tilde \theta } [ \| \nabla z(x) \| ] -   \xi(\eta, \sigma, z) \right), 
\end{equation*}
where $\xi(\eta, \sigma, z) = \Theta(\frac{1}{\eta^2})$, $c^\eta_\text{bias} = \frac{1}{\sqrt{2\pi} \sigma} e^{\frac{-\eta^2}{2\sigma^2}}$. Now define 
\[
\bar{\xi}(\eta, \sigma, \N) = c^\eta_\text{bias} \sum_{z \text{ neuron}} \xi(\eta, \sigma, z) 
= \Theta\left(\frac{e^{\frac{-\eta^2}{2\sigma^2}}}{\eta^2} \right).
\]

Taking a sum over every neuron then gives that 
\begin{equation*}
    \Exp_{x\sim p, \delta} \left[ \sum_{\text{neuron } z_i} \left[\|\nabla z_i (x)\| \ \rho_{b_{z_i}}(z_i(x)) \right] \right] \geq c^\eta_\text{bias}  \Exp_{x\sim p, \delta} \left[ \sum_{\text{neuron } z_i} |\nabla z_i (x)\| \right] - \bar{\xi}(\eta, \sigma, \N),
\end{equation*}
where $\bar{\xi}(\eta, \sigma, \N) = \Theta(\frac{e^{\frac{-\eta^2}{2\sigma}}}{\eta^2})$. We abbreviate this as $\bar \xi_\eta$ in later results. Using this result in (\ref{eq:last_step_cor}) completes the proof.
\end{proof}

\subsection{Proof of Theorem \ref{thm:LRLC}}
\label{local_rank_proof}

We first recall from before that we define $\rank_\epsilon(\text{Jac}(\mathbf{z}_l))$ to be the number of singular values of $\text{Jac}(\mathbf{z}_l)$ bigger than $\epsilon$. We define the approximate local rank to be: 
\begin{equation*}
	\textbf{LR}_l^\epsilon = \Exp_{x\sim p} \left[ \rank_\epsilon( J_x\mathbf{z}_l(x) ) \right].
\end{equation*}

\LRLC*

\begin{proof}
	Notice that we have immediately, for an $n$ by $n$ matrix $A$ with $\rank_\epsilon = m$,
\begin{equation}
    \epsilon^2 m \leq \sum_{i=0}^n \sigma_i(A)^2\leq \|A\|_F^2 = \sum_{i=0}^n \sigma_i(A)^2 \leq m \sigma_{\max{}} (A)^2 + (n-m)\epsilon \leq m \sigma_{\max{}} (A)^2 + n\epsilon.
\end{equation}

Notice also that we have that, using that $\sqrt{a+b} \leq \sqrt{a}+\sqrt{b}$:
\begin{equation*}
\|J\mathbf{z}_l(x)\|_F = \sqrt{\sum_{\text{neuron } z_i \in \text{layer l}} \|\nabla z_i(x)\|_2^2} \leq \sum_{\text{neuron } z_i \in \text{layer l}} \|\nabla z_i(x)\|    .
\end{equation*}

So we may write that:
\[
\rank_\epsilon(J\mathbf{z}_l(x)) \leq \frac1{\epsilon^2} \sum_{\text{neuron } z_i \in \text{layer l}} \|\nabla z_i(x)\|_2. 
\]

Summing this over all layers $l \in [L]$ and taking expectation over the data distribution and $\tilde \theta$ gives us:
\[
\sum_{l = 1}^{L} \textbf{LR}_l^\eps \leq \frac1{\eps^2} \Exp_{\tilde \theta, x\sim p} \left[ \sum_{\text{neuron } z_i} \|\nabla z_i(x)\|_2 \right].
\]

Now recall that from Corollary \ref{cor:lcbounds} we have that:
\begin{equation*}
		\LC \geq c_\text{bias}^\eta \sum_{\text{neuron } z_i} \Exp_{\tilde \theta ; x\sim p} \left[ \|\nabla z_i (x)\|_2 \ \right] - \bar{\xi}_\eta - B \cdot C_\text{grad} \cdot C_\text{bias} . 
\end{equation*}

Which is equivalent to:
\begin{equation*}
		\Exp_{\tilde \theta, x\sim p} \left[ \sum_{\text{neuron } z_i} \left[\|\nabla z_i (x)\|_2 \right] \right]
		 \leq \frac1{c_\text{bias}^\eta}[\textbf{LC} + \bar{\xi}_\eta + B \cdot C_\text{grad} C_\text{bias}].
\end{equation*}

Which gives us, as desired:
\begin{equation*}
	\sum_{l = 1}^{L} \textbf{LR}_l^\eps \leq \frac{1}{\eps^2c_\text{bias}^\eta }[\textbf{LC} +\bar{\xi}_\eta +  B \cdot C_\text{grad} C_\text{bias}].
\end{equation*}

Recall that $C_\text{grad} = \max_{\ell \in [L]} \|W_\ell W_{\ell - 1} \cdots W_1\|_{op}$. Now, for the other inequality we need first the following two sub-claims:

\textbf{Claim 1:} 
\[
\Exp_{x\sim p; \tilde \theta} \|J_x\mathbf{z}_l(x)\|_F \leq \sqrt{C_\text{grad}^2 \textbf{LR}_l^\eps + \epsilon ^2n_l}.
\]
\begin{proof}
\begin{align*}
    \|J_x\mathbf{z}_l(x)\|_F^2  = \sum_{i=1}^{n_0} \sigma_i^2 &\leq \sigma_\text{max}^2 (J_x\mathbf{z}_l(x)) \rank_\epsilon(J_x\mathbf{z}_l(x)) + \epsilon^2 (n_l - \rank_\epsilon(J_x\mathbf{z}_l(x)))\\
    &\leq \sigma_\text{max}^2 (J_x\mathbf{z}_l(x)) \rank_\epsilon(J_x\mathbf{z}_l(x)) + \epsilon^2 n_l \\
    &\leq C_\text{grad}^2 \rank_\epsilon(J_x\mathbf{z}_l(x)) + \epsilon^2 n_l.
\end{align*}

Taking expectations with respect to $x \sim p$ and $\tilde \theta$ gives us $\Exp_{x\sim p; \tilde \theta} \|J_x\mathbf{z}_l(x)\|_F^2 \leq C_\text{grad} \textbf{LR}_l^\eps + \epsilon ^2n_l$. Now notice that Jenson's inequality gives us that $(\Exp_{x\sim p; \tilde \theta} \|J_x\mathbf{z}_l(x)\|_F)^2 \leq \Exp_{x\sim p; \tilde \theta} \|J_x\mathbf{z}_l(x)\|_F^2$, which completes the proof after taking square roots on both sides.
\end{proof}
\textbf{Claim 2:}
\[
\Exp_{x\sim p; \tilde \theta} \|J_x\mathbf{z}_l(x)\|_F \geq \frac1{n_0} \sum_{z_i \text{neuron in layer } l}\Exp_{x\sim p; \tilde \theta } \|\nabla_x z_i(x)\|_2.
\]

\begin{proof}
\begin{align*}
\|J_x\mathbf{z}_l(x)\|_F &\geq \|J_x\mathbf{z}_l(x)\|_2\\
&\geq \frac{1}{\sqrt{n_0}} \|J_x\mathbf{z}_l(x)\|_\infty \\
&= \frac{1}{\sqrt{n_0}} \sum_{i \in [n_l]} \|\nabla_x z_i(x)\|_1 \\
&= \frac{1}{\sqrt{n_0}} \sum_{i \in [n_l]} \frac{1}{\sqrt{n_0}} \|\nabla_x z_i(x)\|_2 \\
&= \frac1{n_0} \sum_{i \in [n_l]} \|\nabla_x z_i(x)\|_2.
\end{align*}

This completes the proof of the subclaim after taking expectations on both sides.
\end{proof}
Now we may prove our bound. Recall that the Local Complexity satisfies:
\[
\textbf{LC} \leq C_\text{bias} \sum_{\text{neuron } z_i} \Exp_{x\sim p; \tilde \theta } [\|\nabla z_i(x)\|] .
\]

So then we have that, using Claim (2), 
\begin{equation*}
\label{need_for_rep}
   \frac{1}{n_0 \ C_\text{bias}} \textbf{LC} \leq \sum_{l \in [L]} \Exp_{x\sim p; \tilde \theta}  \|J_x\mathbf{z}_l(x)\|_F .
\end{equation*}

And then by using Claim (1),
\begin{equation*}
   \frac{1 }{n_0 \ C_\text{bias}} \textbf{LC}  \leq  \sum_{l \in [L]}\sqrt{C_\text{grad}^2 \textbf{LR}_l^\eps + \epsilon ^2n_l}.
\end{equation*}
Which concludes this proof.
\end{proof}

\subsection{Proof of Corollary \ref{cor:LRcor}}
Recall that $\textbf{LR}_l = \Exp_{x\sim p} \left[ \rank( J_x\mathbf{z}_l(x) ) \right]$. Now we can restate the corollary:
\begin{restatable}{corollary}{LRCor}
\label{cor:LRcor}
In the same setting as Theorem~\ref{thm:LRLC}:
\begin{equation}
  \frac{1}{n_0 \ C_\text{bias}} \LC \leq  C_\text{grad} \sum_{l = 1}^L  \sqrt{\LR_l} . 
\end{equation}
\end{restatable}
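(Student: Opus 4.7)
The plan is to obtain this weaker statement as a direct limiting case of Theorem~\ref{thm:LRLC}. That theorem gives, for every $\epsilon > 0$,
\begin{equation*}
  \frac{1}{n_0 \ C_\text{bias}} \LC \leq  \sum_{l = 1}^L \sqrt{C_\text{grad}^2 \LR^\epsilon_l + \epsilon ^2 n_l},
\end{equation*}
and the left-hand side has no dependence on $\epsilon$. Therefore it suffices to pass to the limit $\epsilon \to 0^+$ on the right, which will eliminate the $\epsilon^2 n_l$ term and replace $\LR^\epsilon_l$ by $\LR_l$.

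More precisely, I would first recall that Definition~\ref{def:local-rank} states $\lim_{\epsilon \to 0} \LR_l^\epsilon = \LR_l$. This is the only fact about the approximate rank that I need: the exact rank equals the number of nonzero singular values, and $\rank_\epsilon$ counts singular values strictly above $\epsilon$, so $\rank_\epsilon(J_x \mathbf{z}_l(x)) \nearrow \rank(J_x \mathbf{z}_l(x))$ pointwise as $\epsilon \to 0^+$, and monotone convergence gives the convergence of the expectations. Since each summand $\sqrt{C_\text{grad}^2 \LR_l^\epsilon + \epsilon^2 n_l}$ is continuous in its arguments, taking $\epsilon \to 0^+$ inside the finite sum yields
\begin{equation*}
  \lim_{\epsilon \to 0^+} \sum_{l=1}^L \sqrt{C_\text{grad}^2 \LR_l^\epsilon + \epsilon^2 n_l} \;=\; \sum_{l=1}^L \sqrt{C_\text{grad}^2 \LR_l} \;=\; C_\text{grad} \sum_{l=1}^L \sqrt{\LR_l}.
\end{equation*}
Combined with the fixed left-hand side, this produces the stated bound.

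The main obstacle, which is minor, is ensuring that the exchange of limit and expectation defining $\LR_l^\epsilon$ is valid; this is immediate because $\rank_\epsilon(J_x \mathbf{z}_l(x)) \leq \min(n_0, n_l)$ uniformly in $x$ and $\tilde\theta$, so dominated convergence applies. I therefore expect the proof to be essentially a one-line deduction from Theorem~\ref{thm:LRLC} once the monotone behavior of $\rank_\epsilon$ is noted.
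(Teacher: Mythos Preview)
Your proposal is correct and matches the paper's proof essentially line for line: the paper also invokes the first inequality of Theorem~\ref{thm:LRLC}, notes that $\lim_{\epsilon\to 0}\LR_l^\epsilon=\LR_l$, and passes to the limit $\epsilon\to 0$ on the right-hand side to obtain $C_\text{grad}\sum_l\sqrt{\LR_l}$. Your additional remarks on monotone/dominated convergence to justify the interchange of limit and expectation are a welcome bit of extra care that the paper leaves implicit.
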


\begin{proof}
	The first inequality follows from the first inequality in Theorem \ref{thm:LRLC}, as well as by application of the fact that $\lim_{\eps \to 0} \textbf{LR}_l^\eps = \textbf{LR}_l$. Then notice that:
\[
   \frac{1}{n_0 \ C_\text{bias}} \textbf{LC}  \leq  \sum_{l = 1}^L \sqrt{C_\text{grad}^2 \textbf{LR}^\epsilon_l + \epsilon ^2n_l} \ \xrightarrow[\eps \to 0]{} \ C_\text{grad} \sum_{l = 1}^L \sqrt{\textbf{LR}_l}.
 \]
\end{proof}

\subsection{\rev{Proof of Proposition \ref{prop:tvadv}}}
\label{sec:tvadvproof}
\TVADV*
\begin{proof}
\rev{
Let $\tilde {TV} = \int_{B_\epsilon(\bar x)} |\N_\theta '(x)| dx$, and recall our original definition that,
\[
TV = \int_\Omega \rho(x)|\N_\theta '(x)| dx.
\]
Then we can clearly see that we have the following bound:
\[
c_\epsilon \tilde {TV} \leq TV .
\]
Now, via the contrapositive argument, suppose that we have $x \in B_\epsilon (\bar x)$ an adversarial example, then,
\[
\N_\theta(\bar x) - \N_\theta(x) > \gamma.
\]
From here it is clear that $\tilde{TV} > \gamma$. So in particular,
\[
TV > c_\epsilon \gamma.
\]
This completes the proof.}
\end{proof}

\subsection{Proof of Theorem \ref{thm:TVLC}}
\label{tv:proof}
\TVLC*

\begin{proof}
	Following the notational conventions in Appendix \ref{notationHR}, recall for any layer $1\leq l \leq L$, that our network is:
\begin{equation}
\label{eq:gl}
\N(x) = g_l \circ h_l(x).
\end{equation}

Where $g_l$ denotes the rest of the network after layer $l$. Expanding a layer yields:
\begin{equation}
\label{eq:network_expansion}
    \mathcal N(x) = g_l ( \phi( W_l h_{l-1}(x) - b_l )).
\end{equation}

Recall from Appendix \ref{notationHR} that we write, where $n_l$ denotes the number of neurons at layer $l$. 
\[
W_l h_{l-1}(x) = \begin{pmatrix}
    \vdots \\ z_l^i (x) \\ \vdots
\end{pmatrix}_{i\in[n_l]}.
\]

Computing gradients on (\ref{eq:network_expansion}), we can get that:
\begin{align*}
 \nabla_x \N(x) &= \frac{\partial g_l}{\partial h_l} \frac{\partial h_l}{\partial x}   \\
 &= 
 \begin{pmatrix}
     & | & \\
     \cdots & \nabla_x z_i^{(l)} (x) & \cdots \\
      & | & \\
 \end{pmatrix} _{i \in [n_l]} 
 \nabla_{h_l} g_l(h_l(x)) \ \circledcirc \ \begin{pmatrix}
     \vdots \\ \mathds{1}_{\{z_i^{(l)}  (x) \geq b_l^i\}} \\ \vdots
 \end{pmatrix}_{i \in [n_l]},
\end{align*}

Where we use $\circledcirc$ to denote the element-wise (Hadamard) product. Now let $C_l$ denote the minimal Lipschitz constant for $g_l$ in the image of the data support $h_l(\Omega)$. Recall also the fact that $\|Av\|_2 \leq \|A\|_F \|v\|_2$. Now we can write that:
\begin{align*}
   &\Exp_{x\sim p} \left[ \|\nabla_x\N(x)\| \right]\\
    &= \Exp_{x\sim p} \left[ 
   \overbrace{\|\begin{pmatrix}
     & | & \\
     \cdots & \nabla_x z_i^{(l)}  (x) & \cdots \\
      & | & \\
 \end{pmatrix} _{i \in [n_l]} }^{A}
 \overbrace{
 \nabla_{h_l} g_l(h_l(x)) \ \circledcirc \ \begin{pmatrix}
     \vdots \\ \mathds{1}_{\{z_i^{(l)}  (x) \geq b_l^i\}} \\ \vdots
 \end{pmatrix}_{i \in [n_l]}\| }^{v} \right]  \\
 &\leq \Exp_{x\sim p} \left[ \|\nabla_{h_l} g_l(h_l(x)) \ \circledcirc \ \begin{pmatrix}
     \vdots \\ \mathds{1}_{\{z_i^{(l)}  (x) \geq b_l^i\}} \\ \vdots
 \end{pmatrix}_{i \in [n_l]}\| \ \   (\sum_{i \in [n_l]} \|\nabla_x z_i^{(l)} (x)\|^2)^{\frac12} \right]\\
 &\leq \Exp_{x\sim p} \left[ \|\nabla_{h_l} g_l(h_l(x))\| (\sum_{i \in [n_l]} \|\nabla_x z_i^{(l)} (x)\|^2)^{\frac12} \right]\\
 &\leq C_l \Exp_{x\sim p} \left[ (\sum_{i \in [n_l]} \|\nabla_x z_i^{(l)} (x)\|^2)^{\frac12} \right] \\
 &\leq C_l \Exp_{x\sim p} \left[\sum_{i\in[n_l]} \|\nabla_x z_i^{(l)} \|\right].
\end{align*}

Where in the last inequality we use that $\sqrt{a+b} \leq \sqrt{a}+\sqrt{b}$. Now applying this inequality to each of the $L$ total layers and taking the sum, we get that:
\begin{align*}
  L \Exp_{x\sim p} \left[ \|\nabla_x\N(x)\| \right] &\leq \sum_{l = 1}^{L} C_l \Exp_{x\sim p} \left[ \sum_{i \in [n_l]} \|\nabla_x z_i^{(l)} (x)\| \right]  \\
  &\leq \max_{1\leq l \leq L} C_l \Exp_{x\sim p} \left[ \sum_{z_i \text{ neuron}} \|\nabla_x z_i(x)\| \right].
\end{align*}

Now take expectations over $\tilde \theta$, and we can combine this with the bound from before in Corollary~\ref{cor:lcbounds}, 
\[
		\Exp_{x\sim p, \tilde \theta} \left[ \sum_{\text{neuron } z_i} \left[\|\nabla z_i (x)\|_2 \right] \right]
		 \leq \frac1{c_\text{bias}}[\textbf{LC}+ \ +\bar{\xi}_\eta + B \cdot C_\text{grad} C_\text{bias}].
\]

Which then gives us:
\[
\frac{L}{\max_{1\leq l \leq L} C_l} \Exp_{x\sim p,  \tilde \theta} \left[ \|\nabla_x\N(x)\| \right] \leq \frac1{c_\text{bias}}[\textbf{LC}+\bar{\xi}_\eta + B \cdot C_\text{grad} C_\text{bias}], 
\]
as desired.
\end{proof}

\subsection{Proof of Proposition \ref{prop:repcost}}
\label{repcost_pf}
\label{app:prop:repcost}
The representation cost is defined as $R_\Omega(f) = \inf_{\theta : \ \N_\theta(\Omega) = f(\Omega)} \|\theta\|_F$. We re-state our main proposition:

\repcost*

\begin{proof}
We begin by computing that:
\[
J\mathbf{z}_l(x) = W_l D_l W_{l-1} D_{l-1} \cdots D_{1} W_1.
\]
With $W_l$ denoting the $l$-th layer weight matrix and $D_l$ being a diagonal matrix of $0$ and $1$ denoting the ReLU activation pattern at the $l-$th layer, when evaluated at $x$. Now using a result from \citet{soudry2018implicit} and \citet{jacot2023implicitbiaslargedepth} we can get that, for $p = \frac2L$ ($\|\cdot\|_p$ denotes the $L_p$ Schatten matrix norm):
\begin{align}
	\|J\mathbf{z}_l(x)\|_p^p &\leq \frac1L \left( \|W_l D_l\|_F^2 + \|W_{l-1}D_{l-1}\|_F^2 \cdots \|D_{1} W_1 \| \right) \\
	&\leq \frac1L \left( \|W_l\|_F^2 + \|W_{l-1}\|_F^2 \cdots \|W_{1}\| \right) \\
	\label{rep_cost1}&\leq \frac1L \|\theta\|_F^2.
\end{align}
Now we recall the equivalence of the $L_p$ Schatten matrix norm to the Frobenius norm. Notice this is the same as the equivalence suffices to do this for the equivalent vectors of singular values for the respective norms. For any $n\times n$ matrices $A, B$:
\[
\|A\|_F \leq C \|B\|_p.
\]
Where $C = n^{\frac12 - \frac1p} = n^{\frac{1-L}2}$. Then we also have that:
\[
n^{\frac{L-1}{2}}\|A\|_F \leq \|B\|_p \implies (n^{\frac{L-1}{2}}\|A\|_F)^{\frac2L} \leq \|B\|_p^p.
\]

Applying this to (\ref{rep_cost1}) gives us:
\[
(n_l^{\frac{L-1}{2}}\|J\mathbf{z}_l(x)\|_F)^{\frac2L} \leq \frac1L\|\theta\|_F^2 \implies \|J\mathbf{z}_l(x)\|_F \leq n_l^{\frac{1-L}{2}}(\frac1L)^{\frac L2} \|\theta\|_F^L.
\]

Since this holds for all parameterizations of the function learned by $\N$, we can get that:
\[
\|J\mathbf{z}_l(x)\|_F \leq n_l^{\frac{1-L}{2}}L^{\frac {-L}2} R(\N)^L.
\]

Apply this, summing over all layers to get:
\[
\sum_{l=1}^L \|J\mathbf{z}_l(x)\|_F \leq n_l^{\frac{1-L}{2}}L^{1-\frac {L}2} R(\N)^L.
\]

Now combine this bound with (\ref{need_for_rep}) and we get:
\[
\frac{n_0}{C_\text{bias}} \textbf{LC} \leq n_l^{\frac{1-L}{2}}L^{1-\frac {L}2} R(\N)^L.
\]
\end{proof}

\subsection{\rev{Proof of Proposition \ref{prop:kernelLC}}}
\label{sec:kernelLC}
\rev{
Canonical results on training neural networks in the lazy/kernel regime show that the weights do not move far from their initialization by the end of training \citep{chizat2019lazy}. 
The following proposition shows that if this holds, then the local complexity will also not change much from the beginning to the end of training. 
}
\begin{restatable}{proposition}{kernelLC}\label{prop:kernelLC}
\rev{
Consider a 2-layer MLP of the form $\N_\theta(x) = v^T\phi(Wx - \beta)$ with parameters $\theta_0 = (\beta^{(0)}, W^{(0)}, v^{(0)})$ at initialization and $\theta_t = (\beta^{(t)}, W^{(t)}, v^{(t)}) $ at time $t$. Suppose also that $\| \theta_0 - \theta_t \|_2 \leq \epsilon$. Denote by $\LC(\theta_t)$ the local complexity of parameters $\theta_t$. If $v \not = 0$, then, we have that there exists a constant $C$ independent of $\epsilon$ such that,
\[
    |\LC (\theta_t) - \LC (\theta_0) |\leq C \epsilon
\]}
\end{restatable}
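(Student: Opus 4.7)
The plan is to specialize Theorem~\ref{thm:mainlc} to the 2-layer setting, observe that the local complexity becomes a smooth deterministic function of $(W,\beta)$ on an appropriate open set, and then derive the bound from a termwise Lipschitz estimate. Write $z_i(x) = W_i x$ for the $i$-th hidden pre-bias pre-activation, where $W_i$ denotes the $i$-th row of $W$. With $L=2$ there are no intermediate neurons between the hidden layer and the output, so the ``$z_i$ is good at $x$'' indicator reduces to $\mathds{1}_{\{v_i \neq 0\}}$, independently of both $x$ and of the bias noise $\delta$. Since also $\|\nabla_x z_i(x)\|_2 = \|W_i\|_2$ and $\rho_{b_i}$ is simply an $\mathcal{N}(\beta_i,\sigma^2)$ density, Theorem~\ref{thm:mainlc} yields
\begin{equation*}
\LC(\theta) \;=\; \sum_{i \,:\, v_i \neq 0} \Exp_{x\sim p}\!\left[\,\|W_i\|_2 \cdot \tfrac{1}{\sqrt{2\pi}\,\sigma}\exp\!\left(-\tfrac{(W_i x - \beta_i)^2}{2\sigma^2}\right)\right],
\end{equation*}
a deterministic function of $(W,\beta)$ that is smooth away from the set $\{W_i=0\}$.

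The next step is to bound $|\LC(\theta_t) - \LC(\theta_0)|$ termwise using the mean value inequality. Setting $F_i(W,\beta) := \Exp_{x\sim p}\!\left[\,\|W\|_2\,\rho_{(\beta,\sigma)}(Wx)\,\right]$, I would show that $\nabla_{(W,\beta)} F_i$ is uniformly bounded on a compact neighborhood of $(W_i^{(0)},\beta_i^{(0)})$: the subgradient of $\|\cdot\|_2$ has norm at most $1$ away from the origin, while the partials $\partial_\beta \rho_{(\beta,\sigma)}(Wx)$ and $\nabla_W \rho_{(\beta,\sigma)}(Wx)$ consist of a factor $(Wx-\beta)/\sigma^2$ times a bounded Gaussian, multiplied by $1$ or $x$, all of which are uniformly bounded because $\mathrm{supp}(p)$ is compact. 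Taking $\epsilon$ small enough so that $\theta_t$ stays inside this neighborhood and so that the support set $\{i : v_i \neq 0\}$ is unchanged along the trajectory, the per-neuron Lipschitz constants combine across the finitely many hidden units into a single constant $C = C(\sigma, n_1, \mathrm{diam}(\mathrm{supp}(p)), \theta_0)$ that is independent of $\epsilon$, giving
\begin{equation*}
|\LC(\theta_t) - \LC(\theta_0)| \;\leq\; C\,\|\theta_t - \theta_0\|_2 \;\leq\; C\epsilon.
\end{equation*}

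The main obstacle is the discrete ``good'' indicator: if some coordinate of $v^{(0)}$ vanishes, a new summand can be switched on at time $t$, and such a newly-activated neuron contributes $\|W_i^{(t)}\|_2$ times a bounded Gaussian, which is only $O(1)$ rather than $O(\epsilon)$. The cleanest way to avoid this is to interpret the hypothesis $v \neq 0$ as every component being bounded away from zero (which is generic under standard random initialization) and to restrict $\epsilon < \min_i |v_i^{(0)}|$, so that the active-neuron set is stable along the trajectory. A secondary, minor subtlety is that $\|W\|_2$ is only subdifferentiable at $W=0$; under random initialization $W_i^{(0)}\neq 0$ almost surely, so shrinking $\epsilon$ keeps the trajectory away from the singular set and the subgradient bound applies throughout.
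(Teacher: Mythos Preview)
Your proposal is correct and begins exactly as the paper does: specialize Theorem~\ref{thm:mainlc} to the shallow network to obtain
\[
\LC(\theta)=\frac{1}{\sqrt{2\pi}\,\sigma}\sum_{k}\Exp_{x\sim p}\!\left[\|w_k\|\,e^{-\frac{(\langle w_k,x\rangle-\beta_k)^2}{2\sigma^2}}\right].
\]
From here the two arguments diverge. You bound $|\LC(\theta_t)-\LC(\theta_0)|$ via a termwise mean-value estimate, controlling $\nabla_{(W,\beta)}$ of each summand on a compact neighborhood of $\theta_0$; this correctly captures the variation coming from \emph{both} the factor $\|w_k\|$ and the Gaussian factor, and yields a constant depending on $\sigma$, $n_1$, $\theta_0$, and $\mathrm{diam}(\mathrm{supp}\,p)$. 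The paper instead bounds the Gaussian density uniformly by $\tfrac{1}{\sqrt{2\pi}\sigma}$ and jumps directly to $\tfrac{1}{\sqrt{2\pi}\sigma}\bigl|\sum_k\|w_k^{(0)}\|-\|w_k^{(t)}\|\bigr|\le \tfrac{n_1}{\sqrt{2\pi}\sigma}\,\epsilon$, which gives an explicit, $\theta_0$-independent constant but, as written, ignores the variation of the exponential factor in $(W,\beta)$; your Lipschitz treatment is the more rigorous of the two on this point. You also handle the ``good'' indicator and the hypothesis $v\neq 0$ more carefully than the paper, which simply drops the indicator (consistent with its standing genericity assumption that adjacent activation regions have distinct slopes).
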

\begin{proof}
\rev{
    First note that in the setting of this $2$-layer network we can apply Theorem \ref{thm:mainlc} and see that the local complexity is,
    \[
    \LC (\theta_0) = \frac{1}{\sqrt{2\pi} \sigma} \sum_{k=1}^{n_1} \Exp_{x\sim p} \left[ \|w_k^{(0)}\| e^{-\frac{(\langle w_k^{(0)}, x\rangle - \beta_k^{(0)})^2}{2\sigma^2}} \right].
    \]
    Where we denote that $w_k$ is the $k$-th row of $W$. Then notice that,
    \[
    |\frac{1}{\sqrt{2\pi} \sigma}e^{-\frac{(\langle w_k^{(0)}, x\rangle - \beta_k^{(0)})^2}{2\sigma^2}}| \leq \frac{1}{\sqrt{2\pi} \sigma}.
    \]
    So then, we can compute that 
    \begin{align*}
        &|\LC (\theta_0) - \LC (\theta_t)|\\
        &=|\frac{1}{\sqrt{2\pi} \sigma} \sum_{k=1}^{n_1} \Exp_{x\sim p} \left[ \|w_k^{(0)}\| e^{-\frac{(\langle w_k^{(0)}, x\rangle - \beta_k^{(0)})^2}{2\sigma^2}} \right] 
        - \frac{1}{\sqrt{2\pi} \sigma} \sum_{k=1}^{n_1} \Exp_{x\sim p} \left[ \|w_k^{(t)}\| e^{-\frac{(\langle w_k^{(t)}, x\rangle - \beta_k^{(t)})^2}{2\sigma^2}} \right]| \\
        & \leq \frac{1}{\sqrt{2\pi} \sigma} | \sum_{k=1}^{n_1} \|w_k^{(0)}\| - \|w_k^{(t)}\|| \\
        &\leq \frac{1}{\sqrt{2\pi} \sigma} \sum_{k=1}^{n_1} \epsilon \\
        & \leq \frac{n_1}{\sqrt{2\pi} \sigma} \epsilon.
    \end{align*}
    }
\end{proof}

\subsection{Proof of Proposition \ref{cor:globalmin}}
We first recall a theorem courtesy of \citet{pmlr-v201-timor23a}:
\begin{theorem}
[\citealp{pmlr-v201-timor23a}] 
\label{opt_thm} 
Let $\{(x_i, y_i)\}_{i=1}^n \subseteq \mathbb{R}^{n_0} \times \{-1,1\}$ be a binary classification dataset, and assume that there is $i \in [n]$ with $\|x_i\| \leq 1$. Assume that there is a fully-connected neural network $N$ of width $m \geq 2$ and depth $k \geq 2$, such that for all $i \in [n]$ we have $y_i N(x_i) \geq 1$, and the weight matrices $W_1, \ldots, W_k$ of $N$ satisfy $\|W_i\|_F \leq B$ for some $B > 0$. Let $N_\theta$ be a fully-connected neural network of width $m' \geq m$ and depth $k' > k$ parameterized by $\theta$. Let $\theta^* = [W_1^*, \ldots, W_{L}^*]$ be a global optimum of the above optimization problem (\ref{optimization_problem}). Namely, $\theta^*$ parameterizes a minimum-norm fully-connected network of width $n_l$ and depth $L$ that labels the dataset correctly with margin 1. Then, we have
\begin{equation}
\frac{1}{L} \sum_{i=1}^{L} \frac{\|W_i^*\|_\text{op}}{\|W_i^*\|_F} \geq \frac{1}{\sqrt{2}} \cdot \left(\frac{\sqrt{2}}{B}\right)^{\frac{k}{L}} \cdot \sqrt{\frac{L}{L + 1}}.
\end{equation}
Equivalently, we have the following upper bound on the harmonic mean of the ratios $\frac{\|W_i^*\|_F}{\|W_i^*\|_\text{op}}$:
\begin{equation}
\frac{L}{\sum_{i=1}^{L} \left(\frac{\|W_i^*\|_F}{\|W_i^*\|_\text{op}}\right)^{-1}} \leq \sqrt{2} \cdot \left(\frac{B}{\sqrt{2}}\right)^{\frac{k}{L}} \cdot \sqrt{\frac{L + 1}{L}}.
\end{equation}
\end{theorem}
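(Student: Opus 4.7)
The plan is to combine three ingredients: (i) an upper bound on $\|\theta^*\|_F^2 = \sum_l \|W_l^*\|_F^2$ by exhibiting a feasible depth-$L$ competitor; (ii) a lower bound on $\prod_l \|W_l^*\|_\text{op}$ from the margin constraint; and (iii) balancedness at the optimum together with AM--GM to turn (i) and (ii) into a lower bound on the arithmetic mean of the ratios $a_l := \|W_l^*\|_\text{op}/\|W_l^*\|_F$. Since the theorem is attributed to \cite{pmlr-v201-timor23a}, this sketches a plausible reconstruction of their argument.

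For (i), I would construct an explicit depth-$L$ network $\theta'$ of width $m'$ that labels the dataset with margin $\geq 1$, using the given depth-$k$ network $\N$ as the nonlinear core and padding with $L-k$ ``identity'' layers realized via the ReLU decomposition $x = \phi(x) - \phi(-x)$. This trick requires doubling width, which is where the condition $m' \geq m$ enters, and it is the source of the $\sqrt{(L+1)/L}$ overhead in the final bound. Tracking Frobenius norms across the $k$ non-trivial layers (each bounded by $B$) together with the padding layers yields an explicit $\|\theta'\|_F^2 \leq U(B,k,L)$; optimality of $\theta^*$ then gives $\|\theta^*\|_F \leq \|\theta'\|_F$, and the explicit form of $U$ is what produces the factor $(\sqrt{2}/B)^{k/L}$.

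For (ii), pick the data point $x_i$ guaranteed by the hypothesis to satisfy $\|x_i\| \leq 1$; since $\|\phi(z)\| \leq \|z\|$ layerwise, $|\N_{\theta^*}(x_i)| \leq \|x_i\| \prod_l \|W_l^*\|_\text{op}$, and combining with the margin condition $|\N_{\theta^*}(x_i)| \geq 1$ gives $\prod_l \|W_l^*\|_\text{op} \geq 1$. For (iii), positive homogeneity of the ReLU network provides a rescaling symmetry $W_l \mapsto \alpha W_l,\; W_{l+1} \mapsto \alpha^{-1} W_{l+1}$ that preserves both the function and feasibility, so first-order optimality of $\min \tfrac12\|\theta\|_F^2$ under the margin constraint forces $\|W_l^*\|_F^2$ to be independent of $l$; hence $\|W_l^*\|_F \leq \sqrt{U/L}$ for every $l$. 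Applying AM--GM to $a_l = \|W_l^*\|_\text{op}/\|W_l^*\|_F$,
\begin{equation*}
\frac{1}{L}\sum_{l=1}^L a_l \;\geq\; \left(\prod_{l=1}^L a_l\right)^{1/L} \;=\; \frac{\bigl(\prod_l \|W_l^*\|_\text{op}\bigr)^{1/L}}{\bigl(\prod_l \|W_l^*\|_F\bigr)^{1/L}} \;\geq\; \frac{1}{\sqrt{U/L}},
\end{equation*}
and plugging in the explicit $U$ from step (i) recovers the claimed bound $\tfrac{1}{\sqrt 2}(\sqrt 2/B)^{k/L}\sqrt{L/(L+1)}$; the equivalent harmonic-mean statement is then immediate by rearrangement.

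\textbf{Main obstacle.} The hardest piece is the depth-$L$ embedding with tight Frobenius accounting: ReLU admits no free identity, so every numerical constant, especially the $(\sqrt 2/B)^{k/L}$ and $\sqrt{(L+1)/L}$ factors, descends from exactly how cleverly the padding layers are built and how sharply their norms are tracked. Balancedness (via the positive-homogeneity rescaling symmetry) and the single application of AM--GM are then essentially bookkeeping once the embedding constant $U(B,k,L)$ is in hand.
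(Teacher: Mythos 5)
No proof of this theorem appears in the paper: the appendix states it verbatim as a result cited from \citet{pmlr-v201-timor23a} and uses it as a black box, so there is no in-paper argument against which to check your attempt. With that caveat, your reconstruction does track the correct high-level structure of the cited argument. The four ingredients you list --- a feasible depth-$L$ competitor built from the given depth-$k$ network to upper-bound $\|\theta^*\|_F^2$, the lower bound $\prod_l \|W_l^*\|_{\mathrm{op}} \geq 1$ from the margin constraint at the datapoint with $\|x_i\|\leq 1$, the balance property $\|W_1^*\|_F = \cdots = \|W_L^*\|_F$ at the global optimum (which this paper itself invokes as Lemma~15 of \citet{pmlr-v201-timor23a} inside the proof of Proposition~\ref{cor:globalmin}), and a single application of AM--GM --- are exactly the right ones, and the harmonic-mean restatement is a one-line rearrangement as you say.

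The gap you flag yourself is real, and it is where all the quantitative content lives. Two corrections to the sketch of the competitor. First, the $\phi(x)-\phi(-x)$ padding is applied to the \emph{scalar} output of the depth-$k$ network, so the auxiliary width needed is $2$, not $2m$; the hypothesis $m' \geq m \geq 2$ is what accommodates this, and ``doubling the width of the core'' is not the mechanism. Second, the constants emerge from explicitly rebalancing the competitor's layers via the same homogeneity rescaling and then reading off the minimal balanced Frobenius norm: the splitting layer at position $k$ contributes a factor $\sqrt{2}\,\|W_k\|_F$, the $L-k-1$ identity blocks and the final contraction each contribute $\sqrt{2}$, so the geometric mean of the competitor's layer Frobenius norms is at most $B^{k/L}\,2^{(L-k+1)/(2L)}$, and the balanced competitor has $\|\theta'\|_F^2 \leq 2L\,2^{1/L}(B^2/2)^{k/L}$. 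Feeding this $U$ into $\frac{1}{L}\sum_l a_l \geq \sqrt{L/U}$ and using $2^{1/L} \leq 1+1/L$ for $L\geq 2$ recovers the stated bound. As written, your proposal defers exactly this computation, so it is an outline rather than a complete proof; the outline, however, matches the cited source.
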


We will leverage the result in this theorem, particularly the bound on the harmonic mean of the ratios $\frac{\|W_i^*\|_F}{\|W_i^*\|_\text{op}}$ to prove the following proposition.

\globalmin*

\begin{proof}
	Following an intermediate result from Section \ref{repcost_pf} gives us that, for $K=n_l^{\frac{1-L}{2}}L^{1-\frac {L}2}$:
\[
\frac{n_0}{C_\text{bias} \ K}\textbf{LC} \leq (\sum_{i\in[k']} \|W_i^*\|_F)^L \implies ( \frac{n_0}{C_\text{bias} \ K}\textbf{LC} )^{\frac1L} \leq \sum_{i\in[k']} \|W_i^*\|_F.
\]

Then we can see that we also would have:
\[
\frac{1}{L\max_{l\in[L]}\|W_i^*\|_\text{op}}( \frac{n_0}{C_\text{bias} \ K}\textbf{LC} )^{\frac1L} \leq  \frac{1}{L} \sum_{i\in[L]} \frac{\|W_i^*\|_F}{\|W_i^*\|_\text{op}}.
\]

Now via the bound controlling the difference between the arithmetic mean and the harmonic mean \cite{MeyerBurnett1984SIfE}, we can get that:
\[
\frac1{L}\sum_{i\in[L]} \frac{\|W_i^*\|_F}{\|W_i^*\|_\text{op}} - \frac{L}{\sum_{i=1}^{L} \left(\frac{\|W_i^*\|_F}{\|W_i^*\|_\text{op}}\right)^{-1}} \leq (\sqrt{\alpha_\text{max}} - \sqrt{\alpha_\text{min}})^2.
\]
Where,
\[
\alpha_\text{max} = \max_{l\in[k']} \frac{\|W_l^*\|_F}{\|W_l^*\|_\text{op}},
\]
and
\[
\alpha_\text{min} = \min_{i\in[k']} \frac{\|W_i^*\|_F}{\|W_i^*\|_\text{op}}.
\]
But notice that, by \citet[Lemma 15]{pmlr-v201-timor23a}, we have that:
\[
\|W_l^*\|_F = \|W_i^*\|_F.
\]

So then,
\[
(\sqrt{\alpha_\text{max}} - \sqrt{\alpha_\text{min}})^2 = \|W_i^*\|_F \left(\sqrt{\frac{1}{\|W_l^*\|_\text{op}}} - \sqrt{\frac{1}{\|W_i^*\|_\text{op}}}\right)^2 = \gamma.
\]
and we get as a consequence of the bound controlling the Harmonic Mean from the prior theorem:
\[
\frac{1}{L\max_{l\in[L]}\|W_i^*\|_\text{op}}( \frac{n_0}{C_\text{bias} \ K}\textbf{LC} )^{\frac1L} - \gamma \leq \sqrt{2} \cdot \left(\frac{B}{\sqrt{2}}\right)^{\frac{k}{L}} \cdot \sqrt{\frac{L + 1}{L}}.
\]
\end{proof}

\subsection{Derivation of Corollary \ref{cor:LCasympt} (informal)}
\LCasympt*

Suppose that $\|\theta(t)\|_2 = \Theta((\log \frac1\lambda)^{1/L})$. Then recall by Proposition 6 we have that:
\begin{align*}
	\frac{n_0}{C_\text{bias}} \textbf{LC} &\leq n_l^{\frac{1-L}{2}}L^{1-\frac {L}2} R(\N)^L\\
	 &\leq n_l^{\frac{1-L}{2}}L^{1-\frac {L}2} \|\theta(t)\|_2^L\\ 
	 &= n_l^{\frac{1-L}{2}}L^{1-\frac {L}2} \Theta((\log \frac1\lambda)^{1/L})^L \\
	 &= n_l^{\frac{1-L}{2}}L^{1-\frac {L}2} \Theta(\log \frac1\lambda).
\end{align*}


\section{More Information on Empirical Studies}
\label{sec:more-empirical}

\subsection{On Estimation of the Local Complexity in Figure \ref{fig:lc_sc}}

\label{lc_sc_exp}
The network in question is trained to exactly represent a 2D grayscale image of the Stanford Bunny \cite{Turk1994ZipperedPM}, using the Mean Squared Error loss function and Adam optimizer with learning rate $1e-4$. The left hand figure is an exact visualizations of the linear regions in this network computed using the implementation of \citet{humayun2023splinecam}.

To understand how we compute the local complexity, let us first recall the key result from Theorem \ref{thm:mainlc}, from which we then use the trivial upper bound on the indicator function:
\begin{equation}
		\textbf{LC} = \sum_{\text{neuron } z_i} \Exp_{x, \tilde \theta  
  } \left[ \|\nabla z_i(x)\|_2 \ \rho_{b_i}(z_i(x)) \ \mathds{1}_{z_i \text{ is good at } x}\right] \leq \sum_{\text{neuron } z_i} \Exp_{x, \tilde \theta 
  } \left[ \|\nabla z_i(x)\|_2 \ \rho_{b_i}(z_i(x))\right] 
	\end{equation} 

Using this we can also get the estimate of the local complexity density function $f$:
\begin{equation}
\label{eq:lcdf_estimte}
    f(x) \leq  \sum_{\text{neuron } z_i} \Exp_{\tilde \theta} [ \|\nabla z_i(x)\|_2 \ \rho_{b_i}(z_i(x)) ]
\end{equation}

We can now empirically estimate the right hand side of (\ref{eq:lcdf_estimte}) by using computing finite samples of perturbations to the biases and taking the empirical mean. In particular we use $\sigma = 0.05$ for Figure~\ref{fig:lc_sc}. 
We provide here an ablation on the choice of $\sigma$ in Figure~\ref{fig:sigma_ablation}. \rev{In Figure~\ref{fig:noise_weights} we provide an example illustrating the effect of adding noise not only to the biases but also to the weights}. 
\label{sec:sigma_ablation} 

\begin{figure}
    \centering
    \includegraphics[width=1\linewidth]{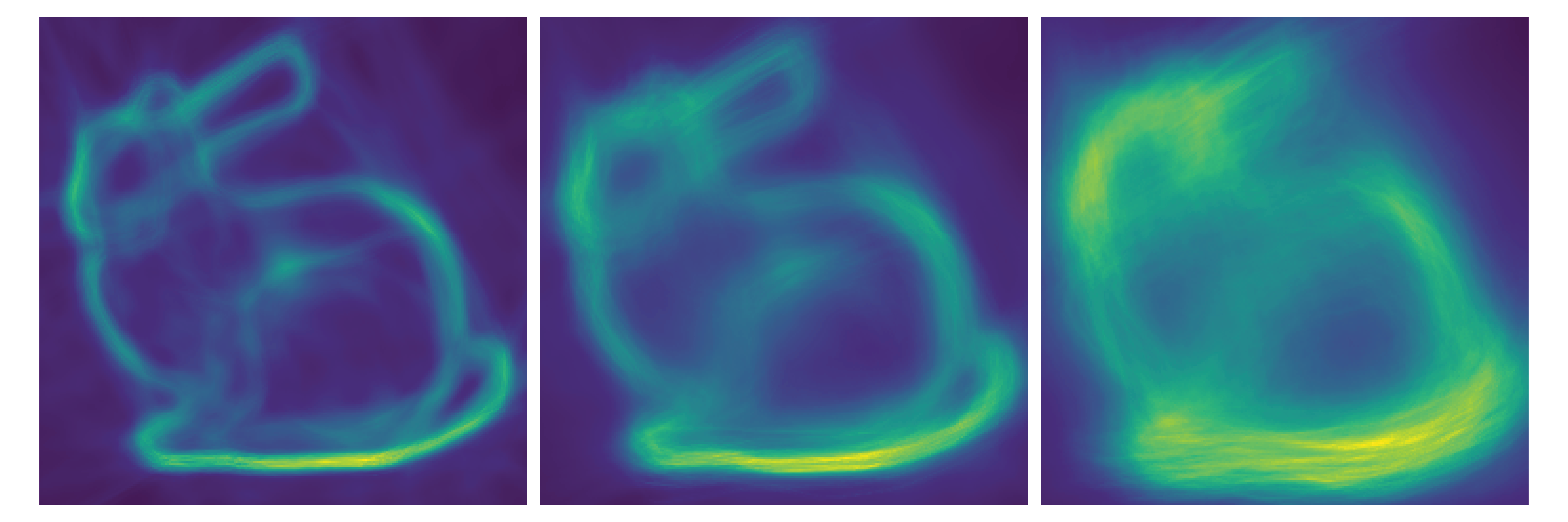}
    \caption{Here we show the effects of estimating the local complexity density function $f$ with varying levels of $\sigma$. We show $\sigma = 0.025$ (Left), $\sigma = 0.05$ (Middle), and $\sigma = 0.1$ (Right).}
    \label{fig:sigma_ablation}
\end{figure}
\begin{figure}
    \centering
    \includegraphics[width=.66\linewidth]{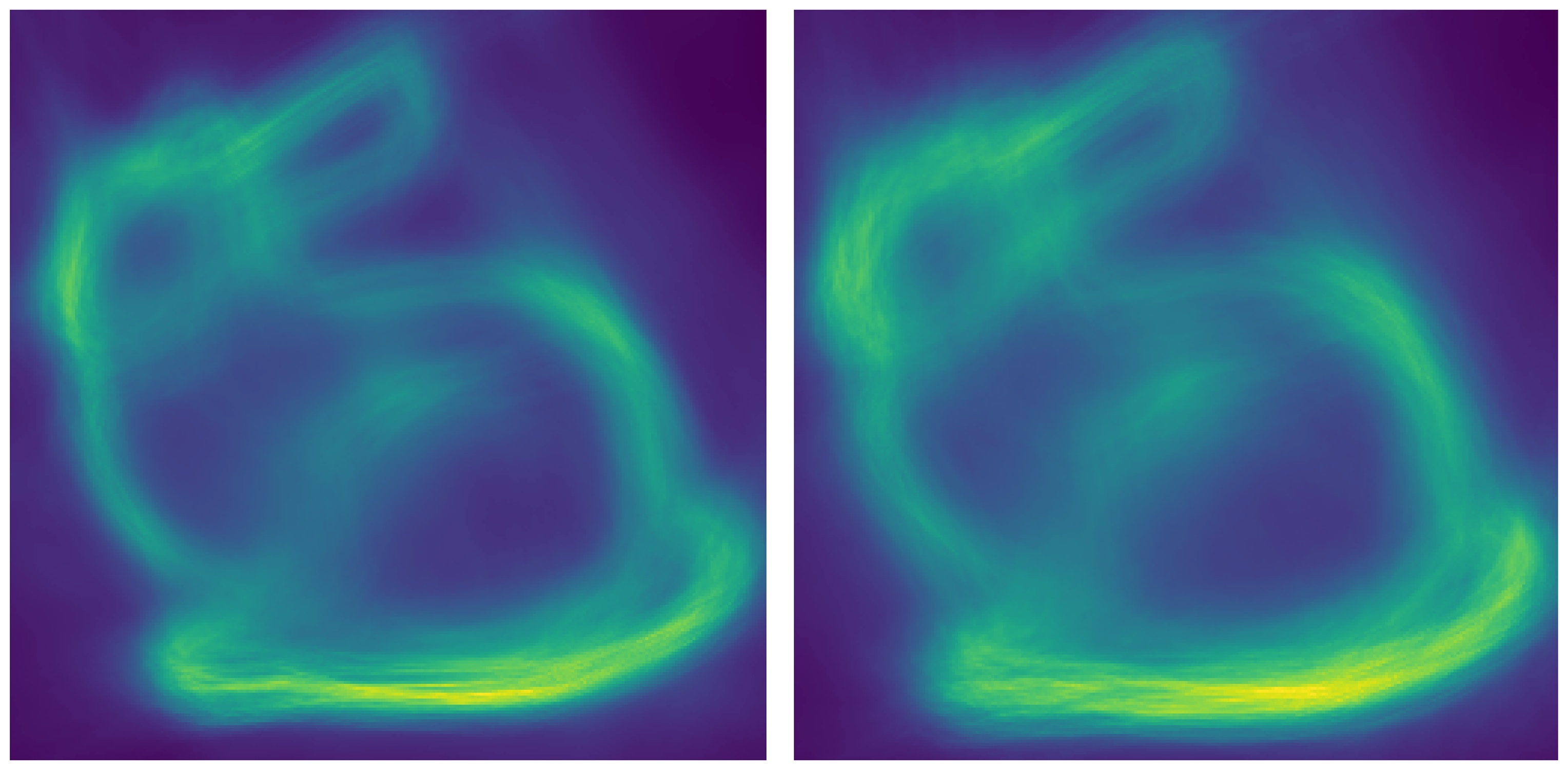}
    \caption{\rev{Here we plot the local complexity density function $f$ comparing the effects of adding noise to just the biases (Left) vs adding the same amount of noise to both the biases and the weights (Right). Here we used $\sigma = 0.05$. As we see, the effects are qualitatively similar in both cases.}}
    \label{fig:noise_weights}
\end{figure}

Several of our bounds, in particular those derived from Corollary \ref{cor:lcbounds}, rely on removing the term $\rho_b$ from the summand when computing the Local Complexity. We demonstrate in Figure \ref{fig:bunny_upper_bound}, the effect of estimating the local complexity density function as:
\begin{equation}
    \label{eq:f_ub}
    \hat{f}(x) = \sum_{z \text{neuron}} \Exp_{\tilde \theta} [\|\nabla(x)\|_2] .
\end{equation}

We show in Figure \ref{fig:bunny_upper_bound} what this density function looks like, and we can see that it still bears a strong qualitative resemblance to the original structure of linear regions from Figure \ref{fig:lc_sc}.

\begin{figure}
    \centering
    \includegraphics[width=0.35\linewidth]{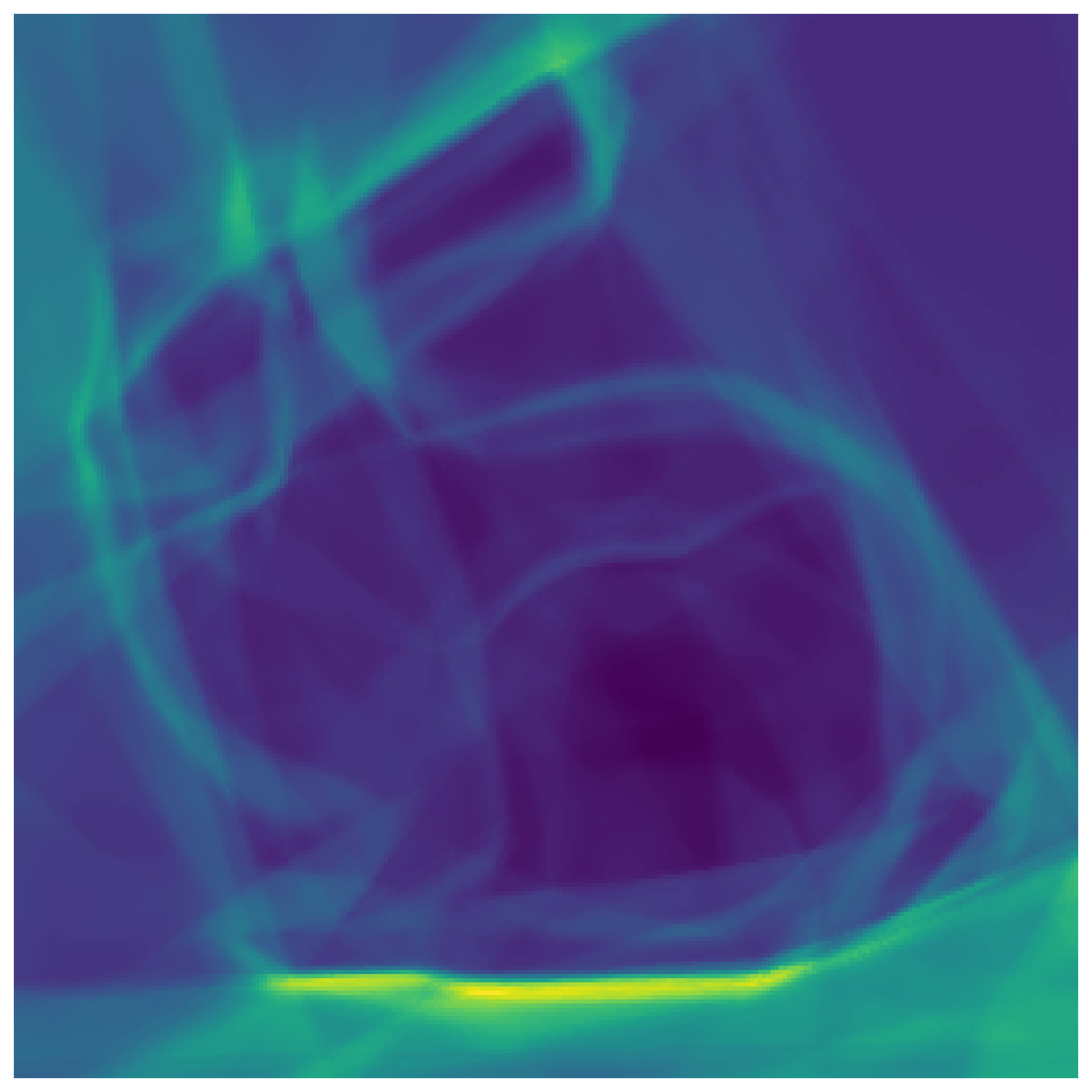}
    \caption{Effect of plotting $\hat{f}$ as defined in (\ref{eq:f_ub}). The setup is otherwise the same as that in Figure \ref{fig:lc_sc} and Figure \ref{fig:sigma_ablation}. }
    \label{fig:bunny_upper_bound}
\end{figure}

\FloatBarrier 

\subsection{Details on Figure \ref{fig:local_rank_gaussians}}
\label{local_rank_exp}

We create a synthetic dataset by sampling from an isotropic Gaussian $X$, and a correlated isotropic Gaussian $Y$. The cross-covariance matrix between $X$ and $Y$ is randomly generated. In these examples we use an input dimension of $100$ and an output dimension of $2$. We train with the Adam optimizer with learning rate $1e-4$. We show that this effect is the same across several training runs, each with a different cross-covariance matrix in Figure \ref{fig:local_rank_reproducible}.

\begin{figure}[ht]
    \centering
    \includegraphics[width=1\linewidth]{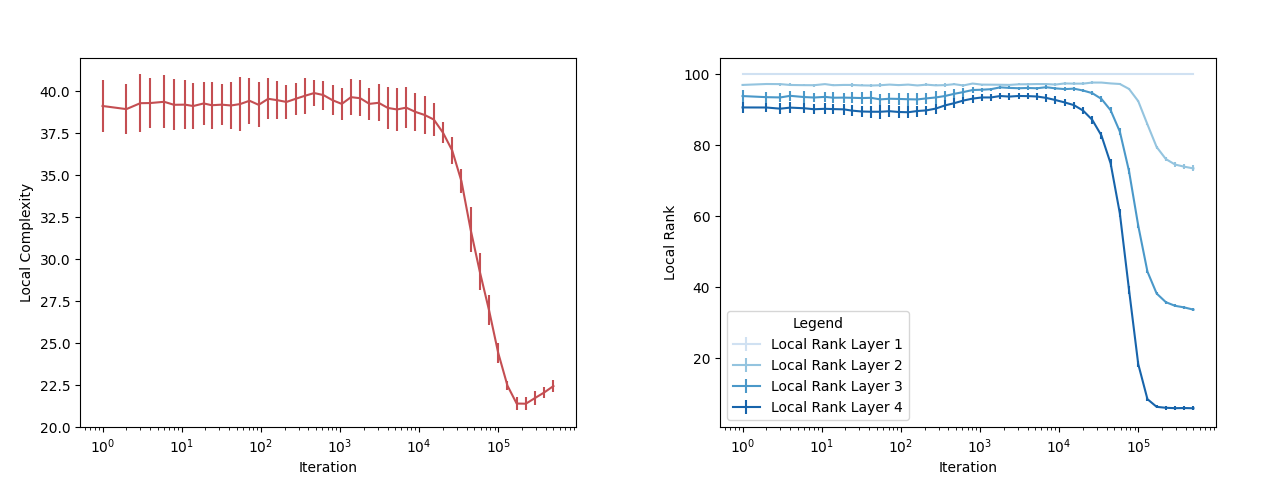}
    \caption{Here we run the same experiment as in Figure \ref{fig:local_rank_gaussians}, 6 times, each with a different cross-covariance matrix. We demonstrate that this effect is consistent by plotting standard deviation error bars on each collected data point. We find a Pearson's correlation coefficient of $0.852$ between the local complexity and the local rank at layer 2, and a Pearson's correlation coefficient of $0.957$ and $0.985$ at layers 3 and 4 respectfully.}
    \label{fig:local_rank_reproducible}
\end{figure}
\subsection{More Information on Estimation in Figure~\ref{fig:totalvar}}
\label{tv_exp}

Here we compute the local complexity as in Section \ref{lc_sc_exp} by computing the gradients at each neuron $\nabla z(x)$ and computing a mean over data points in the test dataset. Similarly, we estimate the total variation of our network by computing the mean of $\|\nabla \N(x)\|$ at points in the test dataset. 

We note that we see most clearly the relationship between the total variation and the local complexity when training with a high initialization scale. In Figure \ref{fig:TV} we initialize our weights with a standard deviation twice that of the typical He initialization scheme \citep{he2015delving}. This approach is commonly employed in the literature when investigating grokking and the terminal phase of training \citep{fan2024deep,lyu2024dichotomy}. 
Nevertheless, in Figure~\ref{fig:mnist1x} we perform an ablation study on the initialization scale. In both cases we can see an increase in the adversarial accuracy late in training corresponding to a drop in the local complexity, but the correlation between the local complexity and the total variation seems to break down at lower initialization scales. So, our theoretical works appear to not fully describe the dynamics in certain cases. 

\begin{figure}[ht]
    \centering
    \includegraphics[width=.9\linewidth]{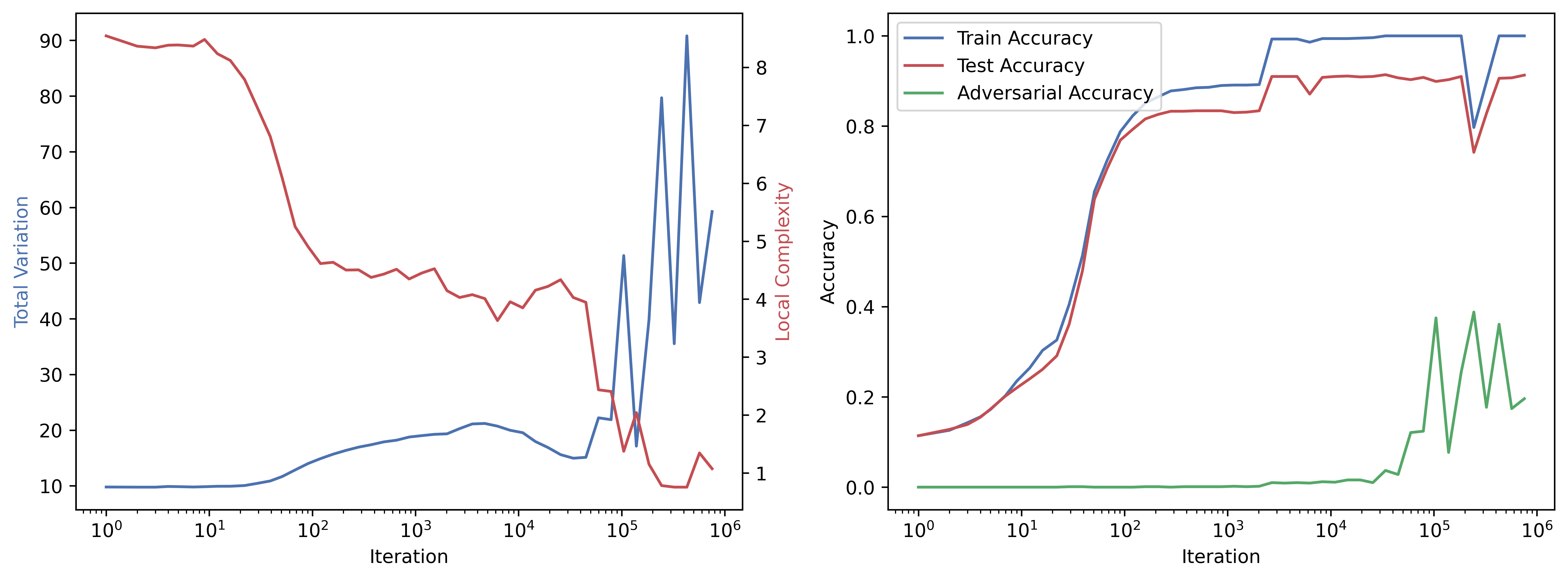}
    \includegraphics[width=.9\linewidth]{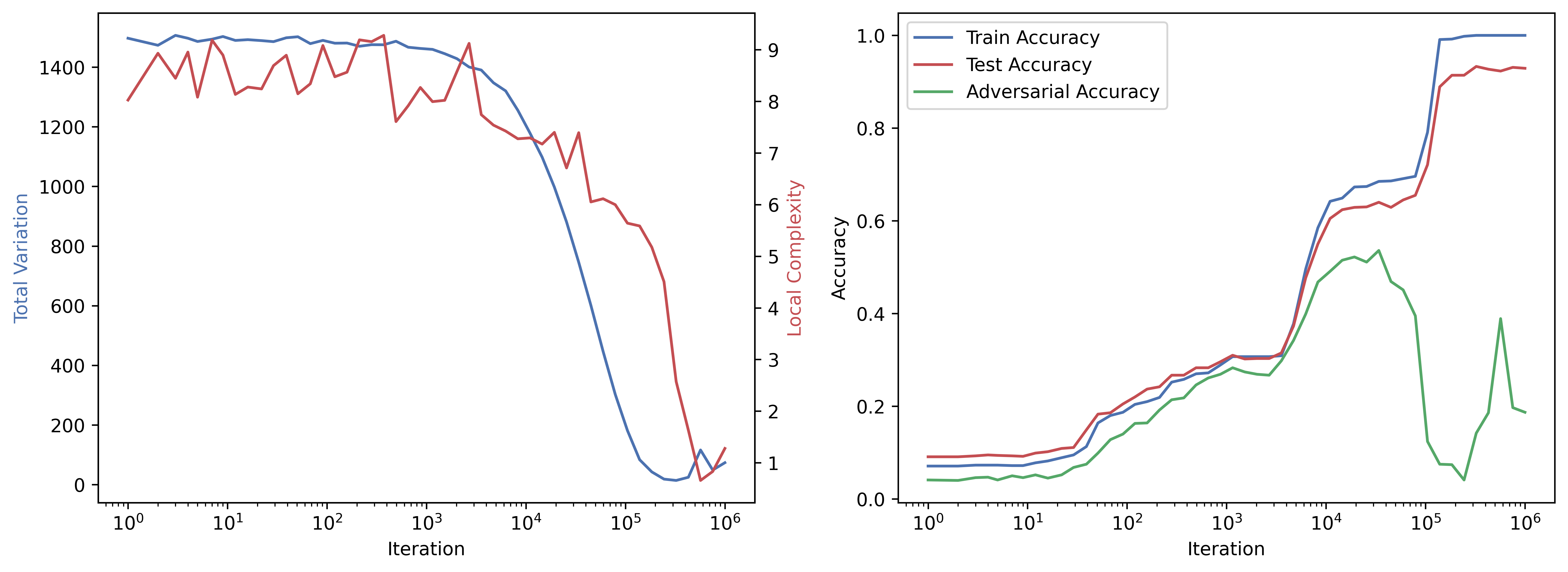}
    \caption{Here we demonstrate the results of training an MLP on a subset of the MNIST dataset with the standard He initialization (Top) and $3x$ the regular He initialization. This model has the same architecture as that in Figure \ref{fig:TV}.}
    \label{fig:mnist1x}
\end{figure}

\subsection{\rev{Remarks on Tightness of the Bounds}}
\label{sec:tightness}

\rev{We observe in Figure \ref{fig:mnist1x} that the total variation occasionally fails to decrease alongside the local complexity, which raises questions about the tightness of the bound in Theorem \ref{thm:TVLC}. While the exact relationship between total variation and local complexity is complex, these empirical findings do not necessarily invalidate the bound. The bound as stated depends on the term $\max_{1\leq l \leq L} C_l$, where $C_l$ represents the Lipschitz constant of $g_l$ (the rest of the network following the $l$-th layer). To empirically verify this bound's validity, we need to compute or estimate this term. We propose the following crude approach for estimating the Lipschitz constant term:}
\rev{
\[
\max_{1\leq l \leq L}C_l \leq \max_{1 \leq l \leq L} \|W_l W_{l+1} \cdots W_L\|_{op}.
\]}
\rev{The above inequality is tight if there is a linear regions for which all neurons are active. Using this as an estimate for $\max_{1\leq l \leq L}C_l$, we can then compute an empirical estimate for the term:}

\rev{\begin{equation} \label{eq:tvlbest}
    \TV \cdot \frac {L c_\text{bias}^\eta}{\max_{1\leq l \leq L} C_l} \approx \TV \cdot \frac {L c_\text{bias}^\eta}{\max_{1\leq l \leq L} \|W_l W_{l+1} \cdots W_L\|_{op}}.
\end{equation}}

\rev{We visualize the relationship between this quantity and the Local Complexity in Figure \ref{fig:boundplot}. When comparing Equation (\ref{eq:tvlbest}) with the local complexity, we find that the observed increases in total variation during late-stage training can be attributable to larger Lipschitz constants $C_l$, rather than an inherent looseness in the bound. This observation suggests further intriguing and unexpected behavior during the terminal phase of training that merits further investigation.}

\begin{figure}
    \centering
    \includegraphics[width=.95\linewidth]{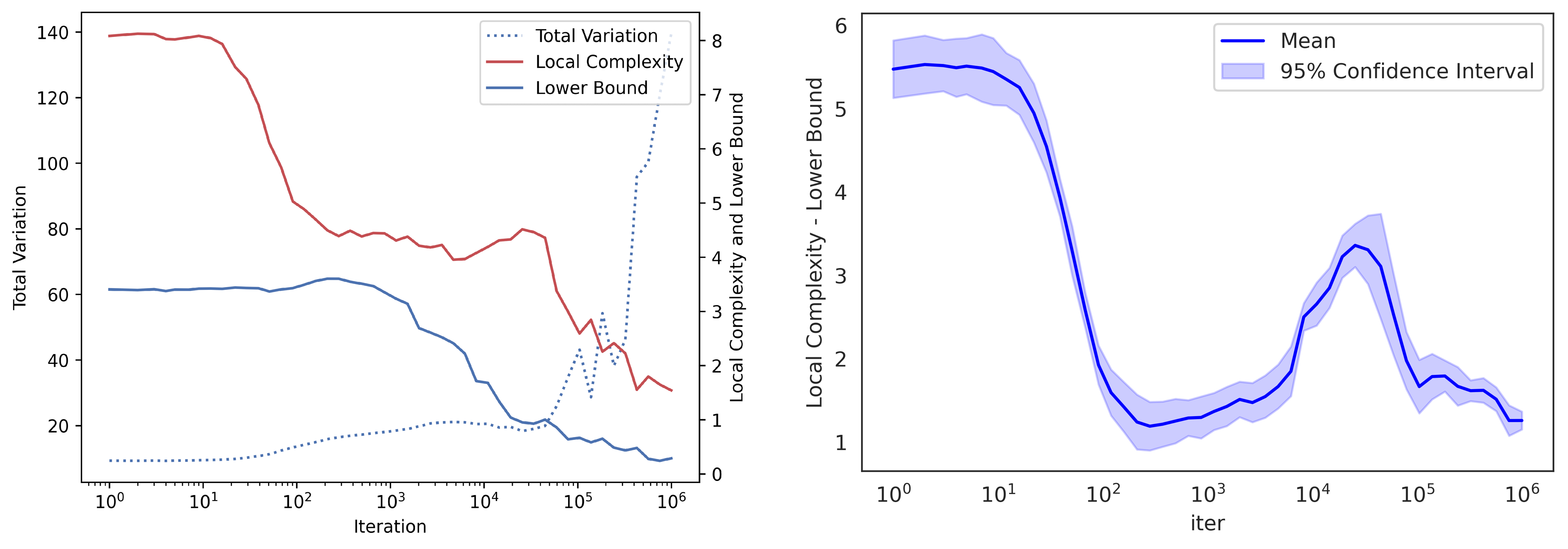}
    \caption{\rev{Here we train a network on MNIST with the He initialization scheme, 4 hidden layers each with dimension 200. We see a spike in the Total Variation late in trainig (dotted line). On the left, can also see that the lower bound as estimated via equation (\ref{eq:tvlbest}) still decreases along with the local complexity in the terminal phase of training. On the right,  we show that this behavior is reproducible by running the same experiment $8$ times and computing a confidence interval of the term $\LC - \frac{\TV L c^\eta_\text{bias}}{\max_{1\leq l\leq L} C_l}$ We use a $\eta = 1$, $\sigma = 1$, to estimate the constant terms, as we find that this choice of $\eta$ maximizes the tightness of the lower bound from \ref{thm:TVLC}.}}
    \label{fig:boundplot}
\end{figure}

\subsubsection*{\rev{On the Number of Neurons which are Not Good}}
\label{sec:B}
\rev{
Many of our lower bounds also involve a factor $B$, which we define to be the expected number of neurons which are not good when evaluated over the data distribution. In particular, we will measure,
\[B = \Exp_{x \sim p}  \left[ \sum_{\text{neuron }z_i} \mathds{1}_{z_i \text{ not good at } x} \right].\]
For a fully connected network, a neuron would be not good at $x$ only if there is a layer in the network for which every neuron is off when evaluated at $x$. This means that this quantity would be quite small for networks of reasonable width, as we can see in Figure \ref{fig:BadNeurons}.}
\begin{figure}
    \centering
    \includegraphics[width=0.5\linewidth]{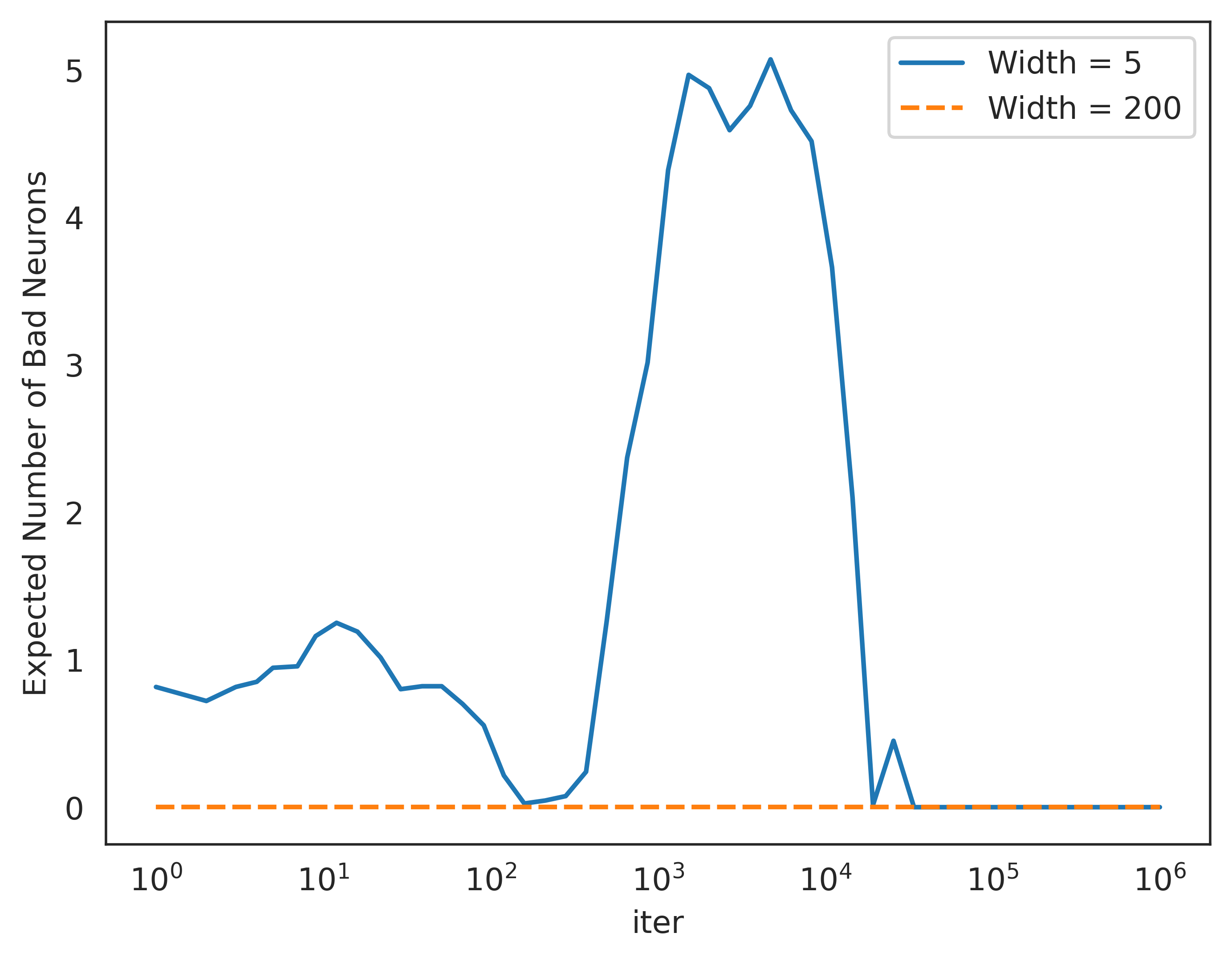}
    \caption{\rev{Here, we plot the empirically observed value of the number of neurons which are not good, $B$, for an MLP during training on MNIST. Both networks have depth 4, and we can see that for the wider network $B=0$ at all timesteps.}}
    \label{fig:BadNeurons}
\end{figure}


\section{Additional Figures}

\subsection{Clusters in Weight Vectors After Drop in Local Complexity}
Here, in Figure \ref{fig:umap}, we demonstrate the emergence of structure in UMAP plots of weight vectors late in training. This connects to the concept that, in the kernel regime, networks fit data points without substantially altering the structure of their linear regions. However, after transitioning to the rich training regime, we observe more intricate clustering in the weight vectors, providing evidence of feature learning.

\begin{figure}[h]
    \centering
    \includegraphics[width=\linewidth]{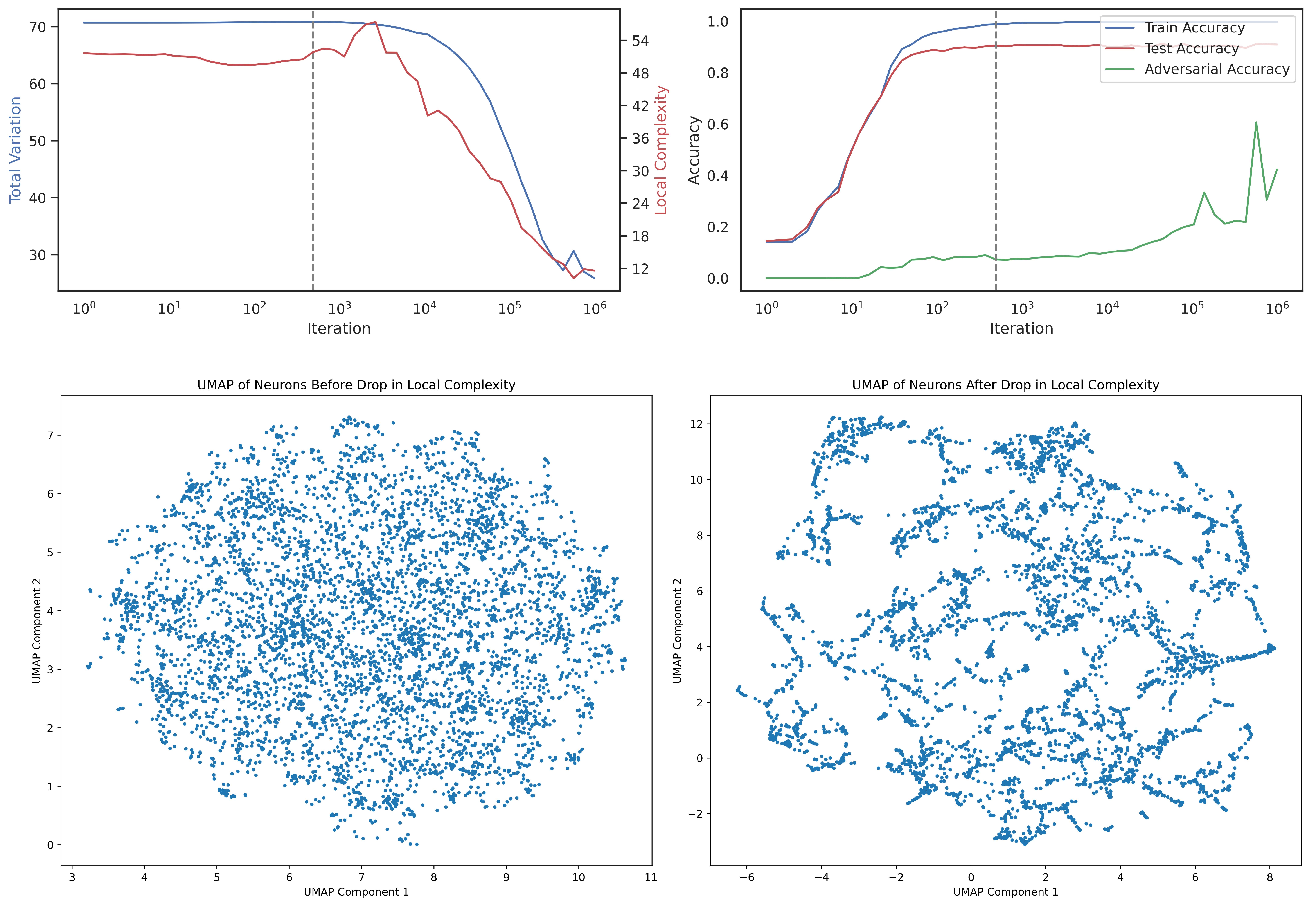}
    \caption{Here we demonstrate qualitative changes in the parameters before and after the drop in local complexity. We consider here a one hidden layer MLP trained on a subset of 1000 images of the MNIST dataset. The hidden layer has 5000 neurons. We plot a low-dimensional UMAP visualizations of the weight vectors associated to each neuron in the hidden layer at $494$ iterations (marked by dashed line) and at $1,000,000$ iterations.}
    \label{fig:umap}
\end{figure}

\pagebreak 

\subsection{Local Rank on MNIST}

In Figure \ref{fig:mnist_LR} we demonstrate the dynamics of Local Rank when training with a subset of 1000 images of the MNIST dataset. We note that the drop in the local rank approximately corresponds to the second drop in the local complexity, as well as the increase in the adversarial robustness of the network.

\begin{figure}[h]
    \centering
    \includegraphics[width=\linewidth]{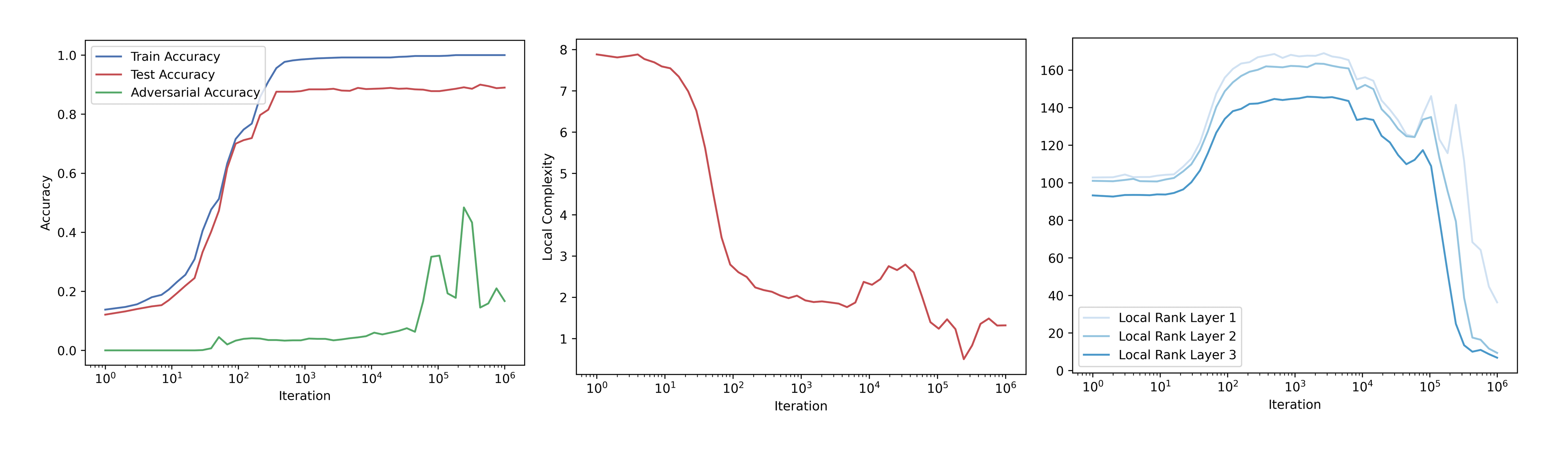}
    \caption{Local Rank Analysis on the MNIST Dataset. In this figure we train an MLP on MNIST with 3 hidden layers of 200 neurons each. We use a regular $1$x initialization scale.}
    \label{fig:mnist_LR}
\end{figure}

\subsection{Local Complexity and Weight Decay}

In Figure \ref{fig:mnist_weight_decay} we demonstrate a correlation between the weight decay parameter over several training runs, and the drop in local complexity late in training, which relates to our results in Proposition \ref{cor:LCasympt}. 

\begin{figure}[h]
    \centering
    \includegraphics[width=.9\linewidth]{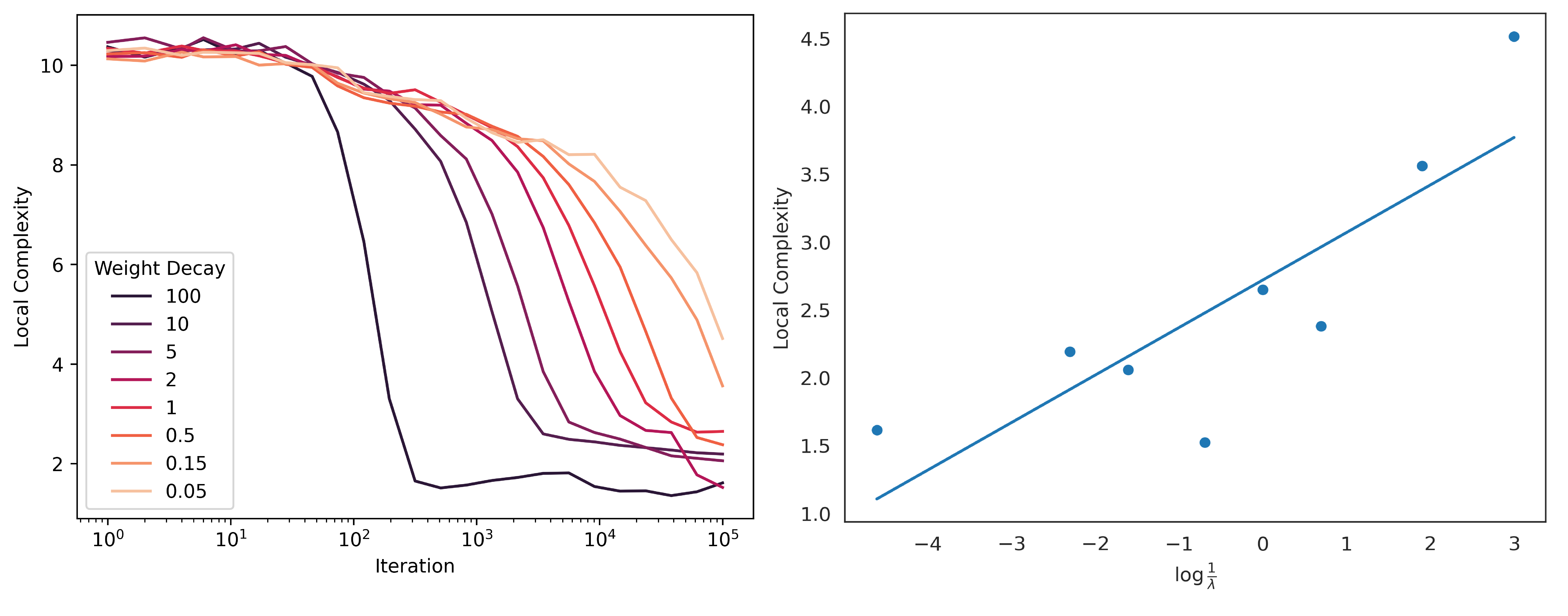}
    \caption{Here we demonstrate a correlation between the weight decay parameter and the drop in local complexity late in training. On the left, we note that this drop appears to come earlier for higher values of the weight decay parameter. \rev{On the right, we plot the bounding quantity from (\ref{eq:lc_weight_decay_estimate}) on the $x$-axis, and the local complexity at the end of training on the $y$-axis. We also plot a linear regression, and observe an $R^2 = 0.6972$.} In these experiments, we consider a shallow 2 layer MLP, with a hidden-layer dimension of 1000. This network is trained on a subset of MNIST with the Adam optimizer and learning rate $1e-4$.}
    \label{fig:mnist_weight_decay}
\end{figure}

\pagebreak 

\subsection{Local Complexity on Other Datasets}
We illustrate in Figure \ref{fig:lconcifarimagenette} that the primary empirical observation of this paper, that local complexity decreases and robustness to adversarial examples improves late in training, extends beyond our initial setting. Specifically, we show similar trends for the CIFAR-10 \cite{cifar} and Imagenette \cite{Howard_Imagenette_2019} datasets. However, these additional results are limited in scope due to the relatively poor performance of MLPs compared to CNNs on these tasks, coupled with the challenges posed by the higher dimensionality of the input, which makes estimating local complexity computationally demanding and memory-intensive.

\begin{figure}[h]
  \centering
  \begin{minipage}[b]{0.46\textwidth}
    \centering
    \includegraphics[width=\linewidth]{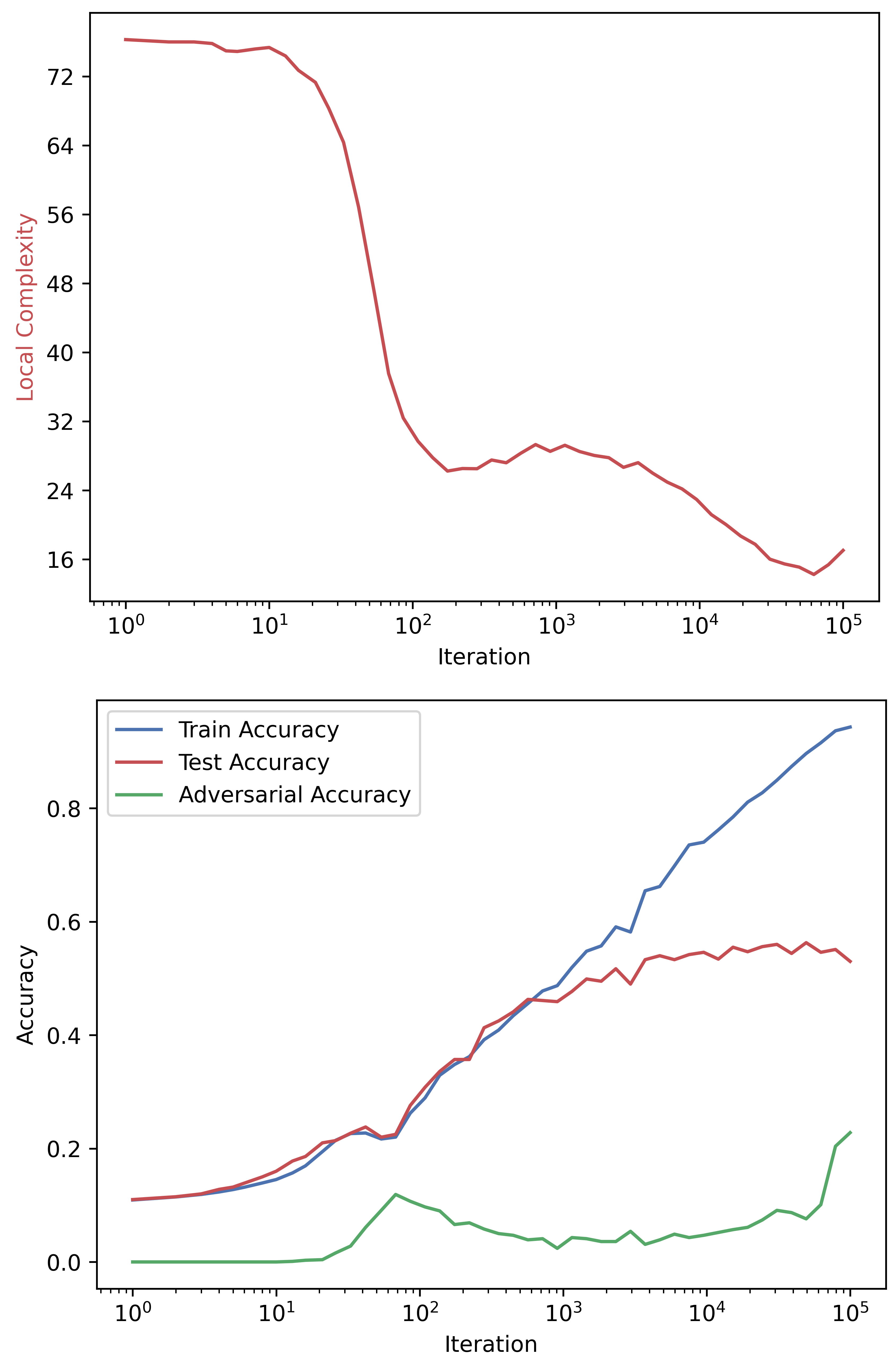}
    \\[0.5ex]
    {\small (a) Training Dynamics of Local Complexity compared to accuracy and adversarial performance on CIFAR-10.}
  \end{minipage}
  \hfill
  \begin{minipage}[b]{0.46\textwidth}
    \centering
    \includegraphics[width=\linewidth]{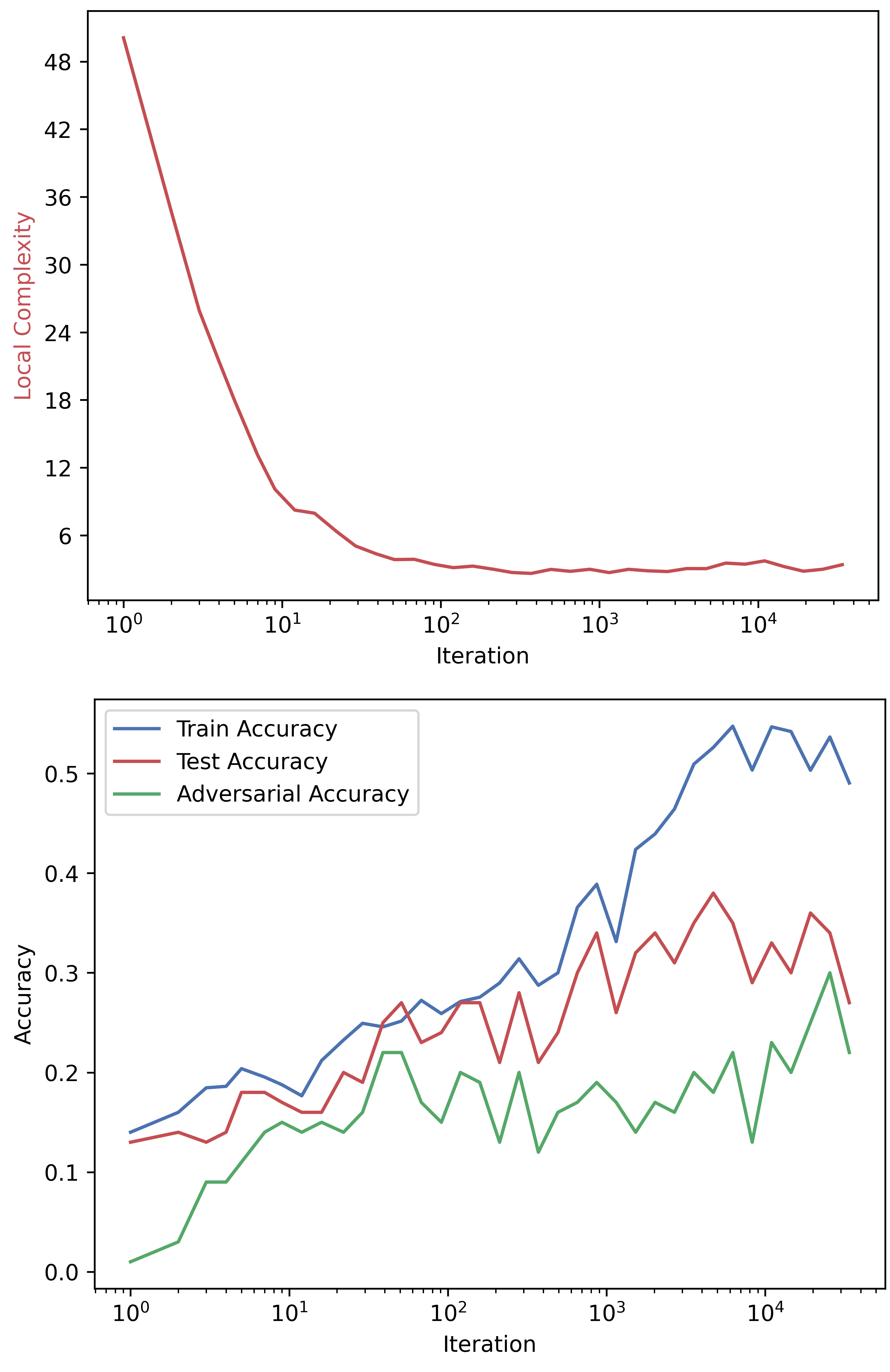}
    \\[0.5ex]
    {\small (b) Training Dynamics of Local Complexity compared to accuracy and adversarial performance on Imagenette.}
  \end{minipage}

  \caption{Here, we can see some evidence that a drop in the local complexity and an increase in the adversarial robustness persists in other datasets. }
  \label{fig:lconcifarimagenette}
\end{figure}


\end{document}